\def\eqref#1{equation~\ref{#1}}
\def\1{\bm{1}}
\def\eps{{\epsilon}}
\def\diag{{\textnormal{diag}}}
\def\sspan{{\textnormal{span}}}
\def\rt{{\textnormal{t}}}
\def\vzero{{\bm{0}}}
\def\vb{{\bm{b}}}
\def\vq{{\bm{q}}}
\def\vv{{\bm{v}}}
\def\vw{{\bm{w}}}
\def\mB{{\bm{B}}}
\def\mC{{\bm{C}}}
\def\mD{{\bm{D}}}
\def\mE{{\bm{E}}}
\def\mI{{\bm{I}}}
\def\mQ{{\bm{Q}}}
\def\mS{{\bm{S}}}
\def\mT{{\bm{T}}}
\def\mU{{\bm{U}}}
\def\mV{{\bm{V}}}
\def\mW{{\bm{W}}}
\DeclareMathAlphabet{\mathsfit}{\encodingdefault}{\sfdefault}{m}{sl}
\SetMathAlphabet{\mathsfit}{bold}{\encodingdefault}{\sfdefault}{bx}{n}
\def\gA{{\mathcal{A}}}
\def\gE{{\mathcal{E}}}
\def\gF{{\mathcal{F}}}
\def\gG{{\mathcal{G}}}
\def\gH{{\mathcal{H}}}
\def\gL{{\mathcal{L}}}
\def\gR{{\mathcal{R}}}
\def\gV{{\mathcal{V}}}
\def\gX{{\mathcal{X}}}
\def\gY{{\mathcal{Y}}}
\def\sI{{\mathbb{I}}}
\def\sR{{\mathbb{R}}}
\newcommand{\E}{\mathbb{E}}
\newcommand{\R}{\mathbb{R}}
\newcommand{\Var}{\mathrm{Var}}
\newcommand{\Cov}{\mathrm{Cov}}
\DeclareMathOperator{\Tr}{Tr}
\newcommand{\paren}[1] {{\left ( #1 \right )}}
\newcommand{\brac}[1] {{\left [ #1 \right ]}}
\newcommand{\sparen}[2]{{\left ( #1 \; \middle \vert \; #2 \right )}}
\newcommand{\sbrac}[2]{{\left [ #1 \; \middle \vert \; #2 \right ]}}
\newcommand{\oset}[1] {{\left \{ #1 \right \}}}
\newcommand{\sset}[2]{{\left \{ #1 \; \middle \vert \; #2 \right \}}}
\newcommand{\dotp}[1]{{\left \langle #1 \right \rangle}}
\newcommand{\norm}[1] {{\left \| #1 \right \|}}
\def \ie {\textit{i.e.}}
\def \iid {\textit{i.i.d.}}
\def \wrt {\textit{w.r.t.}}
\def \err {\textnormal{err}}
\def \px {{P_\gX}}
\def \pa {{P_\gA}}
\def \lxp {{L^2(P_\gX)}}
\def \lap {{L^2(P_\gA)}}
\def \ka {{k_{A}^+}}
\def \kx {{k_{X}^+}}
\def \bPsi {\bar{\Psi}}
\def \tPsi {\tilde{\Psi}} 
\def \bPhi {\bar{\Phi}}
\def \tPhi {\tilde{\Phi}}
\def \tphi {\tilde{\phi}}
\def \tp {T_{P^+}}
\def \tpstar {T_{P^+}^{*}}
\def \tkx {T_{\kx}}
\def \tka {T_{\ka}}
\def \fp {\gF(P^+)}
\def \fep {\gF_{\epsilon}(P^+)}
\def \kp {k_{\Lambda}}
\def \dx {d_{\gX}}
\def \rt {\textnormal{RT}}
\def \tg {\textnormal{TG}}
\def \hpp {\gH_{P^+}}
\pgfplotsset{compat=1.18}  
\definecolor{todo}{RGB}{250,95,95}
\definecolor{edit}{RGB}{228,170,42}
\definecolor{Gray}{gray}{0.85}
\newcolumntype{a}{>{\columncolor{Gray}}l}
\newcolumntype{d}{>{\columncolor{Gray}}c}
\newtheorem{theorem}{Theorem}
\newtheorem{lemma}[theorem]{Lemma} 
\newtheorem{proposition}[theorem]{Proposition} 
\newtheorem{corollary}[theorem]{Corollary}
\newtheorem{definition}[theorem]{Definition}
\newtheorem*{remark}{Remark}
\crefname{equation}{eqn.}{eqns.}
\title{\huge Contextures: Representations from Contexts}
\author{ Runtian Zhai$^{1}$, Kai Yang$^{2}$, Che-Ping Tsai$^{1}$, Burak Var{\i}c{\i}$^{1}$, \\ 
Zico Kolter$^{1}$, Pradeep Ravikumar$^{1}$ \\
\\
$^{1}$Carnegie Mellon University \quad $^{2}$Peking University \\ 
\texttt{rzhai@alumni.cmu.edu, pradeepr@cs.cmu.edu} 
}
\begin{document}

\maketitle

\begin{abstract}

Despite the empirical success of foundation models, we do not have a systematic characterization of the \emph{representations} that these models learn. 
In this paper, we establish the contexture theory.
It shows that a large class of representation learning methods can be characterized as learning from the association between the input and a \emph{context variable}. Specifically, we show that many popular methods aim to approximate the top-$d$ singular functions of the expectation operator induced by the context, in which case we say that the representation \textit{learns the contexture}.
We demonstrate the generality of the contexture theory by proving that representation learning within various learning paradigms---supervised, self-supervised, and manifold learning---can all be studied from such a perspective.
We also prove that the representations that learn the contexture are optimal on those tasks that are compatible with the context.
One important implication of the contexture theory is that once the model is large enough to approximate the top singular functions, further scaling up the model size yields diminishing returns.
Therefore, scaling is not all we need, and further improvement requires better contexts.
To this end, we study how to evaluate the usefulness of a context without knowing the downstream tasks. We propose a metric and show by experiments that it correlates well with the actual performance of the encoder on many real datasets.
\end{abstract}

\section{Introduction}
Representation learning underpins the modern deep learning revolution, leading up to the remarkable recent successes of foundation models \cite{bommasani2021opportunities}.
But a critical question that has remained unanswered to a satisfactory extent is: why are these models learning anything useful, or perhaps even what representations are these models learning? Unlike classical statistical learning theory, where there is no mystery regarding the statistical estimand, the very target itself is unclear in representation learning.
For example, what are the representations that BERT \cite{devlin2018bert}---trained to do cloze tests---is learning, and why are they useful in understanding the sentiment of user reviews on Netflix? What representations do deep neural networks learn, and why are they useful if they cause neural collapse \cite{papyan2020prevalence}, where deep representations could collapse to a few clusters? Do the many different self-supervised learning methods~\cite{Balestriero2023ACO} all learn similar or disparate representations?

The responses to these questions are often muddled.
Many analyses are conflated with the mystery of deep learning generalization---the ability of large neural networks to learn function approximations that generalize to unseen points.
However, this is a very different problem from the mechanism of representation learning.
Our focus is on what representation learning (also known as ``pretraining'' in the context of foundation models) aims to capture, and why it can be applied to tasks completely different from the objectives used to train the representations. 
Another way this question is muddled is by recourse to scaling.
A popular viewpoint argues that even if the encoder performs poorly on one task, increasing the model size while keeping everything else the same could allow better performance to ``emerge'' \cite{wei2022emergent}.
However, substantial evidence suggests that certain abilities cannot emerge solely from scaling.
Additional training signals, such as alignment \cite{ouyang2022training}, are necessary.

The above questions are naturally interesting to learning theorists, but why should the broader machine learning community care about understanding the mechanism of representation learning, if empirical success seems to be always achievable with existing approaches by scaling up the model size, an empirical observation known as \textit{scaling laws} \cite{kaplan2020scaling}?
This is because sustainable success or progress is not always guaranteed. Ilya Sutskever recently remarked that ``pretraining as we know it will end'' \cite{ilya2024talk}, largely because the current pretraining paradigm is producing diminishing returns.
Understanding what representations are learned by foundation models is crucial for designing future generations of pretraining methods, and this is how this field can make scientific progress.

In this work, we establish \textbf{the contexture theory}, which provides a unified lens for inspecting a large class of representation learning methods. 
The central argument of the contexture theory is that representations are learned from the association between the input $X$ and a context variable $A$.
This framework is general enough to encompass a wide variety of learning paradigms, as we demonstrate in \Cref{sec:objectives}.

Now, suppose we are given a context variable $A$ along with $X$, how should we learn the representation?
In \Cref{sec:results} we prove that the optimal method is to approximate the top singular functions of the expectation operator induced by the context, in which case we say that the encoder ``\textbf{learns the contexture}''.
Such an encoder is optimal as long as the task is known to be \textbf{compatible} with the context, and in \Cref{sec:results} we define a quantitative measurement of such compatibility.

Our theory implies that the main consequence of making the model large is that the learned representation space will be brought closer to the space spanned by the top-$d$ singular functions of the expectation operator, which we empirically verify in \Cref{sec:scaling-law}.
Once the two spaces are close enough, further scaling will yield diminishing returns.
We envision that future breakthroughs in pretraining require \textit{context scaling}, where better contexts are learned from data, instead of heuristics.

Taking a first step towards context scaling, in \Cref{sec:context-evaluation} we study how to evaluate contexts.
This is a prerequisite because if we cannot even determine which contexts are good, then there will be no way to create better contexts.
The key takeaway from our analysis is that for a context to be useful (meaning that it can lead to good representations), the \textit{association} between $X$ and $A$ should be moderate---neither too strong nor too weak.
For example, if $A = X$, then their association is the strongest; if $A$ is independent of $X$, then their association is the weakest. However, neither context is useful because $A$ does not provide additional information about $X$.
In \Cref{sec:metric}, we propose a quantitative measurement of context usefulness that can be efficiently estimated and does not require knowledge of the downstream task.
We also demonstrate empirically that the proposed metric correlates with the performance of the encoder on many real datasets.

In one sentence, the key contribution of this work is clarity on the target of a large class of representation learning methods---the singular functions of the expectation operator. However, we do not discuss the numerical aspect of approximating these functions, which requires an expressive model architecture and a good optimizer, and such analyses are left to future work.

\subsection{Examples of Representation Learning Methods}
\label{sec:method-examples}

Supervised learning is the simplest way to learn representations.
For example, the representations of neural networks pretrained on ImageNet~\cite{russakovsky2015imagenet} were very popular in the early days of the deep learning boom~\cite{huh2016makes}.
Specifically, one uses the output of an intermediate layer, typically the one before the last linear layer, as the representation. However, it has never been fully explained why the penultimate layer works so well on tasks very different from the pretraining task.

Self-supervised learning (SSL) is currently the most common way of learning representations.
Two of the most widely used SSL categories are multi-view learning and denoising autoencoders. Multi-view learning includes contrastive learning \cite{oord2018representation,chen2020simple} and non-contrastive learning \cite{NEURIPS2020_f3ada80d,zbontar2021barlow,bardes2021vicreg}.
Denoising autoencoders have wide applications, including language \cite{devlin2018bert,radford2019language}, vision \cite{he2022masked}, videos \cite{gupta2023siamese}, and more.

Manifold learning is a classical method that aims to capture the geometry of the data.
Examples such as locally linear embedding (LLE) \cite{roweis2000nonlinear} and Laplacian eigenmaps \cite{belkin2003laplacian} formulate manifold learning as node representation learning on a graph, where connected nodes should have similar embeddings.

\subsection{Related Work}
\label{sec:related-work}

Understanding what representations an encoder learns has long been a hot research topic in machine learning.
In the pre-deep-learning-boom era, early work on manifold learning revealed the connection between representation learning and approximating the top eigenfunctions of a kernel \cite{belkin2003laplacian,bengio2004learning,coifman2006diffusion}.
Moreover, using the eigenvectors of the graph Laplacian as node representations on graphs was a classical technique in graph applications \cite{belkin2002using,zhu2003semi}.

Although deep learning has achieved phenomenal success, it is not as theoretically driven as traditional methods, which makes it difficult to analyze what representations it learns.
Most early work experimentally studied the representations of neural networks with visualization tools \cite{DBLP:journals/corr/SimonyanVZ13,zeiler2014visualizing}, or data manipulation \cite{zhang2017understanding}.
These experimental tools are still widely used today to analyze large language models \cite{yin-neubig-2022-interpreting,nanda2023progress,nanda-etal-2023-emergent}.

On the theoretical understanding of representations, two lines of work are closely related to this paper.
The first line studies representation alignment \cite{pmlr-v97-kornblith19a,pmlr-v235-huh24a,fumero2024latent}.
These papers mainly focus on comparing between two representations, while this work aims to evaluate a single representation and the context in which it is trained.
Representation similarity has also been studied in neuroscience \cite{kriegeskorte2008representational}.
The second line develops the spectral theory of self-supervised learning (SSL).
SSL has achieved remarkable success in recent years \cite{radford2019language,chen2020simple,zbontar2021barlow,bardes2021vicreg,he2022masked,baevski2022data2vec,oquab2023dinov2,assran2023self}.
See the SSL cookbook by \cite{Balestriero2023ACO} for a summary of these methods.
\cite{haochen2021provable,johnson2022contrastive} related contrastive learning to the spectrum of the augmentation graph and the positive-pair kernel.
Then, \cite{zhai2023understanding} extended the spectral theory to all SSL methods, not just contrastive learning.
This paper extends the spectral theory to an even broader scope than \cite{zhai2023understanding}.
Other theoretical work on SSL studies its training dynamics \cite{damian2022neural,jing2022understanding,tian2022understanding} and builds its connection to information theory \cite{achille2018emergence,balestriero2022contrastive,shwartz2023information}.

Another line of work for characterizing the representations aims to learn disentangled or causally associated representations~\cite{higgins2018towards,scholkopf2021toward}. It is shown that such representations can be provably recovered, provided sufficient variables are present in the environments, either through auxiliary variables or interventions~\cite{khemakhem20a,varici2024general,buchholz2023learning,yao2024multi}. Some of these results further require stringent parametric assumptions.

Prior work has developed quite different theoretical frameworks for different pretraining methods, and each framework is not applicable to other settings.
In this work, we provide a unified framework that spans the works discussed above and more---manifold learning, SSL based on augmentation, supervised learning, graph representation learning, etc. 
It further extends the spectral theory in \cite{zhai2023understanding} to other paradigms beyond SSL.

\section{Definitions and Examples}
Let $\gX$ be the \textit{input space}.
The goal of representation learning is to learn a feature encoder $\Phi : \gX \rightarrow \R^d$. We call $\Phi(x)$ the \textit{embedding} of $x$, and $d$ the embedding dimension.
Let $\px$ be the data distribution.
Throughout this work, we assume $\px$ to be fixed.

The central argument of the contexture theory is that representations are learned from the association between the input $X \in \gX$ and a context variable $A \in \gA$. 
$\gA$ is called the context space.
Let $P^+(x,a)$ be the joint distribution of $X$ and $A$, with marginal distributions $\px$ and $\pa$.
Let $\lxp$ be the $L^2$ function space \wrt{} $\px$, with inner product $\dotp{f_1, f_2}_\px = \E_{X \sim \px}[ f_1(X) f_2(X) ]$ and norm $\norm{f}_\px = \sqrt{\dotp{f,f}_\px}$.
Define $\lap, \dotp{\cdot, \cdot}_{\pa}, \norm{\cdot}_{\pa}$ for $\pa$ similarly.

\subsection{Examples of Contexts}
\label{sec:examples}
We first introduce some commonly used contexts, their corresponding $P^+$, and how $P^+$ is provided in practice.

\begin{enumerate}[itemsep=0pt, topsep=0.5pt,leftmargin=*]
    \item \textbf{Labels} are a common type of context in machine learning. They can take different forms, such as discrete categories in classification, continuous values in regression, or structured outputs like text sequences in image captioning. Labels may be obtained from human annotators or in pseudo-forms, such as clusters or teacher models. Typically, labels are provided as compatible pairs sampled from the joint distribution $P^+(x,a)$.\looseness=-1
    
    \item \textbf{Random transformations} generate different views of the same data point. Common transformations include adding random noise to inputs, as seen in diffusion models and denoising autoencoders, or randomly corrupting/masking inputs, as in SimCLR and masked autoencoder. These transformations are typically defined by domain experts and are specified through a predefined conditional distribution $P^+\sparen{a}{x}$.

    \item \textbf{Graphs} provide locality information about the inputs. The edge values are typically given, which represent the similarity between two inputs. 
    We can have a continuous space version of such a graph kernel to capture local geometry, which in the limit can be related to differential operators such as the Laplace-Beltrami operator on a manifold. In this case, we have $\gA = \gX$ with the conditional distribution $P^+\sparen{a}{x}$ proportional to the edge values between $x$ and $a$. Please refer to \Cref{sec:graph} for a more detailed construction of $P^+\sparen{a}{x}$.
    
    \item \textbf{Stochastic Features} are potentially stochastic predefined or pretrained mappings from $\gX$ to a vector space, which we denote by $a = \Omega(x)$.
    This encompasses teacher models that provide stochastic pretrained feature encoders. In contrast to previous instances, here $P^+\sparen{a}{x}$ is directly described via a reparameterization or structural equation for $a$ in terms of $x$.
\end{enumerate}

\subsection{Induced Kernels and Expectation Operator}

The joint distribution $P^+$ fully determines the association between $X$ and $A$.
It induces the positive-pair kernel~\cite{johnson2022contrastive} and the dual kernel \cite{zhai2023understanding} defined as follows.

\begin{definition}
\label{def:two-kernels}
    The \textbf{positive-pair kernel} 
    $\ka$ and its \textbf{dual kernel} $\kx$ are defined as
    \begin{align*}
        \ka(a,a') & = \frac{P^+(a,a')}{\pa(a) \pa(a')} 
        = \frac{\int P^+\sparen{a}{x} P^+\sparen{a'}{x} d \px (x)}{\pa(a) \pa(a')}; \\ 
     \kx(x,x') & = \frac{P^+(x,x')}{\px(x) \px(x')}  = \frac{\int P^+\sparen{x}{a} P^+ \sparen{x'}{a} d \pa(a)}{\px(x) \px(x')}. 
    \end{align*}
\end{definition}
These two kernels are density ratios between joints and marginals for $A$ and $X$, respectively. The kernels captures how more likely 
$(a,a')$ or $(x,x')$ appear together than independently given $P^+$.
$P^+$ also induces the following expectation operator. Intuitively, given a function $g \in \lap$, the expectation operator computes the expectation of $g(A)$ conditioned on any given $x$.
\begin{definition}
  The \textbf{expectation operator} $\tp: \lap \rightarrow \lxp$ is defined as for all $g \in \lap$,
\begin{equation*}
        \paren{\tp g} (x) = \int g(a) P^+\sparen{a}{x} da = \E \sbrac{g(A)}{x}. 
\end{equation*}
Its adjoint operator $\tpstar : \lxp \rightarrow \lap$, which satisfies $\langle f, \tp g \rangle_{\px} = \langle \tpstar f, g \rangle_{\pa} \; (\forall f \in \lxp, g \in \lap)$, is given by $\paren{\tpstar f}(a) = \int f(x) \frac{P^+\sparen{a}{x} \px(x)}{\pa(a)} dx = \E \sbrac{f(X)}{a}$.
\end{definition}

Now we discuss the spectral properties of these operators.
Define the kernel integral operator $\tka: \lap \rightarrow \lap$ as $(\tka g)(a) = \int g(a') \ka(a,a') d \pa (a')$.
Define the other operator $\tkx: \lxp \rightarrow \lxp$ similarly.
It is easy to see that $\tka = \tpstar \tp$, and $\tkx = \tp \tpstar$.

We call $\lambda \in \R$ an eigenvalue of $\tka$ with eigenfunction $\nu \in \lap$, if $\tka \nu = \lambda \nu$. 
Suppose $\tka$ is a Hilbert-Schmidt integral operator. Then, we can order its eigenvalues by $1 = \lambda_0 \ge \lambda_1 \ge \cdots \ge 0$, and the corresponding eigenfunctions $\nu_0,\nu_1,\cdots$ form an orthonormal basis (ONB) of $\lap$.
Here $\lambda_i \le 1$ because of Jensen's inequality,
and $\nu_0 \equiv 1$ is always an eigenfunction of $\tka$ with $\lambda_0 = 1$.
Similarly, we can order the eigenvalues of $\tkx$ by $1 = \kappa_0 \ge \kappa_1 \ge \cdots \ge 0$, and the eigenfunctions $\mu_0, \mu_1,\cdots$ form an ONB of $\lxp$, where $\mu_0 \equiv 1$.
We also have the following result.
\begin{lemma}[Duality property, \cite{zhai2023understanding}, Proposition~1]
\label{lem:duality}
    For all $i$, we have $\lambda_i = \kappa_i \in [0,1]$. And if $\lambda_i > 0$, then we have $\mu_i = \lambda_i^{-\frac{1}{2}} \tp \nu_i$, and $\nu_i = \lambda_i^{-\frac{1}{2}} \tpstar \mu_i$.
\end{lemma}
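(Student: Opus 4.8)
The plan is to recognize this as the operator-theoretic singular value decomposition duality between $\tpstar\tp$ and $\tp\tpstar$, specialized to $T=\tp$, and to exploit the factorizations $\tka=\tpstar\tp$ and $\tkx=\tp\tpstar$ directly. First I would record that since $\tka=\tpstar\tp$ is Hilbert--Schmidt, hence compact, the operator $\tp$ is itself compact (for instance because $|\tp|=(\tpstar\tp)^{1/2}$ is then compact), so $\tkx=\tp\tpstar$ is compact, self-adjoint and positive semidefinite; it therefore has a discrete spectrum with an ONB of eigenfunctions of $\lxp$, matching the hypothesis already in force for $\tka$ on $\lap$. The containment $\lambda_i,\kappa_i\in[0,1]$ is as noted in the text: nonnegativity from positive semidefiniteness of $\tpstar\tp$ and $\tp\tpstar$, and the upper bound $\le 1$ from Jensen's inequality applied to the conditional expectation $\tp$ and its adjoint.

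Next I would build the correspondence on the positive part of the spectrum. Fix $i$ with $\lambda_i>0$ and $\tka\nu_i=\lambda_i\nu_i$, and define $\mu_i:=\lambda_i^{-1/2}\tp\nu_i$. Two computations do the work. Applying $\tkx=\tp\tpstar$ gives $\tkx\mu_i=\lambda_i^{-1/2}\tp(\tpstar\tp\nu_i)=\lambda_i^{-1/2}\tp(\lambda_i\nu_i)=\lambda_i\mu_i$, so $\mu_i$ is an eigenfunction of $\tkx$ with eigenvalue $\lambda_i$; and for any $i,j$ with $\lambda_i,\lambda_j>0$,
\[
\dotp{\mu_i,\mu_j}_\px=(\lambda_i\lambda_j)^{-1/2}\dotp{\tp\nu_i,\tp\nu_j}_\px=(\lambda_i\lambda_j)^{-1/2}\dotp{\tpstar\tp\nu_i,\nu_j}_\pa=\sqrt{\lambda_i/\lambda_j}\,\dotp{\nu_i,\nu_j}_\pa=\delta_{ij}.
\]
Hence $\nu_i\mapsto\mu_i$ is, on each nonzero eigenspace, $\lambda^{1/2}$ times a unitary from $\overline{\sspan}\{\nu_i:\lambda_i>0\}$ onto $\overline{\sspan}\{\mu_i:\lambda_i>0\}$ that intertwines $\tka$ and $\tkx$; the symmetric construction $\mu_i\mapsto\kappa_i^{-1/2}\tpstar\mu_i$ provides the inverse, since $\lambda_i^{-1/2}\tpstar\mu_i=\lambda_i^{-1}\tpstar\tp\nu_i=\nu_i$. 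This identifies the nonzero eigenvalues of $\tka$ and $\tkx$ with multiplicities, and identifies the corresponding eigenspaces isometrically.

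Finally I would reconcile the ordered sequences and read off the formulas. Since the nonzero spectra coincide as multisets, ordering decreasingly forces $\lambda_i=\kappa_i$ for every $i$ with $\lambda_i>0$; and because the index set is countable and both sequences are nonincreasing with limit $0$, once either reaches $0$ both do, so $\lambda_i=\kappa_i=0$ thereafter, giving $\lambda_i=\kappa_i$ for all $i$ (consistent with $\lambda_0=\kappa_0=1$ and $\nu_0=\mu_0\equiv1$, as $\tp 1\equiv 1$). The formula $\mu_i=\lambda_i^{-1/2}\tp\nu_i$ is the definition above, and $\nu_i=\lambda_i^{-1/2}\tpstar\mu_i$ follows from $\tpstar\mu_i=\lambda_i^{-1/2}\tpstar\tp\nu_i=\lambda_i^{1/2}\nu_i$. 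I expect the only delicate point to be the bookkeeping when an eigenvalue has multiplicity $>1$: one must argue that $\tp$ carries a nonzero $\tka$-eigenspace bijectively (up to the scalar $\lambda^{1/2}$) onto the corresponding $\tkx$-eigenspace, so that an ONB chosen on one side transports to an ONB on the other and the indexing can be aligned; everything else is routine manipulation of adjoints. A secondary technical point is ensuring separability of $\lxp,\lap$ and the applicability of the spectral theorem for compact self-adjoint operators, which I would dispatch using the standing assumptions on $P^+$.
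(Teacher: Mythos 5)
The paper does not prove this lemma itself; it cites it verbatim from prior work (\cite{zhai2023understanding}, Proposition~1), so there is no in-paper proof to compare against. Your argument is the standard operator-theoretic SVD duality proof, and it is correct: defining $\mu_i := \lambda_i^{-1/2}\tp\nu_i$ on the positive spectrum, checking $\tkx\mu_i=\lambda_i\mu_i$ and $\dotp{\mu_i,\mu_j}_\px=\delta_{ij}$ by transferring the inner product through $\tpstar\tp$, and verifying the inverse relation $\tpstar\mu_i=\lambda_i^{1/2}\nu_i$ together establish a multiplicity-preserving bijection between the nonzero spectra, after which the ordered equality $\lambda_i=\kappa_i$ and the stated formulas follow. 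Two points you flag but do not fully close are genuinely the only places requiring care: (i) that the family $\{\mu_i:\lambda_i>0\}$ is \emph{exhaustive}, i.e.\ spans $\ker(\tkx)^\perp=\overline{\operatorname{ran}}(\tp)$ so that no nonzero eigenvalue of $\tkx$ is missed --- this follows by running your construction symmetrically starting from an arbitrary eigenfunction $\mu$ of $\tkx$ with $\kappa>0$ and observing $\mu=\kappa^{-1/2}\tp(\kappa^{-1/2}\tpstar\mu)$; and (ii) the reading of the lemma: since ONBs in an eigenspace of multiplicity $>1$ are not unique, the assertion ``$\mu_i=\lambda_i^{-1/2}\tp\nu_i$'' should be understood as ``one may choose the $\mu_i$ so that this holds,'' which is exactly what your construction furnishes. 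With those clarifications made explicit the proof is complete and, as far as this class of result goes, essentially the unique natural one.
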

We call $s_i = \lambda_i^{\frac{1}{2}}$ a \textbf{singular value} of $\tp$, associated with left \textbf{singular function} $\mu_i \in \lxp$ and right singular function $\nu_i \in \lap$.
Since we choose $\mu_0 \equiv 1$ and $\nu_0 \equiv 1$, all other $\mu_i$ (and $\nu_i$) must have zero mean because they are orthogonal to $\mu_0$ (and $\nu_0$).
Using these singular functions, we can spectrally decompose $P^+$ as follows.
\begin{lemma}
\label{lem:spectral-decomposition} The spectral decomposition of $P+$ is
    $P^+(x,a) = \sum_i s_i \mu_i(x) \nu_i(a) \px(x) \pa (a)$.
\end{lemma}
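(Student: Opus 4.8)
The plan is to expand the singular function decomposition directly and verify that the claimed series reproduces the joint density. Concretely, I would define the function $R(x,a) = P^+(x,a) / (\px(x)\pa(a))$, which is exactly the density ratio between the joint and the product of marginals (well-defined wherever the marginals are positive, and assumed to lie in $L^2(\px \otimes \pa)$ under the Hilbert--Schmidt hypothesis already invoked for $\tka$ and $\tkx$). The goal then reduces to showing $R(x,a) = \sum_i s_i \mu_i(x)\nu_i(a)$ as an identity in $L^2(\px \otimes \pa)$, since multiplying both sides by $\px(x)\pa(a)$ gives the statement.

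The key steps, in order, are as follows. First, I would note that $\{\mu_i(x)\nu_j(a)\}_{i,j}$ is an orthonormal basis of $L^2(\px \otimes \pa)$, because $\{\mu_i\}$ is an ONB of $\lxp$ and $\{\nu_j\}$ is an ONB of $\lap$ (both facts recorded in the excerpt from the spectral theory of $\tkx$ and $\tka$). Hence $R$ admits the expansion $R(x,a) = \sum_{i,j} c_{ij}\,\mu_i(x)\nu_j(a)$ with $c_{ij} = \dotp{R,\ \mu_i \otimes \nu_j}_{\px \otimes \pa}$. Second, I would compute this coefficient by unfolding the inner product into a double integral and recognizing the expectation operator:
\begin{align*}
c_{ij} &= \int\int R(x,a)\,\mu_i(x)\,\nu_j(a)\,\px(x)\,\pa(a)\,dx\,da \\
&= \int \mu_i(x) \paren{\int \nu_j(a)\, P^+\sparen{a}{x}\, da}\px(x)\,dx = \dotp{\mu_i,\ \tp \nu_j}_{\px},
\end{align*}
using $P^+(x,a) = P^+\sparen{a}{x}\px(x)$. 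Third, I invoke the duality property (\Cref{lem:duality}): if $\lambda_j = s_j^2 > 0$ then $\tp \nu_j = s_j\, \mu_j$, so $c_{ij} = s_j\dotp{\mu_i,\mu_j}_{\px} = s_j\,\delta_{ij}$; and if $s_j = 0$ then $\tp\nu_j = 0$ in $\lxp$, so $c_{ij} = 0 = s_j\delta_{ij}$ as well (here one also needs that $\nu_j$ with $s_j=0$ lie in the kernel of $\tp$, which is immediate since $\norm{\tp\nu_j}_\px^2 = \dotp{\tka\nu_j,\nu_j}_\pa = \lambda_j = 0$). Thus $c_{ij} = s_i\delta_{ij}$, giving $R(x,a) = \sum_i s_i\,\mu_i(x)\,\nu_i(a)$, and multiplying through by $\px(x)\pa(a)$ yields the claim.

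The main obstacle is not the algebra but the functional-analytic bookkeeping: one must ensure $R \in L^2(\px \otimes \pa)$ so that the Parseval expansion is legitimate and the series converges in $L^2$ (not merely formally), and one must handle the potential zero singular values and the degrees of freedom in choosing eigenfunctions within degenerate eigenspaces — but these do not affect the coefficient computation, since $c_{ij}$ depends only on the fixed chosen orthonormal systems and the operator $\tp$. I would also remark that the identity is exactly the Mercer/SVD-type expansion of the integral kernel of $\tp$ written in density-ratio form, so an alternative one-line argument is to apply the spectral theorem to the Hilbert--Schmidt operator $\tp$ directly; the expansion above is simply that statement made explicit in terms of $P^+$.
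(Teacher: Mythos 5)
Your proof is correct and takes essentially the same route as the paper. The paper projects $R(x,\cdot)=P^+(x,\cdot)/(\px(x)\pa(\cdot))$ onto the ONB $\{\nu_i\}$ of $\lap$ at fixed $x$, getting $\dotp{R(x,\cdot),\nu_i}_\pa=(\tp\nu_i)(x)=s_i\mu_i(x)$ and then invoking completeness of $\{\nu_i\}$; you instead expand $R$ in the tensor-product ONB $\{\mu_i\otimes\nu_j\}$ of $L^2(\px\otimes\pa)$ and show the cross coefficients vanish via the same computation plus the duality lemma. This is the same core identity, with your version adding some useful bookkeeping the paper leaves implicit (the Hilbert--Schmidt hypothesis ensuring $R\in L^2(\px\otimes\pa)$, and the handling of zero singular values).
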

\begin{proof}
   $\forall i, \;   
   \dotp{\frac{P^+(x,a)}{\px(x) \pa(a)}, \nu_i }_{\pa} = \int P^+\sparen{a}{x} \nu_i(a) da = (\tp \nu_i) (x) = 
   \paren{\lambda_i^{\frac{1}{2}} \mu_i } (x) = s_i \mu_i(x)$.
    Since $(\nu_i)_{i \ge 0}$ is an ONB, we have $\frac{P^+(x,a)}{\px(x) \pa(a)} = \sum_{i=0}^{\infty} s_i \mu_i(x) \nu_i(a)$.
\end{proof}

The first key result of the contexture theory is that the optimal $d$-dimensional representation should recover the linear space spanned by the top-$d$ singular functions $\mu_1,\cdots,\mu_d$.
We say that such a representation learns the contexture.
Note that the constant function $\mu_0 \equiv 1$ is excluded, as it does not need to be learned—there is no benefit in allocating a dimension to encode something already universally present.

\begin{definition}
\label{def:learn-contexture}
    A $d$-dimensional encoder $\Phi = [\phi_1,\cdots,\phi_d]$ \textbf{learns the contexture} of $P^+$, if there exists a set of top-$d$ singular functions $\oset{\mu_1,\cdots,\mu_d}$ of $\tp$, such that $\sspan \oset{\phi_1,\cdots,\phi_d} =  \sspan \oset{\mu_1,\cdots,\mu_d}$. We also say that $\Phi$ \textbf{extracts the top-$d$ eigenspace} of $\tkx$. 
\end{definition}

If the multiplicity of $s_d$ is more than $1$, then $\Phi$ recovering the span of any top-$d$ singular functions suffices.
The intuition why learning the contexture is ideal is that such a representation keeps the most information (variance) of the context, which is analogous to principal component analysis (PCA) in the finite-dimensional case.
Consider the case where $\gX$ and $\gA$ are both finite sets.
Let $N = |\gX|$ and $M = |\gA|$.
Then, a function $f \in \lxp$ is a vector in $\R^N$, $g \in \lap$ is a vector in $\R^M$, and $\tp$ is essentially a matrix $\mT \in \R^{N \times M}$.
Suppose we want to learn a $d$-dimensional embedding $\mE \in \R^{N \times d}$ for the $N$ samples in $\gX$, and it should preserve the information of $\mT$ as much as possible, then what should we do?
PCA states that we should use the top-$d$ left singular vectors of $\mT$ as $\mE$, which are equivalent to the top-$d$ eigenvectors of $\mT \mT^{\top}$, because they maximize the explained variance.
Similarly, functional spaces are essentially infinite-dimensional vector spaces, so the $d$-dimensional embedding of $X$ that preserves the most information of $\tp$ consists of the top-$d$ left singular functions of $\tp$, or equivalently the top-$d$ eigenfunctions of $\tkx = \tp \tpstar$.

\section{Learning the Contexture}
\label{sec:objectives}

In this section, we show that every example method in \Cref{sec:method-examples} does one of the following:
\begin{enumerate}[label=(\roman*)]
    \item Extracts the top-$d$ eigenspace of $\tkx = \tp \tpstar$ (learns the contexture of $P^+$), which according to \Cref{def:learn-contexture} is equivalent to recovering the span of $\mu_1,\cdots,\mu_d$ (excluding $\mu_0 \equiv 1$);
    \item Extracts the top-$d$ eigenspace of $\tp \Lambda \tpstar$, where $\Lambda$ is the integral operator of a kernel $k_{\Lambda}(a,a')$, that is $(\Lambda g)(a) = \int g(a') k_{\Lambda}(a,a') d \pa(a')$. $k_\Lambda$ is called the \textbf{loss kernel}, which is defined by the loss function used in the objective.
    Since the constant function is not necessarily the top eigenfunction of $\tp \Lambda \tpstar$, in this case, we do not exclude any eigenfunction.
\end{enumerate}

\textbf{Notation:} For any $f \in \lxp$, denote its mean by $\bar{f} = \E_{\px}[f(X)]$, and its centered version by $\tilde{f} = f - \bar{f}$. The same notation is used for multi-dimensional functions and random variables, as long as the distribution is clear from context.
The covariance matrix of any $\Phi: \gX \rightarrow \R^d$, denoted by $\Cov_{\px}[\Phi]$, is a $d \times d$ matrix $\mC$ where $\mC[i,j] = \dotp{\tphi_i, \tphi_j}_{\px}$.

\subsection{Supervised Learning: Label Context}
In the supervised learning paradigm, the context variable $A$ is the label of $x$, and $P^+\sparen{\cdot}{x}$ is the label distribution of $x$.
Consider minimizing the mean squared error (MSE):
\begin{equation}
\label{eqn:supervised-obj}
    \gR(\Phi) = \min_{\mW \in \R^{d_A \times d}, \vb \in R^{d_A}} \E_{(X,A) \sim P^+} \brac{\norm{\mW \Phi(X) + \vb - A}_2^2} .
\end{equation}
That is, $\Phi(X)$ is the output of the layer before the last linear layer in a neural network, and $\vb$ denotes the bias.
If $\vb$ can be an arbitrary vector, then the linear layer is biased; if $\vb = \vzero$ is fixed, then the linear layer is unbiased.

First, we study classification tasks where $A$ is one-hot. 
\begin{theorem}[Proof in \Cref{app:proof-thm-supervised-cls}]
\label{thm:thm-supervised-cls}
    Let $A$ be a one-hot random vector.
    Suppose the linear layer is unbiased, that is $\vb = \vzero$.
    Then, $\Phi^*$ minimizes $\gR(\Phi)$ if and only if it extracts the top-$d$ eigenspace of $ \tp \Lambda \tpstar$, where $\kp(a,a') = \sI [a=a']$, or $(\Lambda g)(a) = g(a) \pa(a)$.
    If all classes have the same size, then the top-$d$ eigenfunctions of $ \tp \Lambda \tpstar$ and $\tp \tpstar$ are the same.
\end{theorem}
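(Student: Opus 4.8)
The plan is to integrate out the linear head $\mW$ in closed form, reducing the optimization over $\Phi$ to a finite‑dimensional PCA problem, and then to recognize the operator that appears as $\tp\Lambda\tpstar$. First I would exploit that $A$ one‑hot forces $\gA$ to be the finite set $\oset{e_1,\dots,e_{d_A}}$, so every $g\in\lap$ is determined by its values $g(e_j)$; writing $\chi_j(a)=\sI[a=e_j]$, the regression function $\eta(x):=\E\sbrac{A}{x}$ has $j$-th component $\eta_j=\tp\chi_j\in\lxp$. Decomposing $A=\eta(X)+\xi$ with $\E\sbrac{\xi}{X}=\vzero$ kills the cross term, so for any fixed $\Phi$,
\begin{equation*}
\E_{P^+}\brac{\norm{\mW\Phi(X)-A}_2^2}=\sum_{j=1}^{d_A}\norm{\mw_j^\top\Phi-\eta_j}_{\px}^2+\E\brac{\norm{\xi}_2^2},
\end{equation*}
where $\mw_j$ is the $j$-th row of $\mW$ and the last term is a constant. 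Minimizing each summand over $\mw_j$ is an $L^2(\px)$ least‑squares projection onto $V:=\sspan\oset{\phi_1,\dots,\phi_d}$, which yields
\begin{equation*}
\gR(\Phi)=\text{const}-\Tr\paren{P_V\,S},\qquad S:=\sum_{j=1}^{d_A}\eta_j\otimes\eta_j,
\end{equation*}
with $P_V$ the orthogonal projection onto $V$ and $\eta_j\otimes\eta_j$ the rank‑one operator $f\mapsto\dotp{\eta_j,f}_{\px}\eta_j$. (Since $\vb=\vzero$ the projection is onto $V$ itself, not onto $\sspan\oset{1,\phi_1,\dots,\phi_d}$, which is why no eigenfunction is excluded here.)

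Next I would identify $S$. Using $\eta_j=\tp\chi_j$ and $\dotp{\eta_j,f}_{\px}=\dotp{\chi_j,\tpstar f}_{\pa}$ gives $\eta_j\otimes\eta_j=\tp(\chi_j\otimes\chi_j)\tpstar$, hence $S=\tp\Lambda\tpstar$ with $\Lambda=\sum_j\chi_j\otimes\chi_j$ on $\lap$. A short computation shows $(\Lambda g)(a)=\sum_j\pa(e_j)g(e_j)\,\sI[a=e_j]=\pa(a)g(a)$, i.e.\ $\Lambda$ is multiplication by $\pa$, equivalently the integral operator with kernel $\kp(a,a')=\sI[a=a']$, matching the statement. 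Because $\gA$ is finite, $S=\tp\Lambda\tpstar$ is a finite‑rank, self‑adjoint, positive semidefinite operator on $\lxp$, so the classical Ky Fan / Courant–Fischer variational principle applies: $\Tr(P_V S)$ over $d$-dimensional subspaces $V$ is maximized precisely when $V$ is the span of a set of top-$d$ eigenfunctions of $\tp\Lambda\tpstar$ (any choice inside the eigenspace if $s_d$ is degenerate). This is exactly ``$\Phi$ extracts the top-$d$ eigenspace of $\tp\Lambda\tpstar$,'' which gives the claimed ``if and only if.'' For the last sentence, equal class sizes mean $\pa(e_j)=1/d_A$ for all $j$, so $\Lambda=\tfrac1{d_A}\mathrm{Id}$ and $\tp\Lambda\tpstar=\tfrac1{d_A}\tp\tpstar=\tfrac1{d_A}\tkx$; proportional operators have identical eigenspaces, so the top-$d$ eigenfunctions of the two operators coincide.

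The routine parts are the bias–variance split and the kernel computation for $\Lambda$; one‑hotness makes all operators finite‑rank, so there are no subtleties about Hilbert–Schmidt/trace‑class regularity. The part needing the most care is turning the Ky Fan maximization into a clean ``only if'': I must rule out minimizers $\Phi^*$ whose components are linearly dependent (so that $\dim V<d$), which requires $\tp\Lambda\tpstar$ to have rank at least $d$, and I must phrase the degenerate‑boundary case so that it aligns with Definition~\ref{def:learn-contexture}'s ``extracts the top-$d$ eigenspace.'' A secondary point to state carefully is that the optimal head $\mW^*$ achieving the projection exists and is the usual regression solution, which is immediate once $\Cov_{\px}[\Phi]$ restricted to $V$ is invertible (and otherwise one passes to a minimal spanning subset of $\oset{\phi_i}$).
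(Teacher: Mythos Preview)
Your proposal is correct and follows essentially the same route as the paper: eliminate $\mW$ in closed form, reduce to maximizing a quadratic form in $\Phi$ over $d$-dimensional subspaces, identify the resulting operator as $\tp\Lambda\tpstar$, and invoke the variational principle for eigenvalues. The only cosmetic difference is packaging: you phrase the reduction as $\Tr(P_V S)$ via the bias--variance split $A=\eta(X)+\xi$ and identify $S=\sum_j\eta_j\otimes\eta_j=\tp\Lambda\tpstar$ by operator algebra, whereas the paper expands the square directly, solves for $w_{ij}$ entrywise, and identifies the operator through an explicit kernel formula $\hat k(x_1,x_2)=\iint\dotp{a_1,a_2}P^+(a_1|x_1)P^+(a_2|x_2)\,da_1\,da_2$ via an auxiliary lemma; the content is the same.
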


This theorem works for randomized labels, where each $x$ could belong to multiple classes with certain probabilities.
The loss kernel $\kp$ is a consequence of class imbalance, and it puts more weights on larger classes.
Indeed, in practice, smaller classes are harder to learn.
To get rid of $\Lambda$, we can use the class-balanced risk (also known as importance weighting \cite{shimodaira2000improving}):
\begin{equation*}
    \gR_{\rm bal}(\Phi) = \min_{\mW \in \R^{d_A \times d}, \vb \in R^{d_A}} \E_{(X,A) \sim P^+} \brac{\frac{\norm{\mW \Phi(X) + \vb - A}_2^2}{\sqrt{\pa(A)}}} .
\end{equation*}
\begin{theorem}[Proof in \Cref{app:proof-thm-supervised-bal-cls}]
\label{thm:thm-supervised-bal-cls}
    Under the setting of \Cref{thm:thm-supervised-cls}, suppose the linear layer is biased. Then, $\Phi^*$ minimizes $\gR_{\rm bal}(\Phi)$ if and only if it learns the contexture of $P^+$.
\end{theorem}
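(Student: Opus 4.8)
The plan is to profile out the linear head, reduce $\gR_{\mathrm{bal}}$ to a trace maximization over a subspace, and then apply an eigenvalue extremal principle. First I would fix $\Phi$ and eliminate $\mW,\vb$. Since $\gR_{\mathrm{bal}}$ depends on $\Phi$ only through the affine span of its coordinates, I may take $\phi_1,\dots,\phi_d$ to be $\px$-orthonormal; writing $\phi_0\equiv\mathbf{1}$ and $V=\sspan\{\phi_0,\dots,\phi_d\}$, the \emph{biased} head makes each coordinate of $\mW\Phi(X)+\vb$ range freely and independently over $V$, so $\gR_{\mathrm{bal}}(\Phi)=\sum_{k=1}^{d_A}\min_{g\in V}\E_{(X,A)\sim P^+}\big[\tfrac{1}{\sqrt{\pa(A)}}(g(X)-A_k)^2\big]$. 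Writing $\psi_j(\cdot):=P^+(\mathbf{e}_j\mid\cdot)$ (the $j$-th coordinate of $x\mapsto\E[A\mid X=x]$) and expanding the inner expectation using that $A$ is one-hot ($A_k^2=A_k$, and $\sqrt{\pa(A)}=\sqrt{\pa(j)}$ on $\{A=\mathbf{e}_j\}$), I would show that the square-root weighting together with the bias turns the minimization into an honest $L^2(\px)$ projection, yielding
\[
\gR_{\mathrm{bal}}(\Phi)=C-\sum_{k=1}^{d_A}\big\|\Pi_V\big(\psi_k/\sqrt{\pa(k)}\big)\big\|_{\px}^2,
\]
with $C=C(P^+,d_A)$ constant and $\Pi_V$ the $L^2(\px)$-orthogonal projection onto $V$. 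Minimizing $\gR_{\mathrm{bal}}$ is then the same as maximizing $\sum_k\|\Pi_V(\psi_k/\sqrt{\pa(k)})\|_{\px}^2$.

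Next I would identify the relevant operator: $\sum_{k}\|\Pi_V(\psi_k/\sqrt{\pa(k)})\|_{\px}^2=\Tr(\tkx\,\Pi_V)$, because $\sum_k\pa(k)^{-1}\psi_k\otimes\psi_k$ is exactly the integral operator whose kernel is the dual kernel $\kx(x,x')=\sum_k\psi_k(x)\psi_k(x')/\pa(k)$ of \Cref{def:two-kernels}, i.e.\ $\tkx=\tp\tpstar$; equivalently one expands each $\psi_k/\sqrt{\pa(k)}$ in the $\mu_i$-basis using \Cref{lem:spectral-decomposition} and \Cref{lem:duality}. (Here $\tka$, hence $\tkx$, is assumed Hilbert–Schmidt, so the trace and spectral expansions are legitimate.) Thus minimizing $\gR_{\mathrm{bal}}$ over $d$-dimensional encoders is the same as maximizing $\Tr(\tkx\,\Pi_V)$ over $(d+1)$-dimensional subspaces $V\ni\mathbf{1}$.

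Finally I would use that $\mu_0\equiv\mathbf{1}$ is the top eigenfunction of $\tkx$ with $\kappa_0=1$, so $\tkx$ leaves $\mathbf{1}^{\perp}$ invariant, with eigenvalues $\kappa_1\ge\kappa_2\ge\cdots$ and eigenfunctions $\mu_1,\mu_2,\dots$ there. Writing $V=\sspan\{\mathbf{1}\}\oplus W$ with $W\perp_{\px}\mathbf{1}$ and $\dim W=d$, one gets $\Tr(\tkx\Pi_V)=1+\Tr\big((\Pi_{\mathbf{1}^{\perp}}\tkx\Pi_{\mathbf{1}^{\perp}})\Pi_W\big)$, and Ky Fan's maximum principle bounds the second term by $\kappa_1+\cdots+\kappa_d$, with equality iff $W$ is a top-$d$ eigenspace of the restriction of $\tkx$ to $\mathbf{1}^{\perp}$, i.e.\ $W=\sspan\{\mu_1,\dots,\mu_d\}$ (any top-$d$ choice if $\kappa_d$ is degenerate). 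Since the unconstrained optimum — the top-$(d+1)$ eigenspace of $\tkx$ — already contains $\mathbf{1}=\mu_0$, the constraint $V\ni\mathbf{1}$ is not binding. Hence $\Phi^*$ minimizes $\gR_{\mathrm{bal}}$ iff $\sspan\{\mathbf{1},\phi_1^*,\dots,\phi_d^*\}$ is the top-$(d+1)$ eigenspace of $\tkx$, which — identifying $\sspan\{\phi_i^*\}$ with $V\cap\mathbf{1}^{\perp}$, or simply taking the minimizer centered — is precisely the assertion that $\Phi^*$ learns the contexture of $P^+$; both directions of the ``iff'' are then immediate.

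The hard part is the reduction step: verifying that, once $\mW,\vb$ are profiled out, the $1/\sqrt{\pa(A)}$ reweighting leaves no residual $\pa$-dependent distortion of the ambient $L^2(\px)$ geometry, so that the profiled risk is a genuine $\px$-projection against the targets $\psi_k/\sqrt{\pa(k)}$ — in contrast to the unbiased, unweighted case of \Cref{thm:thm-supervised-cls}, where the class-imbalance loss kernel $k_\Lambda(a,a')=\sI[a=a']$ survives. This is exactly where the one-hot structure of $A$ and the biased head are used; everything downstream (the identification $\sum_k\pa(k)^{-1}\psi_k\otimes\psi_k=\tkx$ and the Ky Fan argument) is routine, modulo the usual caveats — a spectral gap $\kappa_d>\kappa_{d+1}$ for uniqueness, and the harmless indeterminacy of $\Phi^*$ along the constant direction.
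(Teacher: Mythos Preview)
Your overall plan matches the paper's closely --- profile out the head, reduce to a trace maximization, identify the operator as $\tkx$, then apply an eigenvalue extremal principle --- and your handling of the bias (enlarge to $V=\sspan\{\mathbf{1},\phi_1,\dots,\phi_d\}$, then Ky Fan on $\mathbf{1}^\perp$) is more explicit than the paper's one-line remark at the end.

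The gap is in the step you flag as hard. Expanding the $k$-th coordinate with $A$ one-hot gives
\[
\E_{(X,A)}\Big[\tfrac{(g(X)-A_k)^2}{\sqrt{\pa(A)}}\Big]
= \E_X\!\big[w(X)\,g(X)^2\big]-2\big\langle g,\,\psi_k/\sqrt{\pa(k)}\big\rangle_{\px}+\sqrt{\pa(k)},
\quad w(x):=\sum_j \tfrac{\psi_j(x)}{\sqrt{\pa(j)}},
\]
and $w$ is constant only when the classes are balanced. Hence the inner minimization over $g\in V$ is a $w$-weighted least-squares, not an $L^2(\px)$ projection; your formula $\gR_{\mathrm{bal}}(\Phi)=C-\sum_k\|\Pi_V(\psi_k/\sqrt{\pa(k)})\|_{\px}^2$ does not follow, and neither the one-hot structure (which only gives $A_k^2=A_k$) nor the biased head (which only adds $\mathbf{1}$ to $V$) removes the weight $w$ from the quadratic term. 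The paper handles exactly this point differently: it Gram--Schmidt-normalizes $\Phi$ in the $w$-\emph{weighted} inner product $\langle f,g\rangle_w=\E_X[w(X)f(X)g(X)]$, so the quadratic term collapses to $\|\mW\|_F^2$ while the cross term still yields the unweighted $\sum_j\langle\Phi_j,\tkx\Phi_j\rangle_{\px}$, and then asserts that the maximizer extracts the top eigenspace of $\hat k=\kx$. In short, the $1/\sqrt{\pa(A)}$ reweighting \emph{does} distort the $L^2(\px)$ geometry --- the opposite of what you claim --- and the paper absorbs that distortion into the orthonormality constraint on $\Phi$ rather than making it vanish.
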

Interestingly, our result can partially explain \textbf{neural collapse}.
\cite{papyan2020prevalence} empirically showed that when there are $d$ classes of equal sizes and the label $A$ is deterministic, a sufficiently trained deep representation will collapse to an equiangular tight frame (ETF) $\phi_1,\cdots,\phi_d$, which are defined as $\phi_i(x) = c(\sI[x \text{ belongs to class } i] - d^{-1})$ for all $i \in [d]$ and some constant $c$.
The span of $\phi_1,\cdots,\phi_d$ is the same as the span of the top-$d$ eigenfunctions of $\tp \Lambda \tpstar$.
However, our result cannot explain why $\phi_1,\cdots,\phi_d$ converge to the exact functions as above---it only proves that they will span the same space.
To explain this, one needs to analyze the training dynamics, which depend on the specific optimizer, such as gradient-based methods, while our results are independent of the optimizer.

When the classes have different sizes, it is easy to see that the dual kernel of $\tp \Lambda \tpstar$ is $\kx(x,x') = \sI[x \text{ and } x' \text{ have the same label}]$. This is equivalent to the simplex-encoded labels interpolation (SELI) defined by \cite[Definition~2]{thrampoulidis2022imbalance}, which generalizes neural collapse.

For regression where $A$ is an arbitrary Euclidean vector, using the same objective as \Cref{eqn:supervised-obj}, we can prove the following result.
\begin{theorem}[Proof in \Cref{app:proof-thm-obj-regression}]
\label{thm:obj-regression}
$\Phi^*$ minimizes \Cref{eqn:supervised-obj} if and only if $\Phi^*$ extracts the top-$d$ eigenspace of $\tp \Lambda \tpstar$.
If the linear layer is unbiased ($\vb = \vzero$), then $\kp(a,a') = \dotp{a, a'}$;
if it is biased ($\vb$ can be arbitrary), then $\kp(a,a') = \dotp{\tilde{a}, \tilde{a'}}$.
\end{theorem}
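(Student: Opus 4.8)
The plan is to carry out the inner minimization over $(\mW,\vb)$ in closed form via a bias--variance decomposition, which collapses $\gR(\Phi)$ to a quantity depending only on the subspace that $\Phi$ spans in $\lxp$, and then invoke the variational (Ky Fan) characterization of top eigenspaces of a positive self-adjoint operator. First, for $j\in[d_A]$ put $g_j(a)=a^{(j)}$ and $m^{(j)}=\tp g_j$, so that $m(x)=(m^{(1)}(x),\dots,m^{(d_A)}(x))=\E[A\mid X=x]$. Since $A-m(X)$ is orthogonal in $L^2(P^+)$ to every function of $X$, Pythagoras gives $\E[\norm{\mW\Phi(X)+\vb-A}_2^2] = \E[\norm{\mW\Phi(X)+\vb-m(X)}_2^2] + \E[\norm{A-m(X)}_2^2]$, and the last term is independent of $\Phi,\mW,\vb$; so minimizing $\gR$ is equivalent to minimizing the first term.

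Next, for fixed $\Phi$ the first term decouples over the coordinates $j$ (row $j$ of $\mW$ and entry $j$ of $\vb$ only affect term $j$): term $j$ equals $\norm{(\text{affine function of }\Phi)-m^{(j)}}_{\px}^2$, minimized by the $\lxp$-orthogonal projection of $m^{(j)}$ onto $V:=\sspan\oset{\phi_1,\dots,\phi_d}$ when $\vb=\vzero$ (unbiased), and onto $V:=\sspan\oset{1,\phi_1,\dots,\phi_d}$ when $\vb$ is free (biased). Writing $P_V$ for this projection, $\gR(\Phi)=\text{const}-\sum_j\norm{P_V m^{(j)}}_{\px}^2 = \text{const}-\Tr(P_V\,\tp\Lambda\tpstar)$, using that $P_V$ is a self-adjoint idempotent together with the operator identity $\tp\Lambda\tpstar=\sum_j m^{(j)}\otimes m^{(j)}$ (with $(u\otimes v)w=\dotp{w,v}_{\px}u$), which holds precisely when $\Lambda=\sum_j g_j\otimes g_j$, i.e. when the loss kernel is $\kp(a,a')=\sum_j a^{(j)}(a')^{(j)}=\dotp{a,a'}$; the identity is immediate from $\dotp{\tpstar f,g_j}_{\pa}=\dotp{f,\tp g_j}_{\px}$. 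This settles the unbiased case.

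For the biased case, decompose $V=\sspan\oset{1}\oplus W$ with $W\perp 1$ and $\dim W\le d$. Since $\tp 1=1$, $m^{(j)}=\bar{m}^{(j)}\cdot 1+\tp\tilde g_j$ where $\tilde g_j=g_j-\bar{g}_j$ and $\tp\tilde g_j=\tilde m^{(j)}$ has zero $\px$-mean; hence $P_V m^{(j)}=\bar{m}^{(j)}\cdot 1+P_W\tilde m^{(j)}$ and $\sum_j\norm{P_V m^{(j)}}_{\px}^2=\sum_j(\bar{m}^{(j)})^2+\Tr(P_W\,\tp\tilde\Lambda\tpstar)$ with $\tilde\Lambda=\sum_j\tilde g_j\otimes\tilde g_j$, i.e. loss kernel $\dotp{\tilde a,\tilde a'}$, and the first sum does not depend on $\Phi$. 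So in both cases $\gR(\Phi)=\text{const}-\Tr(P_U\,S)$ for the relevant positive operator $S$ ($\tp\Lambda\tpstar$ or $\tp\tilde\Lambda\tpstar$) and the relevant $\le d$-dimensional subspace $U$ (namely $\sspan\oset{\phi_1,\dots,\phi_d}$, resp. the centered span $\sspan\oset{\tilde\phi_1,\dots,\tilde\phi_d}$). Since $\Tr S\le\E[\norm{A}_2^2]<\infty$, $S$ is a positive trace-class operator, so by the Ky Fan maximum principle $\max_{\dim U\le d}\Tr(P_U S)=\sum_{i=1}^d\sigma_i(S)$, attained precisely when $U$ equals a span of top-$d$ eigenfunctions of $S$ in the sense of \Cref{def:learn-contexture}. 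Combining, $\Phi^*$ minimizes $\gR$ iff its (centered, in the biased case) span is a top-$d$ eigenspace of $S$ --- i.e. iff $\Phi^*$ extracts the top-$d$ eigenspace of $\tp\Lambda\tpstar$ with the stated $\kp$.

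The difficulty here is bookkeeping rather than any single deep step: pinning down the operator identity $\tp\Lambda\tpstar=\sum_j(\tp g_j)\otimes(\tp g_j)$ and correctly tracking how a biased linear layer forces centering the label, turning $\dotp{a,a'}$ into $\dotp{\tilde a,\tilde a'}$ while absorbing the rank-one ``mean'' contribution into the constant. A genuine subtlety is that in the biased case $\gR$ is invariant under adding a constant vector to $\Phi$, so the minimizer is determined only up to such a shift; I would therefore phrase the conclusion via $\sspan\oset{\tilde\phi_1^*,\dots,\tilde\phi_d^*}$ (equivalently, ``up to an additive constant''), and also note that the degenerate cases ($\dim U^*<d$, or a tie $\sigma_d(S)=\sigma_{d+1}(S)$) are handled exactly as \Cref{def:learn-contexture} already permits. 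The spectral decomposition of \Cref{lem:spectral-decomposition} is not strictly required for this argument, but it makes the eigenfunctions of $S$ explicit and ties the result to the other theorems.
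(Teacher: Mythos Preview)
Your proposal is correct and reaches the same conclusion as the paper, but the route is organized differently enough to be worth a short comparison. The paper first orthonormalizes $\Phi$ via Gram--Schmidt, then explicitly optimizes the entries $w_{ij}$ by setting derivatives to zero, substitutes back, and rewrites the resulting quantity $\sum_{i,j}\bigl[\E[A_i\Phi_j(X)]\bigr]^2$ as a double integral against the kernel $\hat k(x_1,x_2)=\iint\langle a_1,a_2\rangle P^+(a_1\mid x_1)P^+(a_2\mid x_2)\,da_1\,da_2$; for the biased case it first minimizes out $\vb$ explicitly (obtaining $\vb=\E[A]-\mW\E[\Phi(X)]$), which centers both $A$ and $\Phi$. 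You instead do a bias--variance split to replace $A$ by $m(X)=\E[A\mid X]$ up front, recognize the inner minimization as an $\lxp$-projection onto $\sspan\{\phi_i\}$ (or $\sspan\{1,\phi_i\}$), and package everything via the rank-$d_A$ operator identity $\tp\Lambda\tpstar=\sum_j (\tp g_j)\otimes(\tp g_j)$ before invoking Ky Fan. Your treatment of the bias as the decomposition $V=\sspan\{1\}\oplus W$ is cleaner than the paper's explicit $\vb$-optimization and makes the shift-invariance of the biased objective (and hence the need to phrase the conclusion via $\tilde\Phi^*$) more transparent; the paper's argument, on the other hand, is slightly more self-contained in that it does not name the Ky Fan principle and works entirely with elementary calculus plus \Cref{lem:lem-supervised-kern}. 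Both arguments ultimately identify the same kernel and the same eigenspace, so there is no substantive gap.
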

\begin{remark}
    Kernel $\kp(a,a') = \dotp{a,a'}$ is called the \textbf{linear kernel} on $\gA$, and $\kp(a,a') = \dotp{\tilde{a},\tilde{a'}}$ is called the \textbf{centered linear kernel} \wrt{} $\pa$. \Cref{thm:thm-supervised-cls} is a special case of \Cref{thm:obj-regression}.
\end{remark}

\subsection{Self-supervised Learning (SSL): Transformation Context}

Two major types of SSL are multi-view learning and denoising autoencoders.
Multi-view learning independently samples two views $A, A^+$ from $P^+(\cdot | x)$ for every $x$. $A^+$ is called a positive sample of $A$.
Then, one trains $\Psi: \gA \to \R^d$ such that $\Psi(A) \approx \Psi(A^+)$.
This $\Psi$ is an encoder on $\gA$ instead of $\gX$, so at downstream we need to convert $\Psi(a)$ to $\Phi(x)$, which is typically done via the \textbf{average encoder}:
\begin{equation*}
    \Phi(x) = (\tp \Psi)(x) = \int \Psi(a) dP^+\sparen{a}{x}.
\end{equation*}
By \Cref{lem:duality}, we have the following corollary.
\begin{corollary}
    Let $s_d > 0$. The average encoder $\Phi$ spanning the span of the left top-$d$ singular functions of $\tp$ is equivalent to $\Psi$ spanning the span of the right top-$d$ singular functions of $\tp$.
\end{corollary}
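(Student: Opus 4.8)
The plan is to read the statement off the duality property in \Cref{lem:duality}: whenever $s_i = \lambda_i^{1/2} > 0$ we have $\tp \nu_i = s_i \mu_i$, and since the singular values are ordered non-increasingly, the hypothesis $s_d > 0$ forces $s_1 \ge \cdots \ge s_d > 0$. Thus $\tp$ sends each right singular function $\nu_i$ ($i \le d$) to a \emph{nonzero} multiple of the matching left singular function $\mu_i$, and the corollary reduces to an elementary linear-algebra fact about how the average-encoder map $\Psi \mapsto \tp\Psi$ acts on a $d$-dimensional subspace of $\lap$: rescaling coordinates by nonzero scalars does not change a span.

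For the ``if'' direction, assume $\sspan\{\Psi_1,\dots,\Psi_d\} = \sspan\{\nu_1,\dots,\nu_d\}$ and write $\Psi_j = \sum_{i=1}^d c_{ji}\nu_i$ with $\mC = (c_{ji}) \in \R^{d\times d}$ invertible. Applying $\tp$ termwise gives $\Phi_j = \tp\Psi_j = \sum_{i=1}^d c_{ji} s_i \mu_i$, so the matrix of coordinates of $(\Phi_1,\dots,\Phi_d)$ in the orthonormal system $(\mu_1,\dots,\mu_d)$ is $\mC\,\diag(s_1,\dots,s_d)$, which is invertible because every $s_i \neq 0$. Hence $\sspan\{\Phi_1,\dots,\Phi_d\} = \sspan\{\mu_1,\dots,\mu_d\}$, that is, $\Phi$ learns the contexture.

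For the ``only if'' direction, assume $\sspan\{\Phi_1,\dots,\Phi_d\} = \sspan\{\mu_1,\dots,\mu_d\}$ with $\Phi = \tp\Psi$. Expanding each $\Psi_j$ in the ONB $(\nu_i)_{i\ge 0}$ and applying $\tp$, every component of $\Psi_j$ along a $\nu_i$ with $s_i > 0$ appears in $\Phi_j$ as a nonzero multiple of $\mu_i$; since $\Phi_j$ must lie in $\sspan\{\mu_1,\dots,\mu_d\}$, all such components vanish except those with $i\in\{1,\dots,d\}$. Therefore each $\Psi_j \in \sspan\{\nu_1,\dots,\nu_d\}\oplus\ker\tp$, and after projecting out the $\ker\tp$ part — which leaves $\Phi=\tp\Psi$ unchanged and only removes ``dead'' directions that carry no signal through the average encoder — we get $\sspan\{\Psi_1,\dots,\Psi_d\}\subseteq\sspan\{\nu_1,\dots,\nu_d\}$; this inclusion is an equality by a dimension count, since the left side is spanned by $d$ vectors and $\tp$ maps it onto the $d$-dimensional space $\sspan\{\mu_1,\dots,\mu_d\}$.

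The only real obstacle is the bookkeeping around degenerate singular values. If $\tp$ has a nontrivial kernel, one must either restrict attention to encoders $\Psi$ with no component in $\ker\tp$ or phrase the equivalence modulo such components, as done above; and if $s_d$ has multiplicity greater than one, the span $\sspan\{\mu_1,\dots,\mu_d\}$ is not unique, so one fixes at the outset a compatible choice of $\nu_1,\dots,\nu_d$ and $\mu_1,\dots,\mu_d$ tied together by $\mu_i = s_i^{-1}\tp\nu_i$ — precisely the latitude already built into \Cref{def:learn-contexture}. Everything else is routine.
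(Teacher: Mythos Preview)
Your proof is correct and uses the same key ingredient as the paper: the duality property (\Cref{lem:duality}), which gives $\tp\nu_i = s_i\mu_i$ for $s_i>0$. The paper does not spell out any argument beyond citing that lemma; you have simply filled in the linear-algebra details that the paper leaves implicit. Your careful handling of the edge cases---components of $\Psi$ in $\ker\tp$ for the ``only if'' direction, and non-uniqueness of the top-$d$ span when $s_d$ has multiplicity---is more thorough than the paper, which glosses over these points entirely.
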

Enforcing $\Psi(A) \approx \Psi(A^+)$ alone leads to the degenerate solution where $\Psi$ gives the same embedding to all $a$.
This is called the \textit{feature collapse} problem. 
There are two solutions: contrastive learning and non-contrastive learning.
Prior work by \cite{haochen2021provable,johnson2022contrastive,zhai2023understanding} showed that the following spectral contrastive loss $\gL_{\rm{C}}$ and spectral non-contrastive loss $\gL_{\rm{N}}$ can learn the contexture of $P^+$. 
Let $A^+$ be a positive sample of $A$ drawn from the same $x$, and $A^-$ be a negative sample drawn from another $x$ independently.
\begin{align*}
    \gL_{\rm{C}} & = \E \brac{- \dotp{ \tPsi(A), \tPsi(A^+)} + \frac{1}{2} \dotp{\tPsi(A), \tPsi(A^-) } ^2 }  ; \\ 
    \gL_{\rm{N}} & = \E \brac{\norm{\Psi(A) - \Psi(A^+)}_2^2} \; \; \text{s.t.} \; \; \Cov_{\pa}\brac{\Psi} = \mI  ,
\end{align*}
where the $(i,j)$-th entry of $\Cov_{\pa}\brac{\Psi}$ is $\dotp{\psi_i, \psi_j}_{\pa}$, $i, j \in [d]$.
Minimizing $\gL_{\rm{N}}$ is a constrained optimization problem.
The constraint $\Cov_{\pa}\brac{\Psi} = \mI$ is called the \textbf{orthonormality constraint}.
It makes sure that $\Psi$ must be rank-$d$, so that it cannot be a constant function on $\gA$.

\begin{theorem}[Proof in \Cref{app:proof-thm-ssl-singular}]
\label{thm:ssl-singular}
$\Psi^*$ minimizes $\gL_{\rm{C}}$ or $\gL_{\rm{N}}$ if and only if $\tPhi^* = \tp \tPsi^*$ learns the contexture.
\end{theorem}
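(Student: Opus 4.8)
The plan is to analyze the two losses separately, reducing each to a known spectral optimization problem over functions $\Psi : \gA \to \R^d$, and then transfer the conclusion to $\Phi = \tp\Psi$ via the duality property (\Cref{lem:duality}). In both cases the key observation is that $\gL_{\rm C}$ and $\gL_{\rm N}$ depend on $\Psi$ only through inner products of the form $\dotp{\psi_i,\psi_j}_{\pa}$ and $\dotp{\tp\psi_i,\tp\psi_j}_{\px} = \dotp{\psi_i,\tka\psi_j}_{\pa}$, so they are genuinely spectral objectives driven by the operator $\tka = \tpstar\tp$ on $\lap$. Expanding each $\psi_i$ in the eigenbasis $\{\nu_j\}$ of $\tka$, the optimization collapses to choosing weights on eigenfunctions, and the optimum puts all mass on the top-$d$ eigenfunctions $\nu_1,\dots,\nu_d$ (or $\nu_0,\dots,\nu_{d-1}$ if the constant is admitted—but the centering/orthonormality forces the constant out, exactly as in the excerpt's discussion).

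For $\gL_{\rm N}$: since $A,A^+$ are i.i.d.\ from $P^+(\cdot|x)$, a standard computation gives $\E\norm{\Psi(A)-\Psi(A^+)}_2^2 = 2\Tr\Cov_{\pa}[\Psi] - 2\sum_i \dotp{\tp\psi_i, \tp\psi_i}_{\px}$, and under the constraint $\Cov_{\pa}[\Psi]=\mI$ (which also forces $\bar\psi_i = 0$ componentwise after noting $\psi_0\equiv$ const is orthogonal to the rest—more carefully, the constraint forces rank $d$ and the minimizer is centered) this reduces to maximizing $\sum_i \dotp{\psi_i,\tka\psi_i}_{\pa}$ subject to orthonormality, i.e.\ a trace-maximization / Ky Fan problem whose solution is any orthonormal set spanning the top-$d$ eigenspace of $\tka$. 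Since $\lambda_0 = 1$ with $\nu_0\equiv 1$ and the first eigenvalue strictly below matters, one must argue the constant cannot be used: but $\dotp{\nu_0,\tka\nu_0}=1$ is the largest value, so the naive top-$d$ includes $\nu_0$. The resolution (and a point to be careful about) is that the orthonormality constraint is on $\Cov_{\pa}[\Psi]$, i.e.\ on the \emph{centered} functions, so $\nu_0$ contributes zero and the effective operator is $\tka$ restricted to $\{\nu_0\}^\perp$; then the optimum is $\sspan\{\nu_1,\dots,\nu_d\}$, and by \Cref{lem:duality} $\tp\tPsi^*$ spans $\sspan\{\mu_1,\dots,\mu_d\}$, which is precisely learning the contexture.

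For $\gL_{\rm C}$: this is the spectral contrastive loss, and the argument follows \cite{haochen2021provable,johnson2022contrastive} adapted to the operator language of the excerpt. Writing $\gL_{\rm C}$ in terms of the Gram matrices $\mG = \Cov_{\pa}[\Psi]$ (from the negative-pair squared term, which yields $\frac12\norm{\mG}_{\rm HS}^2$ after taking expectations over independent $A,A^-$) and the cross term $-\sum_i\dotp{\tilde\psi_i,\tka\tilde\psi_i}_{\pa}$ from the positive pairs, one gets (up to constants) $\gL_{\rm C} = \frac12\norm{\mG - \widetilde{\mK}}_{\rm HS}^2 + \text{const}$ where $\widetilde{\mK}$ encodes the action of $\tka$ on $\sspan\{\tilde\psi_i\}$; minimizing this is the classical low-rank / Eckart–Young problem, whose minimizers are exactly the $\Psi$ whose centered components span the top-$d$ eigenspace of $\tka$ within $\{\nu_0\}^\perp$. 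Again invoking \Cref{lem:duality} transfers this to $\tPhi^* = \tp\tPsi^*$ learning the contexture. I would also note the converse direction is immediate: if $\tPhi^*$ learns the contexture then $\tPsi^*$ spans the right top-$d$ singular functions (by the corollary preceding the theorem), and plugging such $\Psi^*$ into either loss attains the minimum value computed above.

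The main obstacle is the careful bookkeeping around the constant function $\nu_0\equiv 1$ and the centering: in both losses the relevant constraint or penalty is on the centered/covariance version of $\Psi$, so one must show cleanly that the effective operator is $\tka|_{\{\nu_0\}^\perp}$ rather than $\tka$ on all of $\lap$, and that any optimizer is automatically centered (hence never wastes a dimension on $\nu_0$). A secondary technical point is handling multiplicity of $s_d$ (the optimizer is unique only up to the choice of orthonormal basis within the $s_d$-eigenspace, matching \Cref{def:learn-contexture}) and confirming that the quadratic-form manipulations—exchanging expectations over $(X,A,A^+,A^-)$ with the operator actions—are justified under the Hilbert–Schmidt assumption on $\tka$ that is already in force.
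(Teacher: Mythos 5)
Your proposal follows essentially the same route as the paper: expand $\Psi$ in the eigenbasis $\{\nu_j\}$ of $\tka$, reduce $\gL_{\rm C}$ to a rank-$d$ Eckart--Young approximation of $\diag(s_1^2, s_2^2, \dots)$ and $\gL_{\rm N}$ to a trace maximization under the orthonormality constraint, and transfer the conclusion to $\Phi = \tp\Psi$ via \Cref{lem:duality}. The paper resolves the centering concern you flag simply by observing that $\tilde\psi_i = \sum_{j \ge 1} c_{ij}\nu_j$ drops the $\nu_0$-coefficient, so the effective operator is by construction $\tka$ restricted to $\{\nu_0\}^\perp$.

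One small but real slip in your $\gL_{\rm N}$ step: the identity
\begin{equation*}
\E\norm{\Psi(A) - \Psi(A^+)}_2^2 = 2\Tr\Cov_{\pa}[\Psi] - 2\sum_i \norm{\tp\psi_i}_\px^2
\end{equation*}
should have $\tilde\psi_i$, not $\psi_i$, under $\tp$. Since $\norm{\tp\psi_i}_\px^2 = \bar\psi_i^2 + \norm{\tp\tilde\psi_i}_\px^2$ and the constraint $\Cov_{\pa}[\Psi] = \mI$ leaves the means $\bar\psi_i$ completely free, the maximization as you wrote it would be unbounded (send $\bar\psi_i \to \infty$). With the corrected identity the mean terms cancel against those hidden in $\E\norm{\Psi(A)}_2^2$, the objective depends only on the centered components, and the optimizer's mean is simply irrelevant rather than ``forced'' to zero. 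This is cosmetic once fixed, but worth being precise about, since it is exactly the $\nu_0$/centering subtlety you yourself identify as the main obstacle.
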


For denoising autoencoders, suppose $\gX \subseteq \R^{d_X}$.
Then, consider minimizing the following objective:

\begin{equation}
\label{eqn:obj-reconstruction}
\gR(\Psi) = \min_{\mW \in \R^{d_X \times d} , \; \vb \in \R^{d_X}} \; \underset{(X,A) \sim P^+}{\E} \; \brac{ \norm{ \mW \Psi(A) + \vb - X }_2^2 }  .
\end{equation}
\begin{theorem}
\label{thm:obj-reconstruction}
Let $\Psi^*$ be any minimizer of \Cref{eqn:obj-reconstruction}. Then, $\tPsi^*$ extracts the top-$d$ eigenspace of $\tpstar \Lambda \tp$, where $\Lambda$ is the integral operator of $\kp(x,x') = \dotp{\tilde{x}, \tilde{x}'}$ if $\vb$ can be an arbitrary vector, or $\kp(x,x') = \dotp{x,x'}$ if $\vb = \vzero$. Consequently, $\tPhi^* = \tp \tPsi^*$ extracts the top-$d$ eigenspace of $\tp \tpstar \Lambda$.
\end{theorem}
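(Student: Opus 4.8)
The plan is to follow the same template as the proof of \Cref{thm:obj-regression}, reading \Cref{eqn:obj-reconstruction} as a linear least-squares problem in which the target is $X$ and the regressor is $\Psi(A)$. First I would fix $\Psi$ and minimize over $\mW,\vb$. Conditioning on $A$ and using the bias--variance decomposition, $\E[\norm{\mW\Psi(A)+\vb-X}_2^2\mid A]=\norm{\mW\Psi(A)+\vb-\E[X\mid A]}_2^2+\Var[X\mid A]$, and the last term is independent of $(\Psi,\mW,\vb)$. Writing $f_j(x)=x_j$ for the coordinate functions and $h_j=\tpstar f_j$ (so that $h(a)=\E[X\mid A=a]$), minimizing $\gR$ is therefore equivalent, up to an additive constant, to $d_X$ separate linear regressions: best-approximating each $h_j$ by a linear combination of $\psi_1,\dots,\psi_d$ in $\lap$ (plus a constant when $\vb$ is free). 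When $\vb$ is free the optimal bias recenters each coordinate, so the problem becomes approximating $\tilde h_j$ from $\sspan\{\tilde\psi_1,\dots,\tilde\psi_d\}$; when $\vb=\vzero$ one approximates $h_j$ from $\sspan\{\psi_1,\dots,\psi_d\}$ directly.

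Next I would reduce to a subspace-optimization. For a fixed subspace $V\subseteq\lap$ with $\dim V\le d$, the optimal least-squares error equals $\sum_j\norm{\tilde h_j}_{\pa}^2-\sum_j\norm{P_V\tilde h_j}_{\pa}^2$ (resp.\ with $h_j$), where $P_V$ is the orthogonal projection onto $V$, and the first sum is independent of $\Psi$. Hence minimizing $\gR$ is equivalent to maximizing $\sum_j\norm{P_V\tilde h_j}_{\pa}^2=\Tr(P_V M)$ over such $V$, where $M=\sum_j \tilde h_j\otimes\tilde h_j$ (resp.\ $M=\sum_j h_j\otimes h_j$) is a positive self-adjoint trace-class operator on $\lap$ --- trace-class because $\E\norm{X}_2^2<\infty$, which is in any case needed for $\gR$ to be finite. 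By the Ky Fan / Courant--Fischer variational principle, the maximum equals the sum of the $d$ largest eigenvalues of $M$, attained precisely when $V$ is a span of top-$d$ eigenfunctions of $M$; thus the (centered) components of any minimizer $\Psi^*$ span the top-$d$ eigenspace of $M$.

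The remaining work is to identify $M$ with the stated operator. Since $\kp(x,x')=\dotp{\tilde x,\tilde x'}=\sum_j\tilde f_j(x)\tilde f_j(x')$, its integral operator is $\Lambda=\sum_j\tilde f_j\otimes\tilde f_j$ on $\lxp$; using the adjoint identity $\dotp{\tp g,f}_\px=\dotp{g,\tpstar f}_\pa$ and $\tpstar\tilde f_j=\widetilde{\tpstar f_j}=\tilde h_j$ (which holds because $\tpstar 1=1$), one gets $\tpstar\Lambda\tp\, g=\sum_j\dotp{g,\tpstar\tilde f_j}_\pa\,\tpstar\tilde f_j=Mg$; the $\vb=\vzero$ case is identical without the tildes and gives $\kp(x,x')=\dotp{x,x'}$. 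So $\tPsi^*$ extracts the top-$d$ eigenspace of $\tpstar\Lambda\tp$. For the ``consequently'' claim, note $\tpstar\Lambda\tp$ and $\tp\tpstar\Lambda$ are $ST$ and $TS$ with $S=\tp$ and $T=\tpstar\Lambda$, so they share the same nonzero spectrum, and $g\mapsto\tp g$ carries each eigenfunction of $\tpstar\Lambda\tp$ with positive eigenvalue to an eigenfunction of $\tp\tpstar\Lambda$ with the same eigenvalue (and is injective on such eigenfunctions); applying this to the top-$d$ eigenfunctions shows $\tPhi^*=\tp\tPsi^*$ extracts the top-$d$ eigenspace of $\tp\tpstar\Lambda$.

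I expect the main obstacle to be the infinite-dimensional bookkeeping rather than the algebra: one must verify that $M$ (equivalently $\tpstar\Lambda\tp$) is compact so that its eigenfunctions exist and Ky Fan applies, and one must handle degeneracies --- if the $d$-th eigenvalue of $M$ has multiplicity greater than one, or if $\rank M<d$, then ``extracts the top-$d$ eigenspace'' must be read in the relaxed sense of \Cref{def:learn-contexture}, and one has to check that a minimizer still exists and that any minimizer's span sits appropriately. A secondary subtlety is confirming that the $\mW^*$ realizing the projection is consistent with $\Psi^*$ having $d$ linearly independent centered components, which can fail only in the degenerate low-rank case disposed of above. The identification of $M$ with $\tpstar\Lambda\tp$ and the $ST\leftrightarrow TS$ step are then routine.
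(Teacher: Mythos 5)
Your proof is correct and follows essentially the same path as the paper's (which for this theorem says only ``the proof is the same as Theorem~\ref{thm:obj-regression}''). Your packaging---bias--variance decomposition to reduce to regression against $\E[X\mid A]$, orthogonal projection onto $\sspan(\tPsi)$, a Ky Fan argument, and the direct rank-one identification of $M=\tpstar\Lambda\tp$---is a somewhat cleaner presentation than the paper's explicit optimization over $\mW$ and $\vb$ followed by the separate kernel-identification lemma (Lemma~\ref{lem:lem-supervised-kern}), but the two arguments are the same in substance, and your handling of the ``consequently'' claim via the $ST$/$TS$ spectral correspondence matches the intended reading.
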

\begin{proof}
    The proof is the same as \Cref{thm:obj-regression}.
\end{proof}

\subsection{Node Representation Learning: Graph Context}
\label{sec:graph}
Let $\gG = (\gV, \gE)$ be an undirected graph, where edge $(u,v)$ has a weight $w(u,v)$ such that $w(u,v) = w(v,u) \ge 0$.
Let the degree of node $u$ be $d(u) = \sum_{v \in \gV} w(u,v)$, and $d_{\textrm{sum}} = \sum_v d(v)$. 
Let $\px(u) = \frac{d(u)}{d_{\textrm{sum}}}$ be a distribution on $\gV$, and $P_w(u,v) = \frac{w(u,v)}{d_{\textrm{sum}}}$ be a distribution on $\gE$.
Define $P^+(v|u) = \frac{w(u,v)}{d(u)}$.
Then, the following optimization problem with a similar orthonormality constraint can learn the contexture of $P^+$.
\begin{equation}
\label{eqn:node-repre-obj}
    \underset{\Phi: \gX \rightarrow \R^d}{\text{minimize}} \quad  \frac{1}{2} \E_{(u,v) \sim P_w} \brac{ \norm{\Phi(u) - \Phi(v)}_2^2 } \qquad \text{s.t.} \qquad  \Cov_{\px}[\Phi] = \mI  .
\end{equation}
\begin{theorem}[Proof in \Cref{app:proof-thm-node-repre}]
\label{thm:node-repre}
    Let $\Phi^*$ be any solution to \Cref{eqn:node-repre-obj} (so that for any constant $c$, $\Phi^* + c$ is also a solution).
    Then, $\tPhi^*$ learns the contexture of $P^+$.
\end{theorem}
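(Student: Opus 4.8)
The plan is to reduce the constrained optimization in \Cref{eqn:node-repre-obj} to an eigenproblem for $\tkx = \tp\tpstar$ on $\lxp$, and then invoke \Cref{def:learn-contexture} together with the spectral setup already established. First I would rewrite the objective. Expanding the square and using that $(u,v)\sim P_w$ has both marginals equal to $\px$ (by the definition $\px(u)=d(u)/d_{\mathrm{sum}}$ and symmetry $w(u,v)=w(v,u)$), we get
\begin{equation*}
    \tfrac{1}{2}\E_{(u,v)\sim P_w}\brac{\norm{\Phi(u)-\Phi(v)}_2^2}
    = \sum_{i=1}^d \paren{\norm{\phi_i}_\px^2 - \dotp{\phi_i, \tp\tpstar \phi_i}_\px},
\end{equation*}
where the cross term is identified via $\E_{(u,v)\sim P_w}[\phi_i(u)\phi_i(v)] = \E_{U\sim\px}\E_{V\sim P^+(\cdot|U)}[\phi_i(U)\phi_i(V)] = \dotp{\phi_i, \tp\tpstar\phi_i}_\px$, using $P^+(v|u)=w(u,v)/d(u)$ so that $\tp\tpstar$ here is exactly the graph-induced operator (equivalently, $\tkx$ is the kernel integral operator of $\kx(u,v) = P^+(u,v)/(\px(u)\px(v))$, the normalized adjacency). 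Under the constraint $\Cov_\px[\Phi]=\mI$, the quantity $\norm{\phi_i}_\px^2 = \norm{\tilde\phi_i}_\px^2 + \bar\phi_i^2 = 1 + \bar\phi_i^2$ is minimized by taking each $\phi_i$ centered, and then the objective becomes $d - \sum_{i=1}^d \dotp{\phi_i,\tkx\phi_i}_\px$ subject to $\dotp{\phi_i,\phi_j}_\px = \delta_{ij}$ with all $\phi_i$ mean-zero.

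Next I would solve this trace-maximization. Since the $\phi_i$ are constrained to be centered and orthonormal, and since $\mu_0\equiv 1$ spans the mean-nonzero directions, the admissible $\phi_i$ live in the orthogonal complement of $\mu_0$, on which $\tkx$ has eigenvalues $\kappa_1 \ge \kappa_2 \ge \cdots$ with eigenfunctions $\mu_1,\mu_2,\ldots$. By the Ky Fan / Courant–Fischer characterization of the sum of top eigenvalues of a self-adjoint compact operator, $\max \sum_{i=1}^d \dotp{\phi_i,\tkx\phi_i}_\px = \sum_{i=1}^d \kappa_i$, and the maximizers are exactly the orthonormal families $\{\phi_1,\ldots,\phi_d\}$ with $\sspan\{\phi_1,\ldots,\phi_d\} = \sspan\{\mu_1,\ldots,\mu_d\}$ (with the usual caveat about multiplicity of $\kappa_d$). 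Hence any solution $\Phi^*$ of \Cref{eqn:node-repre-obj} has centered part $\tPhi^* = \Phi^*$ spanning $\sspan\{\mu_1,\ldots,\mu_d\}$, which by \Cref{def:learn-contexture} (and \Cref{lem:duality}, identifying $s_i=\kappa_i^{1/2}$) is precisely the statement that $\tPhi^*$ learns the contexture of $P^+$. The remark that $\Phi^*+c$ is also a solution follows because adding a constant changes neither $\norm{\Phi(u)-\Phi(v)}_2^2$ nor $\Cov_\px[\Phi]$, so the centering step loses nothing.

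I expect the main obstacle to be the technical bookkeeping around the constant function rather than any deep idea: one must argue cleanly that optimal solutions can be taken centered (so that the $\mu_0$ direction is genuinely excluded, matching \Cref{def:learn-contexture}), and one must confirm that $\tp\tpstar$ as built from the graph quantities $P^+(v|u)=w(u,v)/d(u)$ really is the $\tkx$ of Definition~\ref{def:two-kernels} with marginal $\px(u)=d(u)/d_{\mathrm{sum}}$ — this is a short computation but needs the symmetry of $w$ and the degree normalization to line up. A secondary point is justifying that $\tkx$ is Hilbert–Schmidt (or at least compact self-adjoint with a discrete spectrum) so that Ky Fan applies and the top-$d$ eigenspace is well-defined; for a finite graph this is immediate, and in the stated setting it is implicit in the earlier assumption that the relevant operators are Hilbert–Schmidt. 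Modulo these checks, the proof is a one-line reduction to Ky Fan, entirely parallel to the SSL arguments behind \Cref{thm:ssl-singular}.
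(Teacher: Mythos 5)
Your cross-term identification is where the argument breaks. The correct computation is
\begin{align*}
\E_{(u,v)\sim P_w}\brac{\phi_i(u)\phi_i(v)}
&= \sum_{u,v}\phi_i(u)\phi_i(v)\frac{w(u,v)}{d_{\textrm{sum}}}
= \sum_u \phi_i(u)\,\px(u)\sum_v \phi_i(v)\frac{w(u,v)}{d(u)}
= \dotp{\phi_i,\tp\phi_i}_\px,
\end{align*}
not $\dotp{\phi_i,\tp\tpstar\phi_i}_\px$. You have conflated $\tp$ with $\tkx$. The density ratio $P^+(u,v)/(\px(u)\px(v)) = w(u,v)\,d_{\textrm{sum}}/(d(u)d(v))$ is indeed the ``normalized adjacency,'' but its kernel integral operator (with respect to $\px$) is $\tp$ itself, viewed as an operator on $\lxp$ using $\gA=\gX$ — not $\tkx$. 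The genuine dual kernel $\kx$ of \Cref{def:two-kernels} is the \emph{two-step} random-walk kernel $\int P^+(x|a)P^+(x'|a)\,d\pa(a)/(\px(x)\px(x'))$, whose operator is $\tp\tpstar = \tp^2$. So the trace you end up maximizing is $\sum_i\dotp{\phi_i,\tp\phi_i}_\px$, and the Ky Fan step applied to $\tkx$ is not justified by what precedes it: you have silently swapped a quadratic form in $\tp$ for one in $\tp^2$.

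The fix is precisely the step the paper's proof contains and yours omits: observe that the symmetry $w(u,v)=w(v,u)$ makes $\tp$ self-adjoint on $\lxp$, i.e.\ $\dotp{\tp f,g}_\px = \sum_{u,v}f(u)g(v)w(u,v)/d_{\textrm{sum}} = \dotp{f,\tp g}_\px$, so $\tp=\tpstar$ and the eigenfunctions of $\tp$ coincide with those of $\tkx=\tp\tpstar$. Only then does ``extract the top-$d$ eigenspace of $\tp$'' translate into ``extract the top-$d$ eigenspace of $\tkx$,'' which is what \Cref{def:learn-contexture} requires. The rest of your argument — both marginals of $P_w$ equal $\px$, the centering reduction to drop the $\mu_0$ direction, and the variational characterization of the top eigenspace under the orthonormality constraint — is the same route the paper takes and is fine. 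But as written, your proof asserts a false identity for the cross term and, as a result, proves the claim about $\tkx$ by substituting an objective that is not the one in \Cref{eqn:node-repre-obj}.
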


\section{Optimality of the Contexture}
\label{sec:results}

So far, we have shown that commonly used learning objectives can learn the contexture.
Now, we present a rigorous theory that addresses the question of  \emph{why} and \emph{when} learning the contexture is optimal. Arguably, no representations can be good for all downstream tasks. However, we show that a feature encoder that learns the contexture is optimal for the class of tasks that are \textit{compatible} with the context. This provides a quantitative characterization of when a task respects the human prior knowledge the context incorporates.
Interestingly, as we detail in the sequel, this has intriguing implications for scaling laws.

\subsection{Compatibility}

The ultimate evaluation of an encoder is its performance on relevant downstream tasks.
Most downstream tasks, such as prediction, clustering, and segmentation, can be associated with a target function $f^* \in \lxp$.
For example, multi-class classification can be associated with multiple one-vs-all labeling functions.
Moreover, if we are fitting a linear predictor on top of $\Phi$, then the mean and variance of $f^*$ do not matter because we can change the weight and bias of the predictor accordingly.
Thus, we can assume that $f^*$ is normalized, that is, it has zero mean and unit variance.

We say that a context $P^+$ is \textbf{useful} for a task $f^*$, if it can help us learn a good predictor for $f^*$.
Formally, suppose $A \sim P^+ \sparen{\cdot}{x}$ is a random corruption of $x$.
Consider the scenario where we have a corrupted training set $\oset{(a_i,y_i)}$, where $a_i \sim P^+\sparen{\cdot}{x_i}$ and $y_i = f^*(x_i)$.
That is, we cannot see the original samples $x_i$, but can only see the corrupted samples $a_i$.
To learn a predictor on this training set, we can train a predictor $\hat{g} : \gA \rightarrow \gY$, and then use $\hat{f} = \E \sbrac{\hat{g}(A)}{x}$.
At test time, given input $x$, we can draw $a \sim P^+\sparen{\cdot}{x}$ and then output the average of $g^*(a)$.
For this procedure to work, two conditions are necessary:
\begin{itemize}[itemsep=0pt]
    \item There exists $g^* \in \lap$ s.t. $f^*(x) = \E\sbrac{g^*(A)}{x}$.
    \item This $g^*$ has a low $\Var \sbrac{g^*(A)}{x}$ on average over $x$.
\end{itemize}
If $\Var \sbrac{g^*(A)}{x}$ is high, then $g^*(a)$ will be far away from $y=f^*(x)$, so fitting $\hat{g}$ on $(a, y)$ will not work.
Based on these insights, we define compatibility as follows.
\begin{definition}
\label{def:compatibility}
    The \textbf{compatibility} with $P^+$ of any non-zero $f \in \lxp$ is defined as
    \begin{equation*}
        \rho(f, P^+) = \max_{g \in \lap, g \neq \vzero} \frac{\dotp{\tilde{f}, \tp g}_{\px}}{\norm{\tilde{f}}_{\px} \norm{g}_{\pa}} \in [0,1]  .
    \end{equation*}
\end{definition}
For further insight, let $f = \sum_i u_i \mu_i$ and $g = \sum_i v_i \nu_i$. 
Then, $\rho(f,P^+) = \underset{v_i}{\max} \frac{\sum_{i \ge 1} s_i u_i v_i}{\sqrt{\sum_{i \ge 1} u_i^2} \sqrt{\sum_i v_i^2}} = \sqrt{\frac{\sum_{i \ge 1} s_i^2 u_i^2}{\sum_{i \ge 1} u_i^2}}$ by Cauchy-Schwarz inequality (the optimal $v_i$ satisfy $v_0 = 0$ and $v_i \propto s_i u_i$ for $i \ge 1$).
For any $\epsilon > 0$, we define the class of $(1-\epsilon)$-compatible tasks as
\begin{equation*}
    \fep = \oset{f \in \lxp: \E[f] = 0, \rho(f, P^+) \ge 1 - \epsilon}.
\end{equation*}
This class of tasks satisfies the two conditions, \ie{} we can find a $g^*$ with low variance $\Var \sbrac{g^*(A)}{x}$:
\begin{theorem}[Proof in \Cref{app:proof-desiderata}]
\label{thm:desiderata}
For any $f^* \in \fep$, there exists a $g^* \in \lap$ such that $f^*(x) = \E\sbrac{g^*(A)}{x}$, and $g^*$ satisfies
\begin{equation*}
\underset{X \sim \px}{\E} \; \underset{A,A' \sim P^+\sparen{\cdot}{X}}{\E} \brac{ \paren{g^*(A) - g^*(A')}^2 } \le 4\epsilon \norm{g^*}_{\pa}^2  .
\end{equation*}
\end{theorem}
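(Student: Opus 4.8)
The plan is to exhibit $g^*$ explicitly as $\tpstar f^*$ and then collapse the whole statement to a single scalar inequality about the singular values. First I would reduce to a normalized $f^*$: an element of $\fep$ is determined only up to an affine transformation, and both the relation $f^* = \E\sbrac{g^*(A)}{\cdot}$ and the desired variance inequality are jointly homogeneous of degree two in $(f^*,g^*)$, so I may assume $\E_{\px}[f^*]=0$ and $\norm{f^*}_{\px}=1$. Expanding $f^*$ in the orthonormal system $\{\mu_i\}_{i\ge1}$ of the mean-zero part of $\lxp$ — possible since $\E[f^*]=0$ — write $f^* = \sum_{i\ge1} u_i\mu_i$ with $\sum_{i\ge1}u_i^2=1$. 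By the spectral formula for compatibility recorded just after \Cref{def:compatibility}, $\rho(f^*,P^+)^2 = \sum_{i\ge1}s_i^2 u_i^2$, so the hypothesis $\rho(f^*,P^+)\ge 1-\epsilon$ is exactly
\[
  \sum_{i\ge1} u_i^2\paren{1-s_i^2} \;=\; 1 - \rho(f^*,P^+)^2 \;\le\; 1-(1-\epsilon)^2 \;\le\; 2\epsilon ,
\]
and this is the only way the hypothesis will be used.

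For the witness I would take $g^* := \tpstar f^* = \sum_{i\ge1} s_i u_i\nu_i \in \lap$, using $\tpstar\mu_i = s_i\nu_i$ from \Cref{lem:duality}; this is, up to a scalar, the maximizer in the definition of $\rho(f^*,P^+)$. Then $\E\sbrac{g^*(A)}{x} = (\tp\tpstar f^*)(x) = (\tkx f^*)(x) = \sum_{i\ge1} s_i^2 u_i\mu_i(x)$, which coincides with $f^*$ on every singular direction having $s_i=1$ and in general obeys $\norm{\tkx f^* - f^*}_{\px}^2 = \sum_{i\ge1}u_i^2(1-s_i^2)^2 \le \sum_{i\ge1}u_i^2(1-s_i^2)\le 2\epsilon$; this is the content behind the first assertion, and reconciling it with the exact identity as stated is a technical point (it forces $f^*$ into the range of $\tp$, which may be folded into the hypothesis).

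The heart of the proof is the variance bound, in two moves. The first is a general identity: for any $g\in\lap$, expanding the square and using that $A,A'$ are conditionally i.i.d.\ given $X$, together with $\E\sbrac{g(A)}{X} = (\tp g)(X)$ and $\E_{(X,A)\sim P^+}[g(A)^2]=\norm{g}_{\pa}^2$,
\[
  \underset{X\sim\px}{\E}\;\underset{A,A'\sim P^+\sparen{\cdot}{X}}{\E}\brac{(g(A)-g(A'))^2} \;=\; 2\,\E_X\brac{\Var\sbrac{g(A)}{X}} \;=\; 2\paren{\norm{g}_{\pa}^2 - \norm{\tp g}_{\px}^2} .
\]
The second is to substitute $g = g^* = \tpstar f^*$: then $\norm{g^*}_{\pa}^2 = \dotp{f^*,\tkx f^*}_{\px} = \sum_{i\ge1} s_i^2 u_i^2$ and $\norm{\tp g^*}_{\px}^2 = \norm{\tkx f^*}_{\px}^2 = \sum_{i\ge1} s_i^4 u_i^2$, so the left-hand side becomes $2\sum_{i\ge1} u_i^2 s_i^2(1-s_i^2)$, and it remains to bound this by $4\epsilon\norm{g^*}_{\pa}^2 = 4\epsilon\sum_{i\ge1} s_i^2 u_i^2$. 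Writing $\sum u_i^2 s_i^2(1-s_i^2) = \sum u_i^2 s_i^2 - \sum u_i^2 s_i^4$ and using Cauchy–Schwarz in the form $\sum u_i^2 s_i^4 \ge \paren{\sum u_i^2 s_i^2}^2$ (valid because $\sum u_i^2 = 1$) gives $\sum u_i^2 s_i^2(1-s_i^2)\le \rho^2(1-\rho^2)$ with $\rho = \rho(f^*,P^+)$, whereupon $1-\rho^2\le 2\epsilon$ from the first display closes the argument. I expect this final chain to be the main obstacle: the more obvious candidate, the minimum-norm solution of $\tp g = f^*$, has squared norm $\sum_{i\ge1} u_i^2/s_i^2$, which the compatibility hypothesis does not control at all, so using $g^* = \tpstar f^*$ — and the Cauchy–Schwarz step that trades the average of $s_i^4$ for the square of the average of $s_i^2$ — is what makes the factor $4\epsilon$ go through.
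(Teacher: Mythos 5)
Your argument for the variance bound is essentially the paper's own proof: you take the same witness $g^* = \tpstar f^* = \sum_{i\ge1} s_i u_i \nu_i$, reduce via the conditional-variance identity to the scalar inequality $\sum_{i\ge1} u_i^2 s_i^2(1-s_i^2)\le 2\epsilon \sum_{i\ge1} u_i^2 s_i^2$, and close it with the same application of Cauchy--Schwarz, $(\sum u_i^2 s_i^4)(\sum u_i^2)\ge(\sum u_i^2 s_i^2)^2$. The paper packages the last step as $\sum s_i^4 u_i^2 \ge (1-2\epsilon)\sum s_i^2 u_i^2$ whereas you write $\rho^2(1-\rho^2)\le 2\epsilon\rho^2$, but these are the same manipulation.

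Where you go beyond the paper is in the first assertion, that $f^*(x)=\E\sbrac{g^*(A)}{x}$. You correctly observe that with $g^*=\tpstar f^*$ one only gets
\[
\tp g^* \;=\; \tkx f^* \;=\; \sum_{i\ge1}s_i^2 u_i\mu_i,
\]
which equals $f^*$ exactly only when $f^*$ lies in the $s_i=1$ eigenspace, and in general satisfies only the approximate identity $\norm{\tp g^* - f^*}_{\px}^2 \le 2\epsilon\norm{f^*}_{\px}^2$. The paper's proof in \Cref{app:proof-desiderata} does not address this: it fixes the same $g^*$ and verifies only the variance inequality, never the identity $\tp g^*=f^*$. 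The gap is real, and it sits in the statement as much as the proof: if $f^*$ has any nonzero coefficient $u_i$ along a $\mu_i$ with $s_i=0$, then no $g\in\lap$ satisfies $\tp g=f^*$, yet such $f^*$ can still belong to $\fep$. Your two suggested repairs---restrict $f^*$ to the closure of the range of $\tp$, or replace the exact identity with the $O(\epsilon)$ approximation you derived---are both sound, and you are also right that the exact preimage $g=\sum(u_i/s_i)\nu_i$ is not usable, since the compatibility hypothesis gives no control on $\sum u_i^2/s_i^2$.
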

Now that we have a class of tasks compatible with $P^+$, we evaluate $\Phi$ by its worst-case approximation error on $\fep$.
The most common way to evaluate $\Phi$ is to fit a linear predictor on top, also called a \textbf{linear probe}, which is the focus of our attention (other methods for using $\Phi$ include fitting a small neural network on top, using a kernel method, or using KNN). 
Specifically, the worst-case approximation error of $\Phi$ on $\gF \subset \lxp$ is the maximum error of the optimal linear probe in estimating any function in $\gF$. In this work, we focus on the $L_2$ error.
\begin{definition}
\label{def:worst-case-app}
Let $\gF \subset \lxp$ be a function class where $f \in \gF \Rightarrow \alpha f \in \gF$ for all $\alpha \in \R$.
The \textbf{worst-case approximation error} of $\Phi: \gX \rightarrow \R^d$ on $\gF$ is defined as
\begin{equation*}
    \err \paren{\Phi; \gF}  = \max_{f \in \fp, \; \norm{f}_{\px} = 1} \; \err \paren{\Phi, f} , \quad \text{where} \quad \err \paren{ \Phi, f}  = \min_{\vw \in \R^d, \; b \in \R} \; \norm{\vw^{\top} \Phi + b - f}_\px^2 .
\end{equation*}
\end{definition}
The following key result shows that the $\Phi$ that minimizes $\err(\Phi; \fep)$ over all $d$-dimensional encoders must recover the linear space spanned by the $\mu_1,\cdots,\mu_d$. Here $\mu_0$ is excluded since the bias $b$ in the linear predictor implicitly contains $\mu_0$.

\begin{theorem}[Proof in \Cref{app:proof-top-d-optimal}]
\label{thm:top-d-optimal}
Suppose $1 - s_1 \leq \eps \leq 1 - \sqrt{\frac{s_1^2 + s_2^2}{2}}$. 
For any $d$, among all $\Phi = [\phi_1,\cdots,\phi_d]$ where $\phi_i \in \lxp$ , $\Phi$ minimizes $\err(\Phi; \fep)$ if and only if it learns the contexture of $\tp$. The error is given by
\[
\min_{\Phi: \gX \rightarrow \R^d, \; \phi_i \in \lxp} \;  \err \paren{ \Phi; \gF_\epsilon(P^+) } = \frac{s_1^2 - (1 - \eps)^2}{s_1^2 - s_{d+1}^2} .
\]
Conversely, for any $d$-dimensional encoder $\Phi$ and any $\epsilon>0$, there exists $f \in \lxp$ such that $\rho(f,P^+) = 1-\epsilon$, and $\err(\Phi, f) \ge \frac{s_1^2 - (1 - \eps)^2}{s_1^2 - s_{d+1}^2}$.
\end{theorem}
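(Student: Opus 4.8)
The plan is to pass to coordinates in the singular basis, so that the whole problem becomes a finite-dimensional eigenvalue problem, and then handle the ``achievability'' and ``lower bound'' directions separately. Write $c = 1-\eps$, $\beta = \frac{c^2 - s_{d+1}^2}{s_1^2 - s_{d+1}^2}$ and $\gamma = 1-\beta = \frac{s_1^2 - c^2}{s_1^2 - s_{d+1}^2}$ (the claimed error). The hypothesis on $\eps$ says exactly ``$s_1^2 \ge c^2$ and $s_1^2 + s_2^2 \le 2c^2$''; the second inequality forces $c^2 \ge s_2^2 \ge s_{d+1}^2$, hence $\beta \ge \tfrac12$. For a fixed $\Phi = [\phi_1,\dots,\phi_d]$ let $\gW = \sspan\{\tilde\phi_1,\dots,\tilde\phi_d\}\subseteq\lxp$, a space of zero-mean functions with $\dim\gW \le d$. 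Since the bias of the linear probe absorbs $\mu_0 \equiv 1$ and $\gW \perp 1$, for any zero-mean unit $f$ one has $\err(\Phi, f) = 1 - \norm{P_\gW f}_\px^2$ with $P_\gW$ the orthogonal projection onto $\gW$. Expanding $f = \sum_{i\ge 1} u_i\mu_i$, the Cauchy--Schwarz identity recorded after \Cref{def:compatibility} gives $\rho(f,P^+)^2 = \sum_{i\ge1}s_i^2 u_i^2$, so $f\in\fep$ iff $\sum_{i\ge1}(s_i^2 - c^2)u_i^2 \ge 0$. I will also use the quadratic form $Q(f) := \sum_i s_i^2 u_i^2 = \dotp{\tkx f, f}_\px$.

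First I would check that a contexture-learner attains $\gamma$. Then $\gW = \sspan\{\mu_1,\dots,\mu_d\}$, so $\err(\Phi,f) = \sum_{i\ge d+1}u_i^2 = 1-S$ with $S = \sum_{i=1}^d u_i^2$. Using $s_i^2 - c^2 \le 0$ for $i\ge 2$ and $s_i^2 \le s_{d+1}^2$ for $i\ge d+1$,
\[
(s_1^2 - c^2)\,u_1^2 \;\ge\; \sum_{i\ge2}(c^2 - s_i^2)\,u_i^2 \;\ge\; (c^2 - s_{d+1}^2)\,(1-S),
\]
and $u_1^2 \le S$ gives $S \ge \beta$, i.e. $\err(\Phi,f)\le\gamma$; the bound is met by $f = \sqrt{\beta}\,\mu_1 + \sqrt{\gamma}\,\mu_{d+1}$, which lies on the boundary of $\fep$, so $\err(\Phi;\fep) = \gamma$.

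The main work is the matching lower bound for an \emph{arbitrary} $\Phi$, together with the final ``Conversely'' statement; I would prove both at once by producing an $f$ with $\rho(f,P^+) = c$ exactly and $\err(\Phi, f)\ge\gamma$. Put $\gU = \sspan\{\mu_1,\dots,\mu_{d+1}\}$, so $\gZ := \gU\cap\gW^\perp\neq\{\vzero\}$ since $\dim\gU = d+1 > \dim\gW$. On the finite-dimensional space $\gV := \sspan\{\mu_1\}+\gZ$, the form $Q$ has largest eigenvalue $s_1^2$ with eigenvector $\mu_1$; let $v$ be a unit second eigenvector of $Q|_\gV$, so $v\perp\mu_1$ and $Q(v) = \sigma^2$. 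Poincar\'e separation (for $\gV\subseteq\gU$) gives $\sigma^2\ge s_{d+1}^2$, and $v\perp\mu_1$ with $v\in\gU$ gives $\sigma^2\le s_2^2 \le 2c^2 - s_1^2$. Since $Q$ is diagonal in the $\mu_i$-basis and $\mu_1$ is a coordinate direction, the cross term $\dotp{\tkx\mu_1, v}_\px = s_1^2\dotp{\mu_1,v}_\px$ vanishes, so $Q(\cos\theta\,\mu_1 + \sin\theta\, v) = \cos^2\theta\, s_1^2 + \sin^2\theta\,\sigma^2$. Choosing $\theta$ so that this equals $c^2$, i.e. $\sin^2\theta = \frac{s_1^2 - c^2}{s_1^2 - \sigma^2}$, the two bounds on $\sigma^2$ give $\gamma\le\sin^2\theta\le\tfrac12$, hence $\cos^2\theta\le\beta$ and $\rho(f_\theta,P^+)=c$ for $f_\theta := \cos\theta\,\mu_1+\sin\theta\,v$. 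Writing $v = \alpha\mu_1 + z$ with $z\in\gZ\subseteq\gW^\perp$ (so $P_\gW v = \alpha P_\gW\mu_1$) and choosing the sign of $v$ with $\alpha\le 0$, one gets $\norm{P_\gW f_\theta}_\px^2 = (\cos\theta + \alpha\sin\theta)^2\norm{P_\gW\mu_1}_\px^2$; using $\norm{P_\gW\mu_1}_\px^2 \le (1+\alpha^2)^{-1}$ (because $z/\norm{z}_\px\in\gW^\perp$ is a unit vector and $\alpha = -\dotp{z,\mu_1}_\px$) and writing $\tan\psi = -\alpha$, a short trigonometric identity collapses this to $\norm{P_\gW f_\theta}_\px^2 \le \cos^2(\theta+\psi)$. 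Since $\theta\le\pi/4$ and $\psi\in[0,\pi/2)$, either $\theta+\psi\le\pi/2$, so $\cos^2(\theta+\psi)\le\cos^2\theta\le\beta$, or $\theta+\psi\in(\pi/2,3\pi/4)$, so $\cos^2(\theta+\psi) < \tfrac12\le\beta$; either way $\err(\Phi,f_\theta) = 1 - \norm{P_\gW f_\theta}_\px^2\ge\gamma$. (The degenerate subcase $z\parallel\mu_1$ means $\mu_1\perp\gW$, and then $f$ formed from $\mu_1$ and any low-compatibility direction of $\gW^\perp$ already gives $\err = 1$.)

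Combining the two directions, $\min_\Phi\err(\Phi;\fep) = \gamma$, attained by contexture-learners. For the reverse implication I would track equality in the construction above: $\err(\Phi;\fep) = \gamma$ forces the witness $f_\theta$ to have $\norm{P_\gW f_\theta}_\px^2 = \beta$, which forces $\psi = 0$ (so $v\in\gW^\perp$), $\mu_1\in\gW$, and $\sigma^2 = s_{d+1}^2$; writing $\gW = \sspan\{\mu_1\}\oplus\gW'$ with $\gW'\perp\mu_1$ and rerunning the argument inside $\sspan\{\mu_2,\mu_3,\dots\}$ with embedding dimension $d-1$ (the induced compatibility threshold stays in the admissible range) forces $\gW' = \sspan\{\mu_2,\dots,\mu_d\}$, i.e. $\Phi$ learns the contexture, with the usual caveat when $s_d$ has multiplicity $>1$. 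The hard part will be the ``nearly optimal'' regime of the lower bound: when $\gW$ is close to $\sspan\{\mu_1,\dots,\mu_d\}$, the obvious hard tasks ($f=\mu_1$, or $f$ supported in $\gW^\perp$) are respectively too well approximated or not compatible enough, so one must blend the highly compatible, well-approximated direction $\mu_1$ with a barely-approximable low-compatibility direction of $\gW^\perp$. Making this blend hit compatibility exactly $1-\eps$ while keeping the $\gW$-mass at most $\beta$ is precisely where the hypothesis $s_1^2 + s_2^2\le 2(1-\eps)^2$ is used (it is what gives $\beta\ge\tfrac12$ and $\sin^2\theta\le\tfrac12$), and the vanishing of the cross term $\dotp{\tkx\mu_1,v}_\px$ is what makes the blend's compatibility computable in closed form.
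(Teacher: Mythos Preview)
Your argument is correct and follows the same blueprint as the paper's: both constructions produce a hard witness by blending $\mu_1$ with a direction drawn from $\sspan\{\mu_1,\dots,\mu_{d+1}\}\cap\gW^\perp$, use the hypothesis to ensure the blending weight satisfies $\sin^2\theta\le\tfrac12$ (the paper's $\beta_2^2\le\tfrac12$), and read off the error lower bound $\gamma$. Your parameterization via the second eigenvector of $Q|_{\gV}$ and the identity $\norm{P_\gW f_\theta}_\px^2\le\cos^2(\theta+\psi)$ is exactly the paper's analysis of $F(\alpha_1)=\alpha_1\beta_1+\sqrt{1-\alpha_1^2}\,\beta_2$, just in different coordinates; in fact your decomposition $v=\alpha\mu_1+z$ with $z\in\gZ$ is more careful than the paper's, which tacitly assumes an $f_2\in\sspan(\Phi)\cap\sspan\{\mu_1,\dots,\mu_{d+1}\}$ that need not exist.

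One genuine gap: your inductive step for the ``only if'' direction does not go through as stated. After restricting to $\sspan\{\mu_2,\mu_3,\dots\}$ the leading singular value becomes $s_2$, so the hypothesis of the theorem would require $1-s_2\le\eps$; but you already have $\eps\le 1-\sqrt{(s_1^2+s_2^2)/2}\le 1-s_2$, so the reduced problem falls outside the admissible range unless $s_1=s_2$. The fix is to avoid induction altogether: once you have established $\mu_1\in\gW$, for \emph{every} unit $z'\in\sspan\{\mu_2,\dots,\mu_{d+1}\}\cap\gW^\perp$ form the witness $f'=\cos\theta'\,\mu_1+\sin\theta'\,z'$ with $Q(f')=c^2$; since now $z'\perp\gW$ exactly, $\err(\Phi,f')=\sin^2\theta'=(s_1^2-c^2)/(s_1^2-Q(z'))$, and $\err(\Phi;\fep)=\gamma$ forces $Q(z')=s_{d+1}^2$ for every such $z'$. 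This pins down $\sspan\{\mu_1,\dots,\mu_{d+1}\}\cap\gW^\perp$ to be one-dimensional (the $s_{d+1}^2$-eigenspace), whence $\gW=\sspan\{\mu_1,\dots,\mu_d\}$ up to the usual multiplicity caveat. This is closer in spirit to how the paper traces equality (varying $f_1$ over all of $\gU\cap\gW^\perp$).
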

This result has two parts.
First, we show that if $f^*$ is compatible ($f^* \in \fep$), the optimal encoder achieves low error on $f^*$.
Second, we ask what if $f^*$ is incompatible.
We cannot claim that no $\Phi$ works for $f^*$---if one knows $f^*$ a priori, then one can set $\phi_1 = f^*$ to achieve zero error.
Instead, we show that for any $\Phi$, there exists an $f$ with the same compatibility as $f^*$ for which $\Phi$ performs poorly.
Therefore, compatibility reflects whether a context is suitable for a task.

\paragraph{Evaluating an arbitrary encoder.} 
The above result bounds the approximation error of the encoder that learns the contexture.
We can also bound the approximation error of an arbitrary encoder using the contexture theory.
See \Cref{app:arbitrary-encoder}.

\subsection{Implications for Scaling Laws}
\label{sec:scaling-law}
Scaling laws~\cite{kaplan2020scaling} state that the performance of deep neural models such as foundation models grow with their size.
Moreover, models of different architectures learn highly aligned~\cite{pmlr-v97-kornblith19a} representations when scaled up.
\cite{pmlr-v235-huh24a} thus proposed the platonic representation hypothesis that scaling makes representations more aligned with an underlying \textit{reality}, though they did not formally define this reality.

The contexture theory provides a new perspective on the role of scaling.
The function class from which the feature encoders $\Phi$  are trained is a subset of $\lxp$, and as the model gets larger, the function class approaches $\lxp$.
This suggests that scaling brings the learned representation closer to the span of the top-$d$ singular functions of $\tp$, explaining why big models learn aligned representations.
The key difference is that contexts are designed by humans and thus are more subjective than the so-called ``underlying reality''.

We substantiate this extrapolation with an experiment on the \texttt{abalone} dataset from OpenML.
We use KNN ($K=30$) as context, where $\gA = \gX$, and $P^+$ maps $x$ to one of its $K$ nearest neighbors equiprobably. 
We compare two $d$-dimensional representations with $d=128$ learned in the following two ways: (i) Kernel PCA to obtain the exact top-$d$ eigenfunctions of $\tkx$; (ii) non-contrastive learning ($\gL_N$ in \Cref{thm:ssl-singular}) implemented with VICReg \cite{bardes2021vicreg}.
For (ii), we use a fully-connected neural network with Tanh activation, skip connections, and AdamW optimizer \cite{DBLP:journals/corr/KingmaB14,loshchilov2017decoupled}.
We use the same number of training epochs for every model.
For each width and depth, we run the experiments 15 times with different random initializations, and report the average alignment.
See \Cref{app:scaling-law-experiment} for more details.

We measure the correlation between the two representations using two metrics: the classical canonical-correlation analysis (CCA) metric $R_{\rm{CCA}}^2$, and the mutual KNN metric with 10 neighbors as used by \cite{pmlr-v235-huh24a}. 
We center and whiten the representations (making the covariance identity) when using the mutual KNN.
CCA is invariant to all invertible linear transformations $\Phi$, which is ideal because such transformations do not affect the performance of the downstream linear probe, since one can adjust $\mW$ and $\vb$ of the linear probe accordingly.
\cite{pmlr-v97-kornblith19a} also proposed the linear CKA metric, but we do not use it because it is only invariant to orthogonal transformations on $\Phi$.

\begin{figure}[!t]
    \centering
    \begin{tikzpicture}
\begin{axis}[
name=plot1,
 height = .35\linewidth,
    width = .7\linewidth,
    xlabel={Width},
    ylabel={Alignment},
    xlabel style = {
        at={(1.08,-0.05)},
        anchor=south,
    },
        legend style={
        at={(1.1,0.8)},
        anchor=north west,
        cells={anchor=west},
    },
title style={at={(0.5,-0.3)}, anchor=north},
    xmin=0, xmax=12,
    ymin=0.6, ymax=0.92,
    xtick={0,1,2,3,4,5,6,7,8,9,10,11,12},
    xticklabels={64,128,256,512,1024,2048,4000,8000,12000,16000,20000,25000,30000},
    x tick label style={rotate=45,font=\small},
    ytick distance = 0.1,
]
    \addplot[very thick,blue] table [
        x expr=\coordindex, 
        y=CCA,            
        col sep=comma       
    ] {data/topd.csv};
    \addplot[very thick,teal] table [
        x expr=\coordindex, 
        y=CCA,            
        col sep=comma       
    ] {data/topd2.csv};
    \addplot[very thick,red] table [
        x expr=\coordindex, 
        y=CCA,            
        col sep=comma       
    ] {data/topd3.csv};
    \addplot[very thick,blue,dashed] table [
        x expr=\coordindex, 
        y=MKNN,            
        col sep=comma       
    ] {data/topd.csv};
    \addplot[very thick,teal,dashed] table [
        x expr=\coordindex, 
        y=MKNN,            
        col sep=comma       
    ] {data/topd2.csv};
    \addplot[very thick,red,dashed] table [
        x expr=\coordindex, 
        y=MKNN,            
        col sep=comma       
    ] {data/topd3.csv};
    \legend{1,2,3};
\end{axis}
\node[] at (11.55,3.6) {Depth};

\begin{axis}[
name=plot2,
at={($(plot1.south west)+(0,-170)$)},
 height = .35\linewidth,
    width = .7\linewidth,
    xlabel={Depth},
    ylabel={Alignment},
    xlabel style = {
        at={(1.08,-0.05)},
        anchor=south,
    },
    x tick label style={rotate=45,font=\small},
title style={at={(0.5,-0.3)}, anchor=north},
    xmin=0, xmax=9,
    ymin=0.55, ymax=0.9,
    xtick={0,1,2,3,4,5,6,7,8,9},
    xticklabels={1,2,4,8,16,32,64,100,150,200},
    ytick distance = 0.1,
        legend style={
        at={(1.1,0.8)},
        anchor=north west,
        cells={anchor=west},
    },
]
    \addplot[very thick,blue] table [
        x expr=\coordindex,  
        y=CCA,            
        col sep=comma       
    ] {data/topd-512.csv};
    \addplot[very thick,teal] table [
        x expr=\coordindex,  
        y=CCA,            
        col sep=comma       
    ] {data/topd-1024.csv};
    \addplot[very thick,red] table [
        x expr=\coordindex,  
        y=CCA,            
        col sep=comma       
    ] {data/topd-2048.csv};
    \legend{512,1024,2048};
    \addplot[very thick,blue,dashed] table [
        x expr=\coordindex,  
        y=MKNN,            
        col sep=comma       
    ] {data/topd-512.csv};
    \addplot[very thick,teal,dashed] table [
        x expr=\coordindex,  
        y=MKNN,            
        col sep=comma       
    ] {data/topd-1024.csv};
    \addplot[very thick,red,dashed] table [
        x expr=\coordindex,  
        y=MKNN,            
        col sep=comma       
    ] {data/topd-2048.csv};
\end{axis}
\node[] at (11.8,-2.2) {Width};
\end{tikzpicture}
    \vskip -.15in
\caption{Alignment between the learned representation and the top-$d$ eigenfunctions of $\tkx$ on the \texttt{abalone} dataset. Solid curves: CCA. Dashed curves: mutual KNN. Depth here means the number of hidden layers.}
\label{fig:scaling-law}
\end{figure}
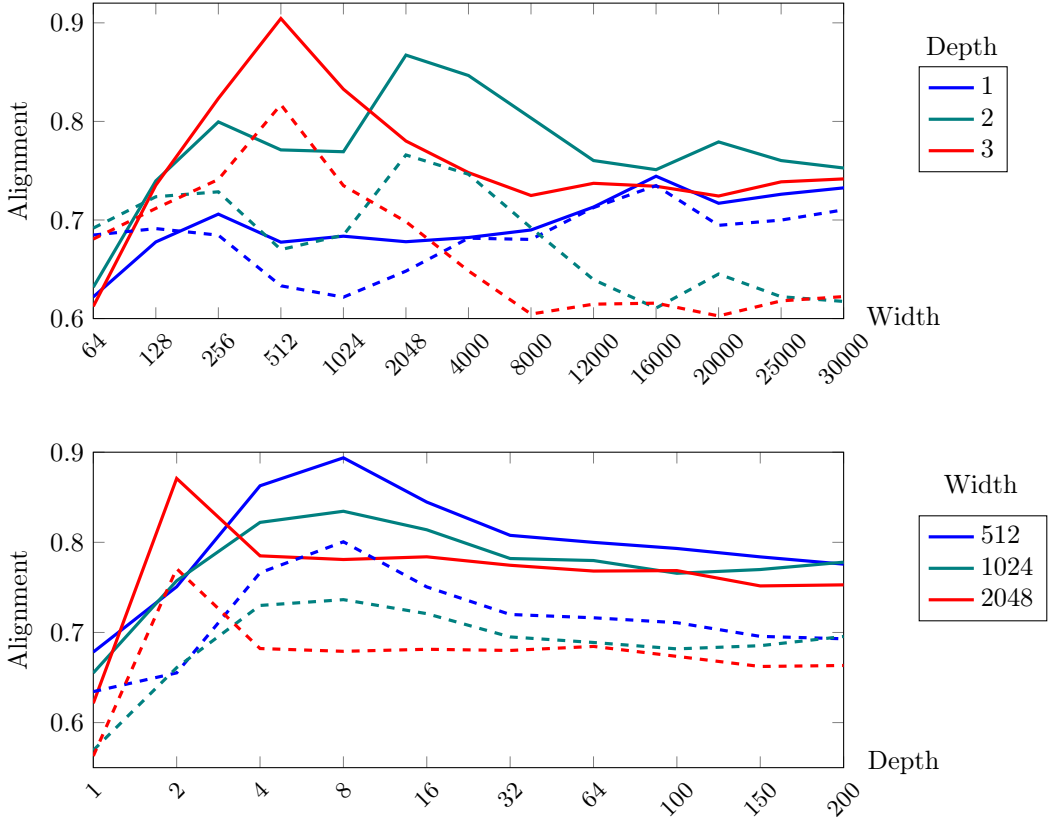

\Cref{fig:scaling-law} plots the alignment between the exact top-$d$ eigenfunctions and the learned deep representation while varying the depth and width of the neural network.
We can see that when depth and width are chosen correctly, the CCA can be as high as 0.9, and the mutual KNN can be higher than 0.8.
Note that these alignment metric values are very high.
For example, in \cite{pmlr-v235-huh24a}, the mutual KNN metric value is usually below 0.2.
Hence, the representation learned by the neural network is highly aligned with the top-$d$ eigenfunctions.

The top plot studies neural networks with increasing widths.
We observe that when the neural network is relatively narrow, increasing the width improves alignment.
However, once the neural network is sufficiently wide, further increasing the width may have a negative effect.
For example, when the depth is 3, the alignment is the highest when the width is 512, and the alignment becomes lower when the network is wider than 512.
Since increasing the width can only make the function class of $\Phi$ larger, this phenomenon is not due to the expressivity of the neural network.
We hypothesize that it arises from optimization difficulty, that is larger models are harder to train effectively.
Consequently, with the same number of pretraining steps, a larger model will be farther away from the minima, leading to a reduced alignment.

The bottom plot studies neural networks with increasing depths, and a similar trend is observed.
When the network is shallow, increasing the depth improves the alignment.
However, once the network is sufficiently deep, further increasing the depth may have a negative effect.
We also observe from the bottom plot that a width-512 network has higher alignment than widths 1024 and 2048.
In addition, the alignment does not reach 1.
This is expected as the model is non-convex, so the true optima (the precise top-$d$ eigenspace) cannot be found by gradient-based methods.

In summary, we draw two conclusions from this experiment:
(i) the representation learned by a large neural network is highly aligned with the top-$d$ eigenfunctions;
(ii) once the neural network is wide and deep enough, further increasing its size does not improve the alignment higher, and may even degrade it.
Hence, we put forward the following argument about the scaling law:
Once the model is large enough such that $\Phi$ is already highly aligned with the top-$d$ eigenfunctions, further increasing the model size inevitably yields diminishing returns.

\section{Context Evaluation}
\label{sec:context-evaluation}

Creating \emph{useful} contexts that produce better representations is a challenging open problem. 
In this section, we take a first step by studying \emph{when} a context is useful and \emph{how} to efficiently evaluate its usefulness. 
The key result is that the usefulness of a context is largely determined by the association level between $X$ and $A$, and \textbf{a useful context should have a moderate association}.
The association level affects the shape of the spectrum, that is the decay rate of the singular values.
We propose a metric that only uses the singular values to evaluate the usefulness of a context, without knowledge of the downstream task.
Then, we empirically verify that this metric has a strong correlation with the actual performance of the encoder on many real datasets.
As such, the proposed metric can help practitioners to select among various pretraining methods or hyperparameter settings efficiently.

\subsection{The Effect of Context Association}
\label{sec:association}

A useful context should provide sufficient training signals that are easy for the model to capture.
If the association between $X$ and $A$ of a context is too weak, then the signals will be insufficient.
If the association is too strong, then capturing the signals will be too hard.
The association level affects the spectrum of the context---the stronger the association, the slower the decay of the singular values. As an illustration, \Cref{fig:taud} (top) displays the spectra of contexts with weak, moderate, and strong association, from left to right.

\paragraph{Case 1: Weak association.} 
Consider the extreme case where $A$ is independent of $X$.
This context is clearly useless because it provides no information.
In this case, only the trivial singular function $\mu_0 \equiv 1$ has a positive singular value; all the other singular values are $0$.
When $X$ and $A$ are nearly independent, $\kx(x,x')$ is very close to $1$, which causes the singular values to decay too fast.
Formally, we have:
\begin{lemma} [Proof in \Cref{app:proof-lemma-weak_association}] \label{lemma:weak_association}
When $|\kx(x,x')-1| < \epsilon$ for all $x,x' \in \gX$, we have $ \sum_{i >0} s_i^2  < \epsilon$.
\end{lemma}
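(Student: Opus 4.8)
The strategy is to recognize $\sum_{i>0}s_i^2$ as a trace, and to compute that trace as an integral of the diagonal of $\kx$. Concretely, from \Cref{lem:spectral-decomposition} we have the pointwise expansion $\frac{P^+(x,a)}{\px(x)\pa(a)} = \sum_i s_i\mu_i(x)\nu_i(a)$, and plugging this into the defining formula $\kx(x,x') = \int \frac{P^+(x,a)}{\px(x)\pa(a)}\cdot\frac{P^+(x',a)}{\px(x')\pa(a)}\,d\pa(a)$ and using orthonormality of $(\nu_i)$ in $\lap$ yields the Mercer-type identity
\[
\kx(x,x') = \sum_i s_i^2\,\mu_i(x)\mu_i(x').
\]
Equivalently, $\tkx = \tp\tpstar$ has eigenfunctions $\mu_i$ with eigenvalues $s_i^2$, so $\sum_i s_i^2 = \Tr(\tkx)$.

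Next I would specialize to the diagonal. Since $s_0 = 1$ and $\mu_0 \equiv 1$, the expansion gives $\kx(x,x) = 1 + \sum_{i>0}s_i^2\,\mu_i(x)^2 \ge 1$ for every $x$. Integrating against $\px$, and interchanging sum and integral (legitimate since every summand $s_i^2\mu_i(x)^2$ is nonnegative, so monotone convergence applies), together with $\norm{\mu_i}_{\px}=1$, gives
\[
\int \kx(x,x)\,d\px(x) = 1 + \sum_{i>0}s_i^2 ,
\qquad\text{i.e.}\qquad
\sum_{i>0}s_i^2 = \int \big(\kx(x,x) - 1\big)\,d\px(x) .
\]

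Finally I would invoke the hypothesis with $x' = x$: since $|\kx(x,x)-1| < \epsilon$ for all $x$, we have $0 \le \kx(x,x)-1 < \epsilon$ pointwise, and integrating against the probability measure $\px$ (a strict inequality valid at every point integrates to a strict inequality, since $\epsilon - (\kx(x,x)-1)$ is everywhere positive and hence has positive integral) gives $\sum_{i>0}s_i^2 < \epsilon$, as claimed. The only delicate point is the rigor of the pointwise Mercer expansion and the sum/integral interchange; this is handled by the nonnegativity of the terms (monotone convergence) and is consistent with the standing assumption that $\tkx$ is a Hilbert–Schmidt integral operator with the $\mu_i$ forming an ONB of eigenfunctions — in fact the computation itself shows $\sum_i s_i^2 \le 1+\epsilon < \infty$ under the hypothesis, so convergence is not an issue.
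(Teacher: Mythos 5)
Your proof is correct and takes essentially the same route as the paper's: interpret $\sum_{i>0}s_i^2$ as the trace of the centered kernel operator (the paper calls it $T_{\kx'}$ with $\kx'=\kx-1$), compute that trace as $\int(\kx(x,x)-1)\,d\px(x)$, and bound the integrand by $\epsilon$. Your writeup is a bit more careful than the paper's one-line version — you justify the Mercer expansion via \Cref{lem:spectral-decomposition}, note that the hypothesis is effectively one-sided on the diagonal since $\kx(x,x)\ge 1$, and handle the sum/integral interchange by monotone convergence — but the underlying argument is identical.
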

In \Cref{app:context_evaluation}, we empirically verify that low association leads to a small $|\kx(x,x')-1|$ for all $x,x'$.
In such a scenario, $\fep$ is a very small set, so very few tasks are compatible with and can benefit from the context.

\paragraph{Case 2: Strong association.} 
Contexts with very strong associations, whose singular values decay too slowly, are not useful.
For example, the extreme case $A = X$ is clearly useless.
There are two reasons: (i) for upstream, slow decay implies that there will be very non-smooth singular functions with large singular values, which are difficult to learn; (ii) for downstream, a larger $d$ is needed, as more singular functions have non-trivial contributions to the kernel, and it leads to a higher sample complexity.
In \Cref{app:context_evaluation}, we empirically verify that kernel $\kx$ has a high Lipschitz constant when the association is strong, meaning that the kernel is non-smooth and thus the singular functions are non-smooth.

\begin{figure}[!t]
    \centering
    \includegraphics[width=\textwidth]{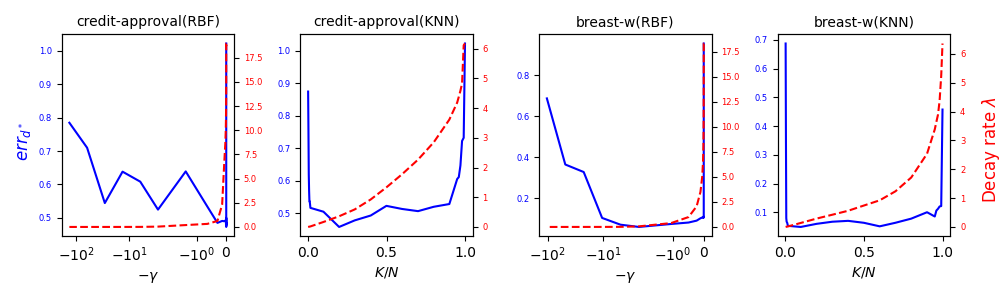}
    \caption{Association level vs, prediction error $\err_{d^*} = \min_d \err_d$ when $\Phi$ is d-dimensional, and the decay rate $\lambda$ for the RBF and KNN contexts on \texttt{credit-approval} and \texttt{breast-w} datasets. A larger $-\gamma$ (or $K/N$) indicates a lower association level, while a smaller $-\gamma$ (or $K/N$) corresponds to a higher association level. Across all four figures, we observe a U-shaped trend: prediction error increases at both extremes of low and high association. The estimated decay rate $\lambda$ consistently captures this behavior, serving as a proxy for the level of association.  }
    \label{fig:association_vs_decay_rate_err_main}
\end{figure}

\paragraph{Quantitative measurements for level of association.}
\label{main:measurement_for_association}
While mutual information captures mutual dependence between random variables, estimating it from samples remains a longstanding challenge as it requires the joint density function to be known~\cite{paninski2003estimation}. As an alternative approach, we propose to use the decay rate of singular values $(s_i)_{i\geq 0}$ as an indicator of the strength of association.

To estimate the decay rate $\lambda$, we assume that the singular values decay exponentially and fit the regression model $s_i^2 = \exp{(-\lambda i)}$. When $\lambda$ is large, it indicates a fast decay rate and the context has a low association. Conversely, when $\lambda$ is small, it implies a slow decay rate and highly associated context. In \Cref{fig:association_vs_decay_rate_err_main}, we empirically demonstrate that the estimated decay rate closely reflects the level of association, and that both extremely strong and weak associations can lead to degraded empirical performance. In \Cref{app:context_evaluation}, we observe similar trends for all $28$ datasets and propose alternative metrics for detecting the level of associations.

\subsection{Task-agnostic Evaluation of Contexts}
\label{sec:metric}

A good measurement of context usefulness should be task-agnostic, because we would like the pretrained encoder to be transferable to a variety of tasks, which we might not know at pretrain time.
Note that for any task-agnostic metric, one can adversarially create a task for which the metric fails, so there is no universal task-agnostic metric.
However, a metric can still be very useful if it provides guidance for most real tasks. To this end, we set up the following desiderata.
\begin{enumerate}[itemsep=0pt,leftmargin=*]
    \item \emph{Approximation-estimation trade-off.} The prediction error of a linear model fit on the encoder can be decomposed as the sum of approximation error and estimation error. The former decreases with $d$, while the latter increases with $d$. The metric should reflect this trade-off.
    \item \emph{Efficient computation.} Note that estimating the singular functions is as hard as training an encoder. Therefore, we propose to use a metric that only uses the singular values of $\tp$, which can be more efficiently estimated.
\end{enumerate}
With these desiderata in mind, we define our metric as
\begin{equation}
\label{eqn:metric}
    \tau_d = \frac{1}{1 - s_{d+1}^2} \ + \ \beta \ \frac{\sum_{i=1}^{d} s_i^2}{\sum_{i=1}^{d_0} s_i^2} \ , \qquad  \tau = \min_d \tau_d  ,
\end{equation}
where $\beta > 0$ is a parameter, and $d_0$ is the maximum embedding dimension we consider.
Typically $d_0$ ranges from $512$ to $8192$.
We choose $\beta = 1$ and $d_0 = 512$ in our experiments.
$\tau_d$ is a proxy of the prediction error when the embedding dimension is $d$.
Thus, the $d$ that minimizes $\tau_d$ can be viewed as the optimal embedding dimension predicted by the metric, and $\tau$ evaluates the context when $d$ is chosen optimally.

\paragraph{Derivation of the metric.} 
Let the target function be $f^* = f_0 + f_1$, where $\dotp{f_0,f_1}_{\px} = 0$, $f_1$ is compatible with the context, and $f_0$ is not compatible with the context. The prediction error can then be decomposed into three components:
\begin{enumerate}[label=(\roman*), itemsep=0pt]
    \item The approximation error of $f_1$
    \item The approximation error of $f_0$
    \item The estimation error
\end{enumerate}
By \Cref{thm:top-d-optimal}, component (i) can be bounded by $\frac{s_1^2 - (1-\epsilon)^2}{s_1^2 - s_{d+1}^2}$. In practice, $s_1$ is usually very close to $1$, and we simplify this bound to the first term of \Cref{eqn:metric}, up to a constant factor.
For component (ii), stronger associations imply that more tasks are compatible with the context, reducing this approximation error. Thus, this component should be negatively correlated with $\sum_{i=1}^{d_0} s_i^2$.
Component (iii), the estimation error, increases with stronger associations, since higher association typically requires a larger $d$, and thus greater sample complexity. 
Based on the results in \cite{zhai2023understanding}, this component can be essentially understood as positively correlated with $\sum_{i=1}^{d} s_i^2$.
The second term in \Cref{eqn:metric} combines the contributions from components (ii) and (iii), and is designed to be bounded by $1$.
This metric can be efficiently estimated. It only requires the top-$d_0$ eigenfunctions of $\tkx$, which can be estimated in $O(m^3)$ time using a random subset of $m = \Theta(d_0 \log d_0)$ samples. See \Cref{app:efficient-estimation} for details.

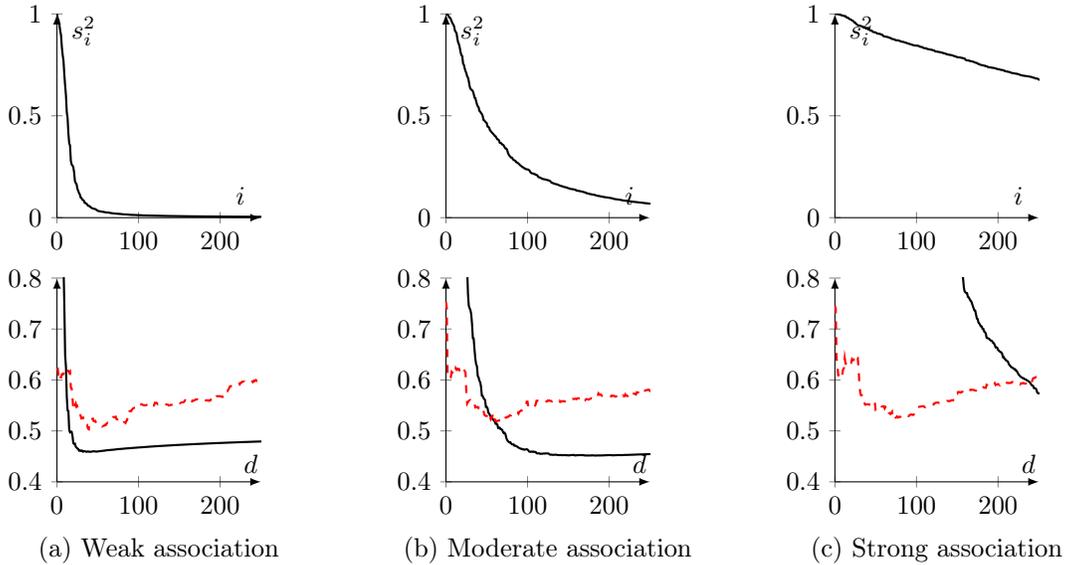
\begin{figure}[!t]
    \centering
\begin{tikzpicture}
\begin{axis}[
name=plot1,
 height = .26\linewidth,
    width = .26\linewidth,
    axis lines=left,  
    axis line style={-latex},  
    xlabel={$i$},
    ylabel={$s_i^2$},
    ylabel style={
        at={(0.13,0.8)}, 
        anchor=south,  
        rotate=270  
    },
    xlabel style = {
        at={(0.9,0.02)},
        anchor=south,
    },
title style={at={(0.5,-0.3)}, anchor=north},
    xmin=0, xmax=250,
    ymin=0, ymax=1,
    xtick distance = 100,
    ytick distance = 0.5,
]
    \addplot[thick] table [
        x expr=\coordindex,  
        y=value,            
        col sep=comma       
    ] {data/ag1.csv};
\end{axis}

\begin{axis}[
name=plot2,
at={($(plot1.south east)+(70,0)$)},
 height = .26\linewidth,
    width = .26\linewidth,
    axis lines=left,  
    axis line style={-latex},  
    xlabel={$i$},
    ylabel={$s_i^2$},
    ylabel style={
        at={(0.13,0.8)}, 
        anchor=south,  
        rotate=270  
    },
    xlabel style = {
        at={(0.9,0.02)},
        anchor=south,
    },
title style={at={(0.5,-0.3)}, anchor=north},
    xmin=0, xmax=250,
    ymin=0, ymax=1,
    xtick distance = 100,
    ytick distance = 0.5,
]
    \addplot[thick] table [
        x expr=\coordindex,  
        y=value,            
        col sep=comma       
    ] {data/ag3.csv};
\end{axis}

\begin{axis}[
name=plot3,
at={($(plot2.south east)+(70,0)$)},
 height = .26\linewidth,
    width = .26\linewidth,
    axis lines=left,  
    axis line style={-latex},  
    xlabel={$i$},
    ylabel={$s_i^2$},
    ylabel style={
        at={(0.13,0.8)}, 
        anchor=south,  
        rotate=270  
    },
    xlabel style = {
        at={(0.9,0.02)},
        anchor=south,
    },
title style={at={(0.5,-0.3)}, anchor=north},
    xmin=0, xmax=250,
    ymin=0, ymax=1,
    xtick distance = 100,
    ytick distance = 0.5,
]
    \addplot[thick] table [
        x expr=\coordindex,  
        y=value,            
        col sep=comma       
    ] {data/ag5.csv};
\end{axis}

\begin{axis}[
name=plot4,
at={($(plot1.south west)+(0,-100)$)},
 height = .26\linewidth,
    width = .26\linewidth,
    axis lines=left,  
    axis line style={-latex},  
    xlabel={$d$},
    xlabel style = {
        at={(0.95,0)},
        anchor=south,
    },
title style={at={(0.5,-0.3)}, anchor=north},
title={(a) Weak association},
    xmin=0, xmax=250,
    ymin=0.4, ymax=0.8,
    xtick distance = 100,
    ytick distance = 0.1,
]
    \addplot[thick] table [
        x expr=\coordindex,  
        y expr={min(\thisrow{tau} / 4, 1.4)},            
        col sep=comma       
    ] {data/ag2.csv};
    \addplot[thick, dashed, red] table [
        x expr=\coordindex,  
        y=mse,            
        col sep=comma       
    ] {data/ag2.csv};
\end{axis}

\begin{axis}[
name=plot5,
at={($(plot4.south east)+(70,0)$)},
 height = .26\linewidth,
    width = .26\linewidth,
    axis lines=left,  
    axis line style={-latex},  
    xlabel={$d$},
    xlabel style = {
        at={(0.95,0)},
        anchor=south,
    },
title style={at={(0.5,-0.3)}, anchor=north},
title={(b) Moderate association},
    xmin=0, xmax=250,
    ymin=0.4, ymax=0.8,
    xtick distance = 100,
    ytick distance = 0.1,
]
    \addplot[thick] table [
        x expr=\coordindex,  
        y expr={min(\thisrow{tau} / 4, 1.4)},     
        col sep=comma       
    ] {data/ag4.csv};
    \addplot[thick, dashed, red] table [
        x expr=\coordindex,  
        y=mse,    
        col sep=comma       
    ] {data/ag4.csv};
\end{axis}

\begin{axis}[
name=plot6,
at={($(plot5.south east)+(70,0)$)},
 height = .26\linewidth,
    width = .26\linewidth,
    axis lines=left,  
    axis line style={-latex},  
    xlabel={$d$},
    xlabel style = {
        at={(0.95,0)},
        anchor=south,
    },
title style={at={(0.5,-0.3)}, anchor=north},
title={(c) Strong association},
    xmin=0, xmax=250,
    ymin=0.4, ymax=0.8,
    xtick distance = 100,
    ytick distance = 0.1,
]
    \addplot[thick] table [
        x expr=\coordindex,  
        y expr={min(\thisrow{tau} / 6, 1.4)},            
        col sep=comma       
    ] {data/ag6.csv};
    \addplot[thick, dashed, red] table [
        x expr=\coordindex,  
        y=mse,            
        col sep=comma       
    ] {data/ag6.csv};
\end{axis}
\end{tikzpicture}
    \caption{Metric illustration on \texttt{abalone}. \textbf{Top row:} context spectra. \textbf{Bottom row:} black solid curves are $\tau_d$ divided by $6$; red dashed curves are the actual downstream prediction error. We divide $\tau_d$ by $6$ to fit it in the same plot.}
    \label{fig:taud}
\end{figure}

Next, we empirically examine $\tau_d$ on two datasets.
First, we apply the metric to the \texttt{abalone} dataset and use KNN as the context, similar to \Cref{sec:scaling-law}.
We adjust the association of the context by changing $K$.
In particular, we choose $K=150$ (weak), $K = 30$ (moderate) and $K = 5$ (strong).
We obtain the exact eigenvalues and eigenfunctions of $\tkx$ using kernel PCA.
In \Cref{fig:taud}, we plot the spectra of the three contexts in the top row.
Then, in the bottom row, we compare $\tau_d$ against the prediction error of the linear probe under different $d$.
We can see that when the association is weak or moderate, $\tau_d$ first decreases and then increases, which tracks the actual error.
However, when the association is too strong, $\tau_d$ monotonically decreases with $d$, and it cannot track the actual error.

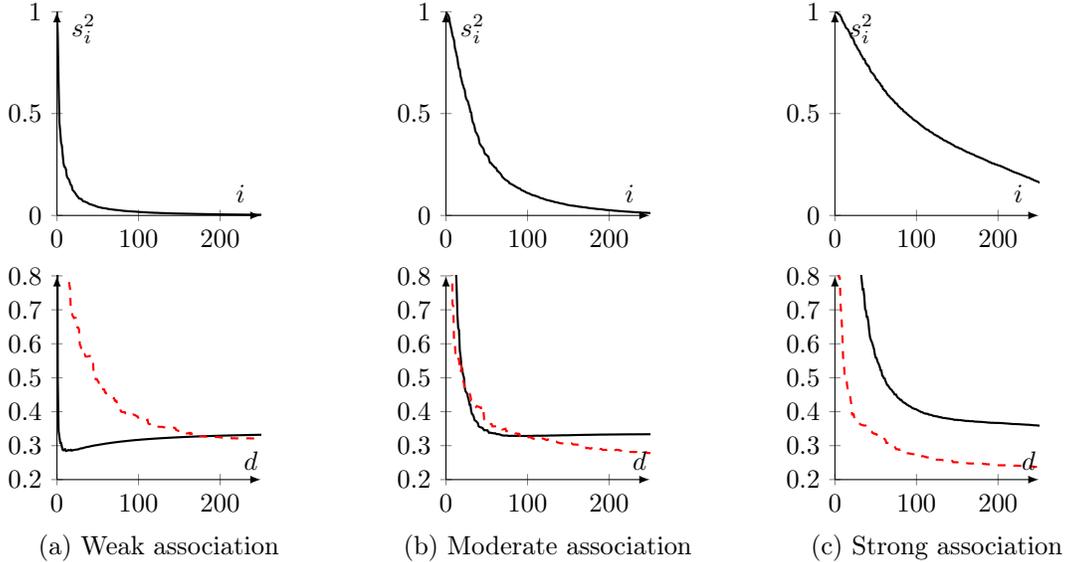
\begin{figure}[!t]
    \centering
\begin{tikzpicture}
\begin{axis}[
name=plot1,
 height = .26\linewidth,
    width = .26\linewidth,
    axis lines=left,  
    axis line style={-latex},  
    xlabel={$i$},
    ylabel={$s_i^2$},
    ylabel style={
        at={(0.13,0.8)}, 
        anchor=south,  
        rotate=270  
    },
    xlabel style = {
        at={(0.9,0.02)},
        anchor=south,
    },
title style={at={(0.5,-0.3)}, anchor=north},
    xmin=0, xmax=250,
    ymin=0, ymax=1,
    xtick distance = 100,
    ytick distance = 0.5,
]
    \addplot[thick] table [
        x expr=\coordindex,  
        y=eig,            
        col sep=comma       
    ] {data/mnist-05.csv};
\end{axis}

\begin{axis}[
name=plot2,
at={($(plot1.south east)+(70,0)$)},
 height = .26\linewidth,
    width = .26\linewidth,
    axis lines=left,  
    axis line style={-latex},  
    xlabel={$i$},
    ylabel={$s_i^2$},
    ylabel style={
        at={(0.13,0.8)}, 
        anchor=south,  
        rotate=270  
    },
    xlabel style = {
        at={(0.9,0.02)},
        anchor=south,
    },
title style={at={(0.5,-0.3)}, anchor=north},
    xmin=0, xmax=250,
    ymin=0, ymax=1,
    xtick distance = 100,
    ytick distance = 0.5,
]
    \addplot[thick] table [
        x expr=\coordindex,  
        y=eig,            
        col sep=comma       
    ] {data/mnist-02.csv};
\end{axis}

\begin{axis}[
name=plot3,
at={($(plot2.south east)+(70,0)$)},
 height = .26\linewidth,
    width = .26\linewidth,
    axis lines=left,  
    axis line style={-latex},  
    xlabel={$i$},
    ylabel={$s_i^2$},
    ylabel style={
        at={(0.13,0.8)}, 
        anchor=south,  
        rotate=270  
    },
    xlabel style = {
        at={(0.9,0.02)},
        anchor=south,
    },
title style={at={(0.5,-0.3)}, anchor=north},
    xmin=0, xmax=250,
    ymin=0, ymax=1,
    xtick distance = 100,
    ytick distance = 0.5,
]
    \addplot[thick] table [
        x expr=\coordindex,  
        y=eig,            
        col sep=comma       
    ] {data/mnist-005.csv};
\end{axis}

\begin{axis}[
name=plot4,
at={($(plot1.south west)+(0,-100)$)},
 height = .26\linewidth,
    width = .26\linewidth,
    axis lines=left,  
    axis line style={-latex},  
    xlabel={$d$},
    xlabel style = {
        at={(0.95,0)},
        anchor=south,
    },
title style={at={(0.5,-0.3)}, anchor=north},
title={(a) Weak association},
    xmin=0, xmax=250,
    ymin=0.2, ymax=0.8,
    xtick distance = 100,
    ytick distance = 0.1,
]
    \addplot[thick] table [
        x expr=\coordindex,  
        y expr={min(\thisrow{taud} / 6, 1.4)},            
        col sep=comma       
    ] {data/mnist-05.csv};
    \addplot[thick, dashed, red] table [
        x expr=\coordindex,  
        y=mse,            
        col sep=comma       
    ] {data/mnist-05.csv};
\end{axis}

\begin{axis}[
name=plot5,
at={($(plot4.south east)+(70,0)$)},
 height = .26\linewidth,
    width = .26\linewidth,
    axis lines=left,  
    axis line style={-latex},  
    xlabel={$d$},
    xlabel style = {
        at={(0.95,0)},
        anchor=south,
    },
title style={at={(0.5,-0.3)}, anchor=north},
title={(b) Moderate association},
    xmin=0, xmax=250,
    ymin=0.2, ymax=0.8,
    xtick distance = 100,
    ytick distance = 0.1,
]
    \addplot[thick] table [
        x expr=\coordindex,  
        y expr={min(\thisrow{taud} / 6, 1.4)},     
        col sep=comma       
    ] {data/mnist-02.csv};
    \addplot[thick, dashed, red] table [
        x expr=\coordindex,  
        y=mse,    
        col sep=comma       
    ] {data/mnist-02.csv};
\end{axis}

\begin{axis}[
name=plot6,
at={($(plot5.south east)+(70,0)$)},
 height = .26\linewidth,
    width = .26\linewidth,
    axis lines=left,  
    axis line style={-latex},  
    xlabel={$d$},
    xlabel style = {
        at={(0.95,0)},
        anchor=south,
    },
title style={at={(0.5,-0.3)}, anchor=north},
title={(c) Strong association},
    xmin=0, xmax=250,
    ymin=0.2, ymax=0.8,
    xtick distance = 100,
    ytick distance = 0.1,
]
    \addplot[thick] table [
        x expr=\coordindex,  
        y expr={min(\thisrow{taud} / 6, 1.4)},            
        col sep=comma       
    ] {data/mnist-005.csv};
    \addplot[thick, dashed, red] table [
        x expr=\coordindex,  
        y=mse,            
        col sep=comma       
    ] {data/mnist-005.csv};
\end{axis}
\end{tikzpicture}
    \caption{Metric illustration on \texttt{MNIST}, similar to \Cref{fig:taud}.}
    \label{fig:taud-mnist}
\end{figure}

Second, we apply the metric to the \texttt{MNIST} dataset.
The context is random cropping with crop ratio $\alpha$.
We adjust the association of the context by changing $\alpha$.
In particular, we choose $\alpha=0.5$ (weak), $\alpha=0.2$ (moderate) and $\alpha=0.05$ (strong).
Since kernel PCA is not scalable to datasets as large as MNIST, we instead train a neural network.
Specifically, we train a LeNet \cite{lecun1998gradient} using the non-contrastive learning objective ($\gL_N$ in \Cref{thm:ssl-singular}) and the AdamW optimizer.
Then, we estimate the top eigenvalues using the method in \Cref{app:efficient-estimation}.
The downstream task is a binary classification task---whether the digit is greater than 4.
After pretraining, a linear probe is fit on top of $\Phi$ using ridge regression.
The result is plotted in \Cref{fig:taud-mnist}.

From \Cref{fig:taud-mnist}, we can see that when the association is not too strong, $\tau_d$ first decreases and then increases, similar to \Cref{fig:taud}.
However, on \texttt{MNIST}, the downstream error monotonically decreases with $d$, unlike \texttt{abalone}.
This disparity is due to the difference between the two downstream tasks.
To demonstrate this, in \Cref{fig:mnist-compare} we plot the cosine similarity between the target function $f^*$ and the estimated $i$-th eigenfunction on the two datasets.
We can see that the variance of $f^*$ on \texttt{abalone} is mostly concentrated on the top-$5$ eigenfunctions, with the first cosine similarity being almost $0.5$.
In contrast, the variance of $f^*$ on \texttt{MNIST} is more scattered, and the cosine similarity is still close to $0.1$ for the $150$-th eigenfunction.
Consequently, having a large $d$ on \texttt{abalone} will have a little impact on the approximation error but will increase the estimation error significantly.
On the other hand, having a larger $d$ on \texttt{MNIST} will decrease the approximation error more than it increases the estimation error, which is why the total error monotonically decreases with $d$.

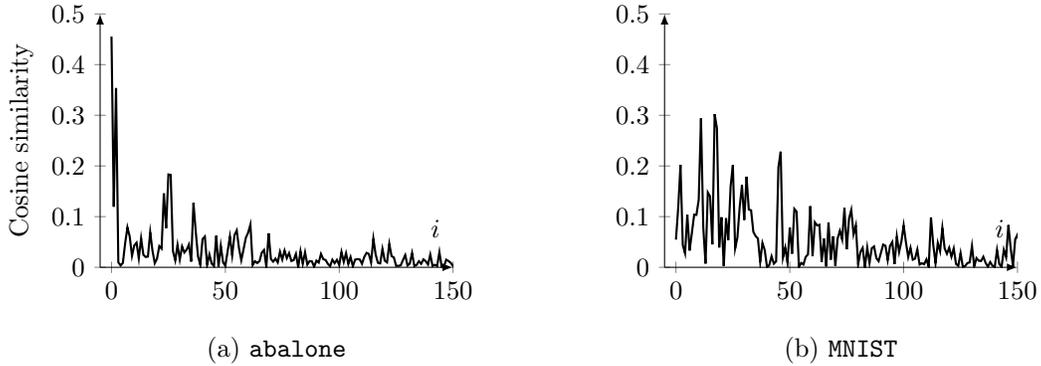
\begin{figure}[!t]
    \centering
\begin{tikzpicture}
\begin{axis}[
name=plot1,
title={(a) \texttt{abalone}},
 height = .3\linewidth,
    width = .38\linewidth,
    axis lines=left,  
    axis line style={-latex},  
    xlabel={$i$},
    ylabel={Cosine similarity},
    xlabel style = {
        at={(0.95,0.08)},
        anchor=south,
    },
title style={at={(0.5,-0.3)}, anchor=north},
    xmin=-5, xmax=150,
    ymin=0, ymax=0.5,
    xtick distance = 50,
    ytick distance = 0.1,
]
    \addplot[thick] table [
        x expr=\coordindex,  
        y=ui,            
        col sep=comma       
    ] {data/abalone-ui.csv};
\end{axis}

\begin{axis}[
name=plot2,
title={(b) \texttt{MNIST}},
at={($(plot1.south east)+(80,0)$)},
 height = .3\linewidth,
    width = .38\linewidth,
    axis lines=left,  
    axis line style={-latex},  
    xlabel={$i$},
    xlabel style = {
        at={(0.95,0.08)},
        anchor=south,
    },
title style={at={(0.5,-0.3)}, anchor=north},
    xmin=-5, xmax=150,
    ymin=0, ymax=0.5,
    xtick distance = 50,
    ytick distance = 0.1,
]
    \addplot[thick] table [
        x expr=\coordindex,  
        y=ui,            
        col sep=comma       
    ] {data/mnist-05.csv};
\end{axis}

\end{tikzpicture}
    \caption{Comparison of the downstream task between \texttt{abalone} and \texttt{MNIST}. The $y$-axis is the cosine similarity between the downstream task and the $i$-th eigenfunction.}
    \label{fig:mnist-compare}
\end{figure}

The takeaway from this experiment is that, while a context with moderate association is generally good, its effectiveness ultimately depends on the specific downstream task.

For example, on \texttt{abalone} the weakest context actually leads to the lowest error, because the variance of $f^*$ is concentrated on the top-$5$ eigenfunctions.
On the other hand, on \texttt{MNIST} the strongest context leads to the lowest error, because the variance of $f^*$ is scattered among a lot of features, and a stronger association allows more features to be discovered.
Hence, no evaluation metric would universally work for all contexts and downstream tasks, but a metric would still be useful if it correlates well with the actual error in most scenarios, and thus can provide insights into choosing the right context and the right hyperparameters, such as the mask or crop ratio.

\subsection{Empirical Verification}
\label{sec:empirical_verification}
In this section, we examine whether our metric correlates with the encoder's performance on real datasets.
In practice, the prediction error is influenced by many factors.
To create a setting where all factors but the context are controlled, we let the encoder be the exact top-$d$ singular functions obtained by kernel PCA.

\begin{table}[!t]
\caption{Correlation between $\tau$ and the actual error $\err_{d^*}$ on all 4 types of contexts.}
\label{tab:correlation}
\begin{center}
\begin{tabular}{llll|ll}
\toprule
Dataset & Size ($\uparrow$) & \#Feature & Type & Pearson  & Distribution  \\
\midrule
credit-approval    & 690   & 15   & Cls & 0.583  & 0.683  \\ 
breast-w           & 699   & 9    & Cls & 0.072  & 0.255  \\ 
diabetes           & 768   & 8    & Cls & 0.737  & 0.740  \\ 
solar\_flare       & 1066  & 10   & Reg & 0.019  & 0.262  \\ 
Moneyball          & 1232  & 14   & Reg & 0.680  & 0.650  \\ 
yeast              & 1269  & 8    & Cls & 0.221  & 0.256  \\ 
cmc                & 1473  & 9    & Cls & 0.867  & 0.860  \\ 
Wine               & 1599  & 11   & Reg & -0.084 & 0.212  \\ 
scene              & 2407  & 299  & Cls & 0.608  & 0.685  \\ 
dna                & 3186  & 180  & Cls & 0.881  & 0.843  \\ 
splice             & 3190  & 60   & Cls & 0.831  & 0.801  \\ 
kr-vs-kp           & 3196  & 36   & Cls & 0.543  & 0.512  \\ 
abalone            & 4177  & 8    & Reg & 0.028  & 0.470  \\  
spambase           & 4601  & 57   & Cls & 0.775  & 0.858  \\  
colleges           & 7603  & 44   & Reg & 0.155  & 0.387  \\  
mushroom           & 8124  & 22   & Cls & 0.185  & 0.340  \\  
kin8nm             & 8192  & 8    & Reg & 0.805  & 0.760  \\ 
pumadyn32nh        & 8192  & 32   & Reg & 0.938  & 0.961  \\ 
cpu\_activity      & 8192  & 21   & Reg & 0.709  & 0.825  \\ 
SpeedDating        & 8378  & 120  & Cls & 0.590  & 0.656  \\ 
grid\_stability    & 10000 & 12   & Reg & 0.925  & 0.911  \\ 
sulfur             & 10081 & 6    & Reg & -0.180 & 0.487  \\ 
brazilian\_houses  & 10692 & 9    & Reg & -0.290 & 0.563  \\ 
fifa               & 19178 & 28   & Reg & -0.349 & 0.663  \\ 
superconductivity  & 21263 & 81   & Reg & 0.141  & 0.367  \\ 
kings\_county      & 21613 & 21   & Reg & 0.842  & 0.882  \\ 
health\_insurance  & 22272 & 11   & Reg & 0.601  & 0.749  \\  
cps88wages         & 28155 & 6    & Reg & 0.250  & 0.479  \\ 
\midrule
\multicolumn{4}{r|}{\textbf{Mean}} & 0.431 &  0.611 \\ 
\multicolumn{4}{r|}{\textbf{Median}} & 0.587 & 0.659 \\ 
\bottomrule
\end{tabular}
\end{center}
\end{table}

Each dataset is randomly split into a pretrain set, a downstream labeled set, and a test set.
The downstream linear predictor is fit via ridge regression.
Hyperparameter grid search is conducted at both encoder learning and downstream stages.
The evaluation metric is the mean squared error. 
Let $\err_d$ be the actual prediction error when $\Phi$ is $d$-dimensional.
We test $d$ up to $d_0 = 512$.
Let $d^*$ be the one that minimizes $\err_d$.
We use the following four types of contexts. 
\begin{itemize}[itemsep=0pt]
    \item RBF kernels: $k(x,a) = \exp(-\gamma \norm{x-a}^2)$. We define $P^+$ as $P^+\sparen{a}{x} \propto k(x,a)$ for each $x$.
    \item KNN: $P^+\sparen{a}{x} = K^{-1}$ if $a$ is a KNN of $x$, else 0.
    \item RBF$_{\textrm{mask}}$: First, randomly mask 20\% of the features, and then apply RBF kernels to the other features. Specifically, we randomly draw 50 masks, and use the average of $P^+$ over all masks as the context. We do not use masking alone because its association is too strong.
    \item KNN$_{\textrm{mask}}$: 20\% random masking and then apply KNN.
\end{itemize}
For each of these contexts, $\gA = \gX$.
For each type, we use 35 contexts by adjusting $\gamma$ for RBF kernels and $K$ for KNN. By doing so, we adjust the association level between $X$ and $A$.
We make sure that contexts in every type range from very weak to very strong association.

In \Cref{tab:correlation} we report the correlation between $\tau$ and $\err_{d^*}$ over all 140 contexts from the four types on 28 classification (Cls) and regression (Reg) datasets from OpenML \cite{OpenML2013} that are widely used in machine learning research.
The most common metric is the Pearson correlation, but it can only detect linear correlations, while the correlation between $\tau$ and $\err_{d^*}$ is not necessarily linear. Thus, we also report the distance correlation \cite{648ae489-f5d9-3200-b1e9-ab0b264416e5}, another common metric that can detect non-linear correlations, but it cannot tell if the correlation is positive or negative because it is always non-negative.

The median reported in the table shows that on more than half of the datasets, the Pearson correlation is over $0.5$, 
which is generally considered a strong correlation.
The distance correlation is even higher.
As expected, the metric does not work on all datasets.
For example, the Pearson correlation is very negative on \texttt{brazilian\_houses} and \texttt{fifa}.

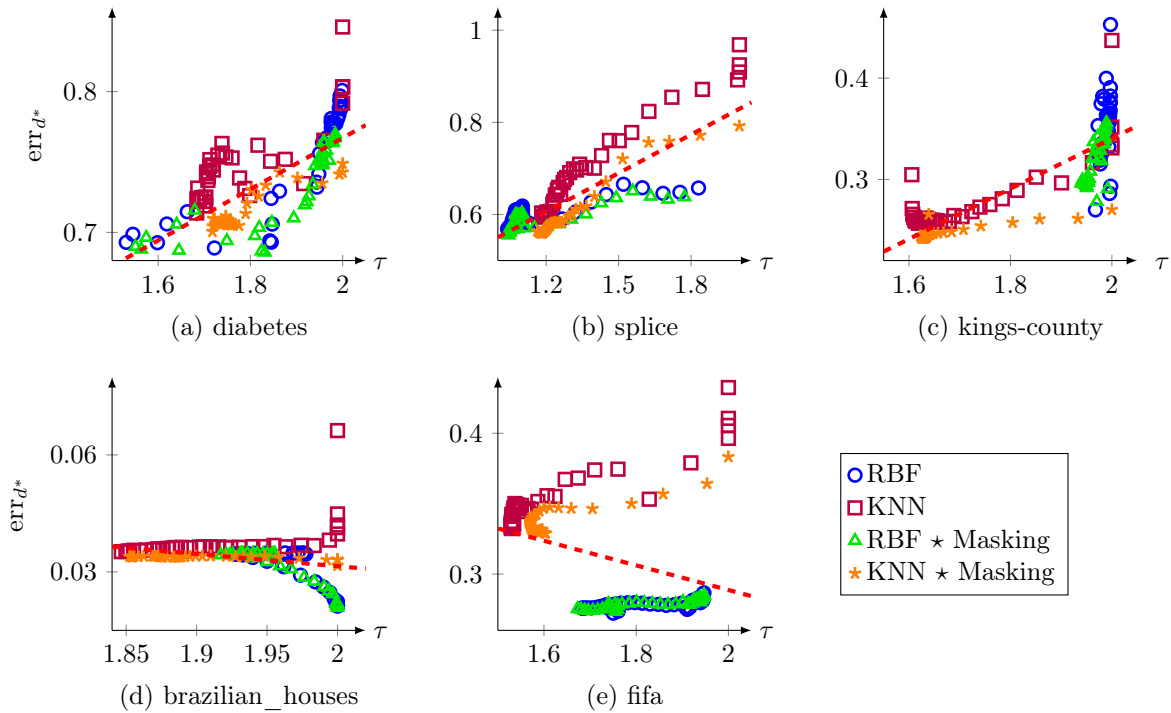
\begin{figure}[t]
    \centering
    \begin{tikzpicture}
\begin{axis}[
name=plot1,
clip marker paths=true,
 height = .30\linewidth,
    width = .30\linewidth,
    axis lines=left,  
    axis line style={-latex},  
    xlabel style = {
        at={(1.06,0.05)},
        anchor=north,
    },
title style={at={(0.5,-0.24)}, anchor=north},
title={(a) diabetes},
    xmin=1.5, xmax=2.05,
    ymin=0.68, ymax=0.86,
    xtick distance = 0.2,
    ytick distance = 0.1,
    xlabel={$\tau$},
    ylabel={$\err_{d^*}$},
    legend pos=north west,
    scatter/classes={
        1={mark=o,blue,line width=1pt,mark size=2.4pt},   
        2={mark=square,purple,line width=1pt,mark size=2.4pt}, 
        3={mark=triangle,green!90!black,line width=1pt,mark size=2.4pt}, 
        4={mark=star,orange,line width=1pt,mark size=2.4pt}
    }
]

\addplot[scatter, only marks, scatter src=explicit symbolic] 
    table[col sep=comma, x=tau, y=mse, meta=type] {data/diabetes.csv};

    \addplot[red, ultra thick, dashed, domain=1.5:2.05] {0.18192012*x + 0.40324192};
\end{axis}

\begin{axis}[
name=plot2,
at={($(plot1.south east)+(50,0)$)},
clip marker paths=true,
 height = .30\linewidth,
    width = .30\linewidth,
    axis lines=left,  
    axis line style={-latex},  
    xlabel style = {
        at={(1.06,0.05)},
        anchor=north,
    },
title style={at={(0.5,-0.24)}, anchor=north},
title={(b) splice},
    xmin=1.0, xmax=2.05,
    ymin=0.5, ymax=1.05,
    xtick distance = 0.3,
    ytick distance = 0.2,
    xlabel={$\tau$},
    legend pos=north west,
    scatter/classes={
        1={mark=o,blue,line width=1pt,mark size=2.4pt},   
        2={mark=square,purple,line width=1pt,mark size=2.4pt}, 
        3={mark=triangle,green!90!black,line width=1pt,mark size=2.4pt}, 
        4={mark=star,orange,line width=1pt,mark size=2.4pt}
    }
]

\addplot[scatter, only marks, scatter src=explicit symbolic] 
    table[col sep=comma, x=tau, y=mse, meta=type] {data/splice.csv};

    \addplot[red, ultra thick, dashed, domain=1.0:2.05] {0.27877287*x + 0.27161478};
\end{axis}

\begin{axis}[
name=plot3,
at={($(plot2.south east)+(50,0)$)},
clip marker paths=true,
 height = .30\linewidth,
    width = .30\linewidth,
    axis lines=left,  
    axis line style={-latex},  
    xlabel style = {
        at={(1.1,0.05)},
        anchor=north,
    },
title style={at={(0.5,-0.24)}, anchor=north},
title={(c) kings-county},
    xmin=1.55, xmax=2.05,
    ymin=0.22, ymax=0.47,
    xtick distance = 0.2,
    ytick distance = 0.1,
    xlabel={$\tau$},
    legend pos=north west,
    scatter/classes={
        1={mark=o,blue,line width=1pt,mark size=2.4pt},   
        2={mark=square,purple,line width=1pt,mark size=2.4pt}, 
        3={mark=triangle,green!90!black,line width=1pt,mark size=2.4pt}, 
        4={mark=star,orange,line width=1pt,mark size=2.4pt}
    }
]

\addplot[scatter, only marks, scatter src=explicit symbolic] 
    table[col sep=comma, x=tau, y=mse, meta=type] {data/kings-county.csv};

    \addplot[red, ultra thick, dashed, domain=1.55:2.05] {0.24798084*x -0.15551522};
\end{axis}

\begin{axis}[
name=plot4,
at={($(plot1.south west)+(0,-140)$)},
clip marker paths=true,
 height = .30\linewidth,
    width = .30\linewidth,
    axis lines=left,  
    axis line style={-latex},  
    ylabel style={
        at={(-0.3,0.5)}
    },
    xlabel style = {
        at={(1.06,0.05)},
        anchor=north,
    },
scaled ticks=false, 
/pgf/number format/fixed   ,
title style={at={(0.5,-0.24)}, anchor=north},
title={(d) brazilian\_houses},
    xmin=1.84, xmax=2.02,
    ymin=0.015, ymax=0.08,
    xtick distance = 0.05,
    ytick distance = 0.03,
    xlabel={$\tau$},
    ylabel={$\err_{d^*}$},
    legend pos=north west,
    scatter/classes={
        1={mark=o,blue,line width=1pt,mark size=2.4pt},   
        2={mark=square,purple,line width=1pt,mark size=2.4pt}, 
        3={mark=triangle,green!90!black,line width=1pt,mark size=2.4pt}, 
        4={mark=star,orange,line width=1pt,mark size=2.4pt}
    }
]

\addplot[scatter, only marks, scatter src=explicit symbolic] 
    table[col sep=comma, x=tau, y=mse, meta=type] {data/brazilian.csv};

    \addplot[red, ultra thick, dashed, domain=1.84:2.02] {-0.031479845*x + 0.09441593};
\end{axis}

\begin{axis}[
name=plot5,
at={($(plot4.south east)+(50,0)$)},
clip marker paths=true,
 height = .30\linewidth,
    width = .30\linewidth,
    axis lines=left,  
    axis line style={-latex},  
    xlabel style = {
        at={(1.06,0.05)},
        anchor=north,
    },
title style={at={(0.5,-0.24)}, anchor=north},
title={(e) fifa},
    xmin=1.5, xmax=2.05,
    ymin=0.26, ymax=0.44,
    xtick distance = 0.2,
    ytick distance = 0.1,
    xlabel={$\tau$},
    legend style={
        at={(1.35,0.7)},
        anchor=north west,
        cells={anchor=west},
    },
    scatter/classes={
        1={mark=o,blue,line width=1pt,mark size=2.4pt},   
        2={mark=square,purple,line width=1pt,mark size=2.4pt}, 
        3={mark=triangle,green!90!black,line width=1pt,mark size=2.4pt}, 
        4={mark=star,orange,line width=1pt,mark size=2.4pt}
    }
]

\addplot[scatter, only marks, scatter src=explicit symbolic] 
    table[col sep=comma, x=tau, y=mse, meta=type] {data/fifa.csv};
\legend{RBF,KNN,RBF $\star$ Masking,KNN $\star$ Masking};
    \addplot[red, ultra thick, dashed, domain=1.5:2.05] {-0.08744848*x + 0.46356109};
\end{axis}

\end{tikzpicture}
    \caption{Scatter plots of $\tau$ versus $\err_{d^*}$. Dashed line: Linear fit.}
    \label{fig:usefulness-metric}
\end{figure}

To understand when our metric might fail, we further visualize the results by plotting $\tau$ against $\err_{d^*}$ on five of the datasets in \Cref{fig:usefulness-metric}.
In this figure, plots (a), (b), and (c) are three success cases where a clear positive correlation can be observed, and plots (d) and (e) display two failure cases.
Plot (d) shows a common failure case: if $\tau$ is very close to $2 = \beta + 1$, meaning that the metric believes that the association is extremely weak or extremely strong, then the metric will predict that the context is bad.
However, a generally bad context can still be good on some tasks.
For example, a very weak context still works well on a task that only uses the top-$5$ singular functions of the context, as we have shown in \Cref{sec:metric}.
Therefore, it is advisable to abstain from using the metric when it is too close to $\beta + 1$.

Plot (e) shows a case where the metric is generally good for every single context type but has poor cross-type behavior. Specifically, it fails to predict that KNN is worse than RBF on this dataset.
This suggests that the metric might not be able to compare different types of contexts. For example, if two contexts of completely different types have similar spectra, then the metric will indicate that they are similarly useful. This is because the metric only depends on the spectrum.
However, it could be possible that for a particular task, one context is good and the other is bad, and our metric cannot reflect this disparity.

Overall, although there does not exist a universal metric that works for all contexts and tasks, and our metric does have failure cases, the experiment results here provide empirical evidence that more often than not, the proposed metric correlates well with the actual prediction error of the downstream linear probe. Hence, the proposed metric is useful for choosing hyperparameters and comparing contexts in practice.

\section{Conclusion}
\label{sec:conclusion}

The contexture theory established in this paper characterizes the mechanism of representation learning by clarifying the target of representation learning---the top singular functions of the expectation operator induced by the contexture, that is the association between the input $X$ and a context variable $A$. 
A representation that learns the contexture achieves the lowest worst-case approximation error on the class of tasks compatible with the context.
We show that most representation learning approaches could be cast as learning the contexture, and empirically demonstrate that the representations learned by large neural networks are highly aligned with the top singular functions.
We further analyze how to evaluate the usefulness of a context using its spectrum, and the key takeaway is that a good context should have a moderate association between $X$ and $A$.

Our analysis has three limitations, which lead to three open problems. First, our analysis focused on the minimizers of the objectives. However, \cite{cohen2021gradient} showed that deep models trained by popular gradient methods do not find the minimizers, but instead oscillate around the \emph{edge of stability}.
The open problem is whether the representation is always aligned with the top-$d$ singular functions as it is oscillating.
Second, we did not discuss the impact of the \textit{inductive bias} of the model architecture, such as the translation invariance of convolutional neural networks. Such inductive biases can affect the context and, therefore, the encoder. We pose how to integrate the effect of these biases into our theory as an open problem.
Third, our theory assumes that $\px$ is fixed. In practice, however, there is always a data distribution shift from pretraining to downstream.
Hence, the open problem is how to refine our theory to handle such distribution shifts.

\subsection*{Acknowledgements}
We thank Rattana Pukdee, Hugo Contant, Chenhao Zhang and Zihao Ye for their feedback on this paper.
Kai Yang did this work at Carnegie Mellon University, under the PKU-CMU undergraduate research program.
We acknowledge the support of NSF via IIS-2211907, ONR via N00014-23-1-2368, AFRL via FA8750-23-2-1015, and DARPA via HR00112020006.


\bibliographystyle{alpha}

\begin{thebibliography}{STWCK05}

\bibitem[ADM{\etalchar{+}}23]{assran2023self}
Mahmoud Assran, Quentin Duval, Ishan Misra, Piotr Bojanowski, Pascal Vincent, Michael Rabbat, Yann LeCun, and Nicolas Ballas.
\newblock Self-supervised learning from images with a joint-embedding predictive architecture.
\newblock In {\em Proceedings of the IEEE/CVF Conference on Computer Vision and Pattern Recognition}, pages 15619--15629, 2023.

\bibitem[AS18]{achille2018emergence}
Alessandro Achille and Stefano Soatto.
\newblock Emergence of invariance and disentanglement in deep representations.
\newblock {\em Journal of Machine Learning Research}, 19(50):1--34, 2018.

\bibitem[BA00]{baudat2000generalized}
Gaston Baudat and Fatiha Anouar.
\newblock Generalized discriminant analysis using a kernel approach.
\newblock {\em Neural computation}, 12(10):2385--2404, 2000.

\bibitem[BDR{\etalchar{+}}04]{bengio2004learning}
Yoshua Bengio, Olivier Delalleau, Nicolas~Le Roux, Jean-Fran{\c{c}}ois Paiement, Pascal Vincent, and Marie Ouimet.
\newblock Learning eigenfunctions links spectral embedding and kernel {PCA}.
\newblock {\em Neural computation}, 16(10):2197--2219, 2004.

\bibitem[BHA{\etalchar{+}}21]{bommasani2021opportunities}
Rishi Bommasani, Drew~A Hudson, Ehsan Adeli, Russ Altman, Simran Arora, Sydney von Arx, Michael~S Bernstein, Jeannette Bohg, Antoine Bosselut, Emma Brunskill, et~al.
\newblock On the opportunities and risks of foundation models.
\newblock {\em arXiv:2108.07258}, 2021.

\bibitem[BHX{\etalchar{+}}22]{baevski2022data2vec}
Alexei Baevski, Wei-Ning Hsu, Qiantong Xu, Arun Babu, Jiatao Gu, and Michael Auli.
\newblock Data2vec: A general framework for self-supervised learning in speech, vision and language.
\newblock In {\em International conference on machine learning}, pages 1298--1312. PMLR, 2022.

\bibitem[BIS{\etalchar{+}}23]{Balestriero2023ACO}
Randall Balestriero, Mark Ibrahim, Vlad Sobal, Ari~S. Morcos, Shashank Shekhar, Tom Goldstein, Florian Bordes, Adrien Bardes, Gr{\'e}goire Mialon, Yuandong Tian, Avi Schwarzschild, Andrew~Gordon Wilson, Jonas Geiping, Quentin Garrido, Pierre Fernandez, Amir Bar, Hamed Pirsiavash, Yann LeCun, and Micah Goldblum.
\newblock A cookbook of self-supervised learning.
\newblock {\em arxiv:2304.12210}, 2023.

\bibitem[BL22]{balestriero2022contrastive}
Randall Balestriero and Yann LeCun.
\newblock Contrastive and non-contrastive self-supervised learning recover global and local spectral embedding methods.
\newblock {\em Advances in Neural Information Processing Systems}, 35:26671--26685, 2022.

\bibitem[BN02]{belkin2002using}
Mikhail Belkin and Partha Niyogi.
\newblock Using manifold stucture for partially labeled classification.
\newblock In {\em Proc. Advances in Neural Information Processing Systems}, Vancouver, BC, Canada, December 2002.

\bibitem[BN03]{belkin2003laplacian}
Mikhail Belkin and Partha Niyogi.
\newblock Laplacian eigenmaps for dimensionality reduction and data representation.
\newblock {\em Neural computation}, 15(6):1373--1396, 2003.

\bibitem[BPL22]{bardes2021vicreg}
Adrien Bardes, Jean Ponce, and Yann LeCun.
\newblock {VICR}eg: Variance-invariance-covariance regularization for self-supervised learning.
\newblock In {\em International Conference on Learning Representations}, 2022.

\bibitem[BRR{\etalchar{+}}23]{buchholz2023learning}
Simon Buchholz, Goutham Rajendran, Elan Rosenfeld, Bryon Aragam, Bernhard Sch{\"o}lkopf, and Pradeep Ravikumar.
\newblock Learning linear causal representations from interventions under general nonlinear mixing.
\newblock In {\em Proc. Advances in Neural Information Processing Systems}, New Orleans, LA, December 2023.

\bibitem[CKL{\etalchar{+}}21]{cohen2021gradient}
Jeremy Cohen, Simran Kaur, Yuanzhi Li, J~Zico Kolter, and Ameet Talwalkar.
\newblock Gradient descent on neural networks typically occurs at the edge of stability.
\newblock In {\em International Conference on Learning Representations}, 2021.

\bibitem[CKNH20]{chen2020simple}
Ting Chen, Simon Kornblith, Mohammad Norouzi, and Geoffrey Hinton.
\newblock A simple framework for contrastive learning of visual representations.
\newblock In {\em Proc. International Conference on Machine Learning}, July 2020.

\bibitem[CL06]{coifman2006diffusion}
Ronald~R Coifman and St{\'e}phane Lafon.
\newblock Diffusion maps.
\newblock {\em Applied and computational harmonic analysis}, 21(1):5--30, 2006.

\bibitem[DCLT19]{devlin2018bert}
Jacob Devlin, Ming{-}Wei Chang, Kenton Lee, and Kristina Toutanova.
\newblock {BERT:} pre-training of deep bidirectional transformers for language understanding.
\newblock In Jill Burstein, Christy Doran, and Thamar Solorio, editors, {\em Proceedings of the 2019 Conference of the North American Chapter of the Association for Computational Linguistics: Human Language Technologies, {NAACL-HLT} 2019, Minneapolis, MN, USA, June 2-7, 2019, Volume 1 (Long and Short Papers)}, pages 4171--4186. Association for Computational Linguistics, June 2019.

\bibitem[DLS22]{damian2022neural}
Alexandru Damian, Jason Lee, and Mahdi Soltanolkotabi.
\newblock Neural networks can learn representations with gradient descent.
\newblock In {\em Conference on Learning Theory}, London, UK, July 2022.

\bibitem[FPM{\etalchar{+}}24]{fumero2024latent}
Marco Fumero, Marco Pegoraro, Valentino Maiorca, Francesco Locatello, and Emanuele Rodol\`{a}.
\newblock Latent functional maps: a spectral framework for representation alignment.
\newblock In A.~Globerson, L.~Mackey, D.~Belgrave, A.~Fan, U.~Paquet, J.~Tomczak, and C.~Zhang, editors, {\em Advances in Neural Information Processing Systems}, volume~37, pages 66178--66203. Curran Associates, Inc., 2024.

\bibitem[GSA{\etalchar{+}}20]{NEURIPS2020_f3ada80d}
Jean-Bastien Grill, Florian Strub, Florent Altch\'{e}, Corentin Tallec, Pierre Richemond, Elena Buchatskaya, Carl Doersch, Bernardo Avila~Pires, Zhaohan Guo, Mohammad Gheshlaghi~Azar, Bilal Piot, Koray Kavukcuoglu, Remi Munos, and Michal Valko.
\newblock Bootstrap your own latent - a new approach to self-supervised learning.
\newblock In {\em Proc. Advances in Neural Information Processing Systems}, December 2020.

\bibitem[GWDL23]{gupta2023siamese}
Agrim Gupta, Jiajun Wu, Jia Deng, and Fei-Fei Li.
\newblock Siamese masked autoencoders.
\newblock In {\em Advances in Neural Information Processing Systems}, New Orleans, LA, December 2023.

\bibitem[HAE16]{huh2016makes}
Minyoung Huh, Pulkit Agrawal, and Alexei~A Efros.
\newblock What makes imagenet good for transfer learning?
\newblock {\em arXiv:1608.08614}, 2016.

\bibitem[HAP{\etalchar{+}}18]{higgins2018towards}
Irina Higgins, David Amos, David Pfau, Sebastien Racaniere, Loic Matthey, Danilo Rezende, and Alexander Lerchner.
\newblock Towards a definition of disentangled representations.
\newblock {\em arXiv:1812.02230}, 2018.

\bibitem[HCWI24]{pmlr-v235-huh24a}
Minyoung Huh, Brian Cheung, Tongzhou Wang, and Phillip Isola.
\newblock Position: The platonic representation hypothesis.
\newblock In {\em Proc. International Conference on Machine Learning}, Vienna, Austria, July 2024.

\bibitem[HCX{\etalchar{+}}22]{he2022masked}
Kaiming He, Xinlei Chen, Saining Xie, Yanghao Li, Piotr Doll{\'a}r, and Ross Girshick.
\newblock Masked autoencoders are scalable vision learners.
\newblock In {\em Proceedings of the IEEE/CVF Conference on Computer Vision and Pattern Recognition}, New Orleans, LA, June 2022.

\bibitem[HWGM21]{haochen2021provable}
Jeff~Z HaoChen, Colin Wei, Adrien Gaidon, and Tengyu Ma.
\newblock Provable guarantees for self-supervised deep learning with spectral contrastive loss.
\newblock In {\em Proc. Advances in Neural Information Processing Systems}, December 2021.

\bibitem[JHM23]{johnson2022contrastive}
Daniel~D. Johnson, Ayoub~El Hanchi, and Chris~J. Maddison.
\newblock Contrastive learning can find an optimal basis for approximately view-invariant functions.
\newblock In {\em International Conference on Learning Representations}, 2023.

\bibitem[JVLT22]{jing2022understanding}
Li~Jing, Pascal Vincent, Yann LeCun, and Yuandong Tian.
\newblock Understanding dimensional collapse in contrastive self-supervised learning.
\newblock In {\em International Conference on Learning Representations}, 2022.

\bibitem[KB15]{DBLP:journals/corr/KingmaB14}
Diederik~P. Kingma and Jimmy Ba.
\newblock Adam: {A} method for stochastic optimization.
\newblock In Yoshua Bengio and Yann LeCun, editors, {\em 3rd International Conference on Learning Representations, {ICLR} 2015, San Diego, CA, USA, May 7-9, 2015, Conference Track Proceedings}, 2015.

\bibitem[KKMH20]{khemakhem20a}
Ilyes Khemakhem, Diederik Kingma, Ricardo Monti, and Aapo Hyv{\"a}rinen.
\newblock Variational autoencoders and nonlinear {ICA}: A unifying framework.
\newblock In {\em Proc. International Conference on Artificial Intelligence and Statistics}, August 2020.

\bibitem[KMB08]{kriegeskorte2008representational}
Nikolaus Kriegeskorte, Marieke Mur, and Peter~A Bandettini.
\newblock Representational similarity analysis-connecting the branches of systems neuroscience.
\newblock {\em Frontiers in systems neuroscience}, 2:249, 2008.

\bibitem[KMH{\etalchar{+}}20]{kaplan2020scaling}
Jared Kaplan, Sam McCandlish, Tom Henighan, Tom~B Brown, Benjamin Chess, Rewon Child, Scott Gray, Alec Radford, Jeffrey Wu, and Dario Amodei.
\newblock Scaling laws for neural language models.
\newblock {\em arXiv:2001.08361}, 2020.

\bibitem[KNLH19]{pmlr-v97-kornblith19a}
Simon Kornblith, Mohammad Norouzi, Honglak Lee, and Geoffrey Hinton.
\newblock Similarity of neural network representations revisited.
\newblock In {\em Proc. International Conference on Machine Learning}, Long Beach, CA, June 2019.

\bibitem[LBBH98]{lecun1998gradient}
Yann LeCun, L{\'e}on Bottou, Yoshua Bengio, and Patrick Haffner.
\newblock Gradient-based learning applied to document recognition.
\newblock {\em Proceedings of the IEEE}, 86(11):2278--2324, 1998.

\bibitem[LH17]{loshchilov2017decoupled}
Ilya Loshchilov and Frank Hutter.
\newblock Decoupled weight decay regularization.
\newblock {\em arXiv preprint arXiv:1711.05101}, 2017.

\bibitem[LLM04]{liu2004improving}
Qingshan Liu, Hanqing Lu, and Songde Ma.
\newblock Improving kernel fisher discriminant analysis for face recognition.
\newblock {\em IEEE transactions on circuits and systems for video technology}, 14(1):42--49, 2004.

\bibitem[MRW{\etalchar{+}}99]{mika1999fisher}
Sebastian Mika, Gunnar Ratsch, Jason Weston, Bernhard Scholkopf, and Klaus-Robert Mullers.
\newblock Fisher discriminant analysis with kernels.
\newblock In {\em Neural networks for signal processing IX: Proceedings of the 1999 IEEE signal processing society workshop (cat. no. 98th8468)}, pages 41--48. Ieee, 1999.

\bibitem[NCL{\etalchar{+}}23]{nanda2023progress}
Neel Nanda, Lawrence Chan, Tom Lieberum, Jess Smith, and Jacob Steinhardt.
\newblock Progress measures for grokking via mechanistic interpretability.
\newblock In {\em The Eleventh International Conference on Learning Representations}, 2023.

\bibitem[NLW23]{nanda-etal-2023-emergent}
Neel Nanda, Andrew Lee, and Martin Wattenberg.
\newblock Emergent linear representations in world models of self-supervised sequence models.
\newblock In Yonatan Belinkov, Sophie Hao, Jaap Jumelet, Najoung Kim, Arya McCarthy, and Hosein Mohebbi, editors, {\em Proceedings of the 6th BlackboxNLP Workshop: Analyzing and Interpreting Neural Networks for NLP}, pages 16--30, Singapore, December 2023. Association for Computational Linguistics.

\bibitem[ODM{\etalchar{+}}23]{oquab2023dinov2}
Maxime Oquab, Timoth{\'e}e Darcet, Th{\'e}o Moutakanni, Huy Vo, Marc Szafraniec, Vasil Khalidov, Pierre Fernandez, Daniel Haziza, Francisco Massa, Alaaeldin El-Nouby, et~al.
\newblock Dinov2: Learning robust visual features without supervision.
\newblock {\em arXiv:2304.07193}, 2023.

\bibitem[OLV18]{oord2018representation}
Aaron van~den Oord, Yazhe Li, and Oriol Vinyals.
\newblock Representation learning with contrastive predictive coding.
\newblock {\em arXiv:1807.03748}, 2018.

\bibitem[OWJ{\etalchar{+}}22]{ouyang2022training}
Long Ouyang, Jeffrey Wu, Xu~Jiang, Diogo Almeida, Carroll Wainwright, Pamela Mishkin, Chong Zhang, Sandhini Agarwal, Katarina Slama, Alex Ray, et~al.
\newblock Training language models to follow instructions with human feedback.
\newblock In {\em Proc. Advances in Neural Information Processing Systems}, New Orleans, LA, December 2022.

\bibitem[Pan03]{paninski2003estimation}
Liam Paninski.
\newblock Estimation of entropy and mutual information.
\newblock {\em Neural computation}, 15(6):1191--1253, 2003.

\bibitem[PHD20]{papyan2020prevalence}
Vardan Papyan, XY~Han, and David~L Donoho.
\newblock Prevalence of neural collapse during the terminal phase of deep learning training.
\newblock {\em Proceedings of the National Academy of Sciences}, 117(40):24652--24663, 2020.

\bibitem[RDS{\etalchar{+}}15]{russakovsky2015imagenet}
Olga Russakovsky, Jia Deng, Hao Su, Jonathan Krause, Sanjeev Satheesh, Sean Ma, Zhiheng Huang, Andrej Karpathy, Aditya Khosla, Michael Bernstein, et~al.
\newblock Imagenet large scale visual recognition challenge.
\newblock {\em International Journal of Computer Vision}, 115:211--252, 2015.

\bibitem[RS00]{roweis2000nonlinear}
Sam~T Roweis and Lawrence~K Saul.
\newblock Nonlinear dimensionality reduction by locally linear embedding.
\newblock {\em Science}, 290(5500):2323--2326, 2000.

\bibitem[RWC{\etalchar{+}}19]{radford2019language}
Alec Radford, Jeffrey Wu, Rewon Child, David Luan, Dario Amodei, Ilya Sutskever, et~al.
\newblock Language models are unsupervised multitask learners.
\newblock {\em OpenAI blog}, 1(8):9, 2019.

\bibitem[Shi00]{shimodaira2000improving}
Hidetoshi Shimodaira.
\newblock Improving predictive inference under covariate shift by weighting the log-likelihood function.
\newblock {\em Journal of Statistical Planning and Inference}, 90(2):227--244, 2000.

\bibitem[SLB{\etalchar{+}}21]{scholkopf2021toward}
Bernhard Sch{\"o}lkopf, Francesco Locatello, Stefan Bauer, Nan~Rosemary Ke, Nal Kalchbrenner, Anirudh Goyal, and Yoshua Bengio.
\newblock Toward causal representation learning.
\newblock {\em Proceedings of the IEEE}, 109(5):612--634, May 2021.

\bibitem[SRB07]{648ae489-f5d9-3200-b1e9-ab0b264416e5}
Gábor~J. Székely, Maria~L. Rizzo, and Nail~K. Bakirov.
\newblock Measuring and testing dependence by correlation of distances.
\newblock {\em The Annals of Statistics}, 35(6):2769--2794, 2007.

\bibitem[STWCK05]{shawe2005eigenspectrum}
John Shawe-Taylor, Christopher~KI Williams, Nello Cristianini, and Jaz Kandola.
\newblock On the eigenspectrum of the gram matrix and the generalization error of kernel-{PCA}.
\newblock {\em IEEE Transactions on Information Theory}, 51(7):2510--2522, 2005.

\bibitem[Sut24]{ilya2024talk}
Ilya Sutskever.
\newblock Test of time award talk: Sequence to sequence learning with neural networks.
\newblock Advances in Neural Information Processing Systems, 2024.

\bibitem[SVZ14]{DBLP:journals/corr/SimonyanVZ13}
Karen Simonyan, Andrea Vedaldi, and Andrew Zisserman.
\newblock Deep inside convolutional networks: Visualising image classification models and saliency maps.
\newblock In Yoshua Bengio and Yann LeCun, editors, {\em 2nd International Conference on Learning Representations, {ICLR} 2014, Banff, AB, Canada, April 14-16, 2014, Workshop Track Proceedings}, 2014.

\bibitem[SZBK{\etalchar{+}}23]{shwartz2023information}
Ravid Shwartz-Ziv, Randall Balestriero, Kenji Kawaguchi, Tim~GJ Rudner, and Yann LeCun.
\newblock An information theory perspective on variance-invariance-covariance regularization.
\newblock {\em Advances in Neural Information Processing Systems}, 36:33965--33998, 2023.

\bibitem[Tia22]{tian2022understanding}
Yuandong Tian.
\newblock Understanding deep contrastive learning via coordinate-wise optimization.
\newblock In {\em Proc. Advances in Neural Information Processing Systems}, New Orleans, LA, December 2022.

\bibitem[TKVB22]{thrampoulidis2022imbalance}
Christos Thrampoulidis, Ganesh~Ramachandra Kini, Vala Vakilian, and Tina Behnia.
\newblock Imbalance trouble: Revisiting neural-collapse geometry.
\newblock In {\em Proc. Advances in Neural Information Processing Systems}, New Orleans, LA, December 2022.

\bibitem[VAST24]{varici2024general}
Burak Var{\i}c{\i}, Emre Acart{\"u}rk, Karthikeyan Shanmugam, and Ali Tajer.
\newblock General identifiability and achievability for causal representation learning.
\newblock In {\em Proc. International Conference on Artificial Intelligence and Statistics}, Valencia, Spain, May 2024.

\bibitem[VvRBT13]{OpenML2013}
Joaquin Vanschoren, Jan~N. van Rijn, Bernd Bischl, and Luis Torgo.
\newblock Openml: Networked science in machine learning.
\newblock {\em SIGKDD Explorations}, 15(2):49--60, 2013.

\bibitem[WTB{\etalchar{+}}22]{wei2022emergent}
Jason Wei, Yi~Tay, Rishi Bommasani, Colin Raffel, Barret Zoph, Sebastian Borgeaud, Dani Yogatama, Maarten Bosma, Denny Zhou, Donald Metzler, Ed~H. Chi, Tatsunori Hashimoto, Oriol Vinyals, Percy Liang, Jeff Dean, and William Fedus.
\newblock Emergent abilities of large language models.
\newblock {\em Transactions on Machine Learning Research}, 2022.
\newblock Survey Certification.

\bibitem[Yar18]{yarotsky2018optimal}
Dmitry Yarotsky.
\newblock Optimal approximation of continuous functions by very deep relu networks.
\newblock In {\em Conference on Learning Theory}, pages 639--649, Stockholm, Sweden, July 2018.

\bibitem[YN22]{yin-neubig-2022-interpreting}
Kayo Yin and Graham Neubig.
\newblock Interpreting language models with contrastive explanations.
\newblock In Yoav Goldberg, Zornitsa Kozareva, and Yue Zhang, editors, {\em Proceedings of the 2022 Conference on Empirical Methods in Natural Language Processing}, pages 184--198, Abu Dhabi, United Arab Emirates, December 2022. Association for Computational Linguistics.

\bibitem[YXL{\etalchar{+}}24]{yao2024multi}
Dingling Yao, Danru Xu, S{\'e}bastien Lachapelle, Sara Magliacane, Perouz Taslakian, Georg Martius, Julius von K{\"u}gelgen, and Francesco Locatello.
\newblock Multi-view causal representation learning with partial observability.
\newblock In {\em Proc. International Conference on Learning Representations}, Vienna, Austria, May 2024.

\bibitem[ZBH{\etalchar{+}}17]{zhang2017understanding}
Chiyuan Zhang, Samy Bengio, Moritz Hardt, Benjamin Recht, and Oriol Vinyals.
\newblock Understanding deep learning requires rethinking generalization.
\newblock In {\em International Conference on Learning Representations}, 2017.

\bibitem[ZF14]{zeiler2014visualizing}
Matthew~D Zeiler and Rob Fergus.
\newblock Visualizing and understanding convolutional networks.
\newblock In {\em Computer Vision--ECCV 2014: 13th European Conference, Zurich, Switzerland, September 6-12, 2014, Proceedings, Part I 13}, pages 818--833. Springer, 2014.

\bibitem[ZGL03]{zhu2003semi}
Xiaojin Zhu, Zoubin Ghahramani, and John~D Lafferty.
\newblock Semi-supervised learning using gaussian fields and harmonic functions.
\newblock In {\em Proc. International Conference on Machine learning}, pages 912--919, Washington, DC, USA, August 2003.

\bibitem[ZJM{\etalchar{+}}21]{zbontar2021barlow}
Jure Zbontar, Li~Jing, Ishan Misra, Yann LeCun, and St{\'e}phane Deny.
\newblock Barlow twins: Self-supervised learning via redundancy reduction.
\newblock In {\em Proc. International Conference on Machine Learning}, pages 12310--12320, July 2021.

\bibitem[ZLR{\etalchar{+}}24]{zhai2023understanding}
Runtian Zhai, Bingbin Liu, Andrej Risteski, Zico Kolter, and Pradeep Ravikumar.
\newblock Understanding augmentation-based self-supervised representation learning via rkhs approximation and regression.
\newblock In {\em International Conference on Learning Representations}, 2024.

\end{thebibliography}
\newcommand{\etalchar}[1]{$^{#1}$}


\clearpage
\appendix

\section{Proofs for Section \ref{sec:objectives}}

\subsection{Proof of Theorem \ref{thm:thm-supervised-cls}}
\label{app:proof-thm-supervised-cls}

\paragraph{Theorem \ref{thm:thm-supervised-cls}.}
\textit{
Let $A$ be a one-hot random vector.
Suppose the linear layer is unbiased, that is $\vb = \vzero$.
Then, $\Phi^*$ minimizes $\gR(\Phi)$ if and only if it extracts the top-$d$ eigenspace of $ \tp \Lambda \tpstar$, where $\kp(a,a') = \sI [a=a']$, or $(\Lambda g)(a) = g(a) \pa(a)$.
If all classes have the same size, then the top-$d$ eigenfunctions of $ \tp \Lambda \tpstar$ and $\tp \tpstar$ are the same.}

The following lemma will be very useful in the proof.

\begin{lemma}
\label{lem:lem-supervised-kern}
    $\tp \Lambda \tpstar$ is the integral kernel operator of the following kernel
    \begin{equation*}
        k(x, x') =  \iint \kp(a, a') P^+(a| x) P^+(a'| x') da da'.
    \end{equation*}
\end{lemma}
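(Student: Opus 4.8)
The plan is to verify the claim by a direct computation: apply the composite operator $\tp \Lambda \tpstar$ to an arbitrary $f \in \lxp$, compose the three operators one factor at a time using the explicit integral formulas recorded above, and then reorganize the resulting multiple integral so that the dependence on $f$ is isolated, at which point the kernel $k(x,x')$ emerges as exactly the inner double integral over $a$ and $a'$.

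Concretely, I would proceed in three steps. First, by the formula for the adjoint, $(\tpstar f)(a') = \int f(x')\, \frac{P^+(a'|x')\,\px(x')}{\pa(a')}\, dx'$. Second, applying $\Lambda$ and using $d\pa(a') = \pa(a')\,da'$, the $\pa(a')$ in the measure cancels the $\pa(a')$ in the denominator, so $(\Lambda \tpstar f)(a) = \iint \kp(a,a')\, P^+(a'|x')\,\px(x')\, f(x')\, dx'\, da'$. Third, applying $\tp$ gives $(\tp \Lambda \tpstar f)(x) = \int P^+(a|x)\, (\Lambda \tpstar f)(a)\, da$, a triple integral in $a$, $a'$, and $x'$; collecting the $a$- and $a'$-integrals and pulling $f(x')\,\px(x')$ outside yields
\[
(\tp \Lambda \tpstar f)(x) = \int \paren{ \iint \kp(a,a')\, P^+(a|x)\, P^+(a'|x')\, da\, da' } f(x')\, \px(x')\, dx' = \int k(x,x')\, f(x')\, d\px(x'),
\]
which is precisely the statement that $\tp\Lambda\tpstar$ is the integral operator with kernel $k$ relative to $\px$.

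The only real issue is justifying the interchanges in the order of integration (Fubini--Tonelli) used when merging the three nested integrals and then regrouping them; this requires only mild integrability of $\kp$, $P^+(\cdot\mid\cdot)$, the marginals, and $f$, which holds under the standing assumptions that $\tpstar\tp$ and $\tp\tpstar$ are Hilbert--Schmidt and that $\kp = k_\Lambda$ is a genuine square-integrable kernel. If one wishes to be fully rigorous, the cleanest route is to first establish the identity for bounded $f$ and bounded kernels, where Fubini applies immediately, and then extend to all of $\lxp$ by density and continuity of the operators involved. The other bookkeeping point is tracking which reference measure ($da$, $d\px$, or $d\pa$) accompanies each integral, but the cancellation $d\pa(a')/\pa(a') = da'$ keeps this transparent, so no genuine difficulty arises.
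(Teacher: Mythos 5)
Your proof is correct and follows essentially the same route as the paper: apply $\tpstar$, then $\Lambda$, then $\tp$, and regroup the triple integral. The only cosmetic difference is that you write the adjoint as $(\tpstar f)(a') = \int f(x')\frac{P^+(a'|x')\px(x')}{\pa(a')}dx'$ at the outset, whereas the paper starts from $(\tpstar h)(a') = \int h(x')P^+(x'|a')dx'$ and then applies Bayes' rule $P^+(x'|a')\pa(a') = P^+(a'|x')\px(x')$ one step later; both arrive at the same cancellation of $\pa(a')$.
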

\begin{proof}
By definition, we have
\begin{equation*}
    (\tpstar h)(a') = \int h(x') P^+(x'|a')dx'.
\end{equation*}
Thus, we have
\begin{equation*}
\begin{aligned}
    (\Lambda \tpstar h)(a) &= \int (\tpstar h)(a') \kp(a, a') \pa(a') da'\\ 
    &= \iint h(x') P^+(x'|a') \kp(a,a') \pa(a') dx' da' \\
    &= \iint h(x') P^+(a'|x') \kp(a,a') \px(x') dx' da'.
\end{aligned}
\end{equation*}
This implies that
\begin{equation*}
\begin{aligned}
    ( \tp \Lambda \tpstar h)(x) &=  \int (\Lambda \tpstar h)(a) P^+(a|x) da \\
    &= \iiint h(x')  \kp(a,a') P^+(a|x) P^+(a'|x') \px(x')  da da' dx' \\
    &= \int h(x') k(x,x') \px(x') dx',
\end{aligned}
\end{equation*}
as desired.
\end{proof}

Then, we finish the proof of \Cref{thm:thm-supervised-cls}.

\begin{proof}
For any fixed $\Phi$, define
\begin{equation*}
    \gR(\Phi, \mW)  = \E_{P^+} \brac{ \norm{A - \mW \Phi(X)}_2^2 } = 
    \underset{X \sim \px}{\E} \; \underset{A \sim P^+(\cdot | X)}{\E} \; \brac{   \norm{A - \mW \Phi(X)}_2^2 } .
\end{equation*}
Assuming, without loss of generality, that $\E_{X \sim \px} [ \Phi_i  \Phi_j] = \delta_{ij}$; otherwise one can perform Gram-Schmidt process on $\Phi_i$ and change the value of $\mW$ respectively. Thus, it amounts to minimizing 

\begin{equation*}
\begin{aligned}
    \gR(\Phi, \mW) &=  \underset{X \sim \px}{\E} \; \underset{A \sim P^+(\cdot | X)}{\E} \; \brac{   \norm{ A - \mW  \Phi(X) }_2^2 } \\
    &= \underset{X \sim \px}{\E} \norm{\mW \Phi(X)}_2^2 - 2 \underset{X \sim \px}{\E} \; \underset{A \sim P^+(\cdot | X)}{\E} \; \dotp{ A, \mW  \Phi(X)} + \underset{A \sim \pa}{\E} \norm{A}_2^2 \\
    &= \norm{\mW}_F^2 - 2 \underset{X \sim \px}{\E} \; \underset{A \sim P^+(\cdot | X)}{\E} \; \dotp{ A, \mW  \Phi(X)} + \underset{A \sim \pa}{\E} \norm{ A}_2^2.
\end{aligned}
\end{equation*}
Denote $\mW = (w_{ij})_{1\leq i\leq d_A, 1\leq j \leq d}$. We have 
\begin{equation*}
    \frac{\partial  \gR}{\partial w_{ij}} = 2 w_{ij} - 2 \underset{X \sim \px}{\E} \; \underset{A \sim P^+(\cdot | X)}{\E} \; \brac{ A_i  \Phi_j(X)}  ,
\end{equation*}
which implies that for a fixed $\Phi$, the optimal $\mW$ that minimizes this loss should satisfy
\begin{equation*}
    w_{ij} = \underset{X \sim \px}{\E} \; \underset{A \sim P^+(\cdot | X)}{\E} \; \brac{ A_i  \Phi_j(X)}  .
\end{equation*}
Combining the minimizer of $\mW$ with $ \gR$ and notice that $\E_{A \sim \pa} \norm{ A}_2^2$ is a constant, it suffices to \textbf{maximize}
\begin{equation*}
\begin{aligned}
    F(\Phi) &= \sum_{i,j} \brac{\underset{X \sim \px}{\E} \; \underset{A \sim P^+(\cdot | X)}{\E} \;  A_i  \Phi_j(X)}^2 \\
    &= \int \sum_j  \Phi_j(x_1)  \Phi_j(x_2) \dotp{ a_1,  a_2} \px(x_1) P^+(a_1 | x_1) \px(x_2) P^+(a_2 | x_2) dx_1 da_1 dx_2 da_2 \\
    &= \iint \sum_j  \Phi_j(x_1)  \Phi_j(x_2) \hat k(x_1, x_2) \px(x_1)  \px(x_2) dx_1 dx_2,
\end{aligned}
\end{equation*}
where 
\begin{align}
    \hat k(x_1, x_2)  &= \iint \dotp{ a_1,  a_2} P^+(a_1 | x_1) P^+(a_2 | x_2) da_1 da_2 \label{eqn:supervised-unbiased-kernel} \\
    &= \iint \sI [a_1 = a_2] P^+(a_1 | x_1) P^+(a_2 | x_2) da_1 da_2. \nonumber
\end{align}
Thus $\Phi^*$ is a minimizer of $\gR(\Phi)$ if $\Phi^*$ extracts the top-$d$ eigenfunctions of $\hat k(x_1, x_2)$. Combining with \Cref{lem:lem-supervised-kern} yields that $k_\Lambda(a, a') = \sI[a = a']$.
Furthermore, we have $(\Lambda g)(a) = \int g(a') k_\Lambda(a, a') d\pa(a') = g(a)\pa(a)$, as desired.

If all classes have the same size, we have $\pa(a) \equiv c \in (0, 1)$ where $c$ is a constant. Thus $(\Lambda g)(a) = g(a)\pa(a) = cg(a)$, which implies that $ \tp \Lambda \tpstar = c \tp \tpstar$. This concludes that $\tp \Lambda \tpstar$ and $\tp \tpstar$ share the same top-$d$ eigenfunctions.
\end{proof}

\subsection{Proof of Theorem \ref{thm:thm-supervised-bal-cls}}
\label{app:proof-thm-supervised-bal-cls}

\paragraph{Theorem \ref{thm:thm-supervised-bal-cls}.}
\textit{Under the setting of \Cref{thm:thm-supervised-cls}, suppose the linear layer is biased. Then, $\Phi^*$ minimizes $\gR_{\rm bal}(\Phi)$ if and only if it learns the contexture of $P^+$.}

\begin{proof}
Let us first assume that the linear layer is unbiased, that is $\vb = \vzero$.
For any fixed $\Phi$, define
\begin{equation*}
    \gR(\Phi, \mW)  = \E_{P^+} \brac{ \frac{1}{\sqrt{\pa (A)}}\norm{A - \mW \Phi(X)}_2^2 } = 
    \underset{X \sim \px}{\E} \; \underset{A \sim P^+(\cdot | X)}{\E} \; \brac{   \frac{1}{\sqrt{\pa (A)}}\norm{A - \mW \Phi(X)}_2^2 } .
\end{equation*}
Assuming, without loss of generality, 
\begin{equation*}
    \underset{X \sim \px}{\E} \; \underset{A \sim P^+(\cdot | X)}{\E} \;  \brac{ \frac{1}{\sqrt{\pa (A)}} \Phi_i \Phi_j } = \delta_{ij};
\end{equation*}
otherwise we can perform Gram-Schmidt process on $\Phi_i$ and change the value of $\mW$ respectively. Thus it suffices to minimize 

\begin{equation*}
\begin{aligned}
    \gR(\Phi, \mW) &=  \underset{X \sim \px}{\E} \; \underset{A \sim P^+(\cdot | X)}{\E} \; \brac{  \frac{1}{\sqrt{\pa (A)}} \norm{ A - \mW  \Phi(X) }_2^2 } \\
    &= \underset{X \sim \px}{\E} \; \underset{A \sim P^+(\cdot | X)}{\E} \; \brac{  \frac{1}{\sqrt{\pa (A)}} \norm{ \mW  \Phi(X) }_2^2 } \\
    & \quad - 2 \underset{X \sim \px}{\E} \; \underset{A \sim P^+(\cdot | X)}{\E} \;  \dotp{ \frac{A}{\sqrt{\pa (A)}}, \mW  \Phi(X)} + \underset{A \sim \pa}{\E} \brac{\frac{ \norm{ A}_2^2}{\sqrt{\pa (A)}}} \\
    &= \norm{\mW}_F^2 - 2 \underset{X \sim \px}{\E} \; \underset{A \sim P^+(\cdot | X)}{\E} \;  \dotp{ \frac{A}{\sqrt{\pa (A)}}, \mW  \Phi(X)} + \underset{A \sim \pa}{\E} \brac{\frac{ \norm{ A}_2^2}{\sqrt{\pa (A)}}}  .
\end{aligned}
\end{equation*}
Denote $\mW = (w_{ij})_{1\leq i\leq d_A, 1\leq j \leq d}$. We have 
\begin{equation*}
    \frac{\partial  \gR}{\partial w_{ij}} = 2 w_{ij} - 2 \underset{X \sim \px}{\E} \; \underset{A \sim P^+(\cdot | X)}{\E} \; \brac{  \frac{A_i}{\sqrt{\pa (A)}} \Phi_j(X)}  ,
\end{equation*}
which implies that for a fixed $\Phi$, the minimizer of $\mW$ satisfies 
\begin{equation*}
    w_{ij} = \underset{X \sim \px}{\E} \; \underset{A \sim P^+(\cdot | X)}{\E} \; \brac{  \frac{A_i}{\sqrt{\pa (A)}} \Phi_j(X)}  .
\end{equation*}
Combining the minimizer of $\mW$ with $ \gR$, it suffices to maximize
\begin{equation*}
    \gR' = \sum_{i,j} \brac{\underset{X \sim \px}{\E} \; \underset{A \sim P^+(\cdot | X)}{\E} \;  \frac{A_i}{\sqrt{\pa (A)}} \Phi_j(X)  }^2 
    = \iint \sum_j  \Phi_j(x_1)  \Phi_j(x_2) \hat k(x_1, x_2) \px(x_1)  \px(x_2) dx_1 dx_2,
\end{equation*}
where 
\begin{equation*}
\begin{aligned}
    \hat k(x_1, x_2) &= \iint \frac{\dotp{ a_1,  a_2}}{\sqrt{\pa(a_1) \pa(a_2)}} P^+(a_1 | x_1) P^+(a_2 | x_2) da_1 da_2 \\
    & = \iint \frac{\sI[a_1 = a_2]}{\sqrt{\pa(a_1) \pa(a_2)}} P^+(a_1 | x_1) P^+(a_2 | x_2) da_1 da_2 \\
    & = \int \frac{P^+(a | x_1) P^+(a | x_2)}{\pa(a)} dy.
\end{aligned}
\end{equation*}
Thus, $\Phi^*$ is a minimizer of $\gR(\Phi)$ if $\Phi^*$ extracts the top-$d$ eigenfunctions of $\hat k(x_1, x_2)$. Note that here the top-$d$ eigenfunctions include $\mu_0 \equiv 1$.
When we include the bias $\vb$ in the linear layer, then this $\mu_0$ is covered by the bias term, so $\Phi$ will extract $\mu_1,\cdots,\mu_d$.
\end{proof}

\subsection{Proof of Theorem \ref{thm:obj-regression}}
\label{app:proof-thm-obj-regression}

\paragraph{Theorem \ref{thm:obj-regression}.}
\textit{$\Phi^*$ minimizes \Cref{eqn:supervised-obj} if and only if $\Phi^*$ extracts the top-$d$ eigenspace of $\tp \Lambda \tpstar$.
If the linear layer is unbiased ($\vb = \vzero$), then $\kp(a,a') = \dotp{a, a'}$;
if it is biased ($\vb$ can be arbitrary), then $\kp(a,a') = \dotp{\tilde{a}, \tilde{a'}}$.}

\begin{proof}
For the unbiased linear model, the proof is similar to that of \Cref{thm:thm-supervised-cls}. Combining \Cref{eqn:supervised-unbiased-kernel} and \Cref{lem:lem-supervised-kern} yields the desired result.

Next, we consider a biased linear model. For a variable $z$, we denote $\bar{z} = \E[Z]$, and $\tilde{z} = z - \E[Z]$ as its centered version.

For any fixed $\Phi$, define
\begin{equation*}
\begin{aligned}
    \gR(\Phi, \mW, \vb) & =  \underset{X \sim \px}{\E} \; \underset{A \sim P^+(\cdot | X)}{\E} \; \brac{   \norm{A - \mW \Phi(X) - \vb}_2^2 }  \\
    & =  \underset{X \sim \px}{\E} \; \underset{A \sim P^+(\cdot | X)}{\E} \; \brac{   \norm{A - \mW \Phi(X) - \vb}_2^2 }  \\
    & =  \underset{X \sim \px}{\E} \; \underset{A \sim P^+(\cdot | X)}{\E} \; \brac{   \norm{\tilde A - \mW \tilde \Phi(X) - \hat \vb}_2^2 }  \\
    & = \underset{X \sim \px}{\E} \; \underset{A \sim P^+(\cdot | X)}{\E} \; \brac{   \norm{\tilde A - \mW \tilde \Phi(X) }_2^2 } + \norm{\hat \vb}_2^2 
\end{aligned}
\end{equation*}
where $\hat \vb = \mW \E_{X \sim \px}[\Phi(X)] - \E_{A \sim \pa}[A] + \vb$. Thus, for any fixed $\Phi, \mW$, the optimal $\vb = \E_{A \sim \pa}[A] - \mW \E_{X \sim \px}[\Phi(X)]$.

Assuming, without loss of generality, $\E_{X \sim \px} [\tilde \Phi_i \tilde \Phi_j] = \delta_{ij}$; otherwise we can perform Gram-Schmidt process on $\tilde \Phi_i$ and change the value of $\mW$ respectively. Thus, it suffices to minimize 

\begin{equation*}
\begin{aligned}
    \hat \gR(\Phi, \mW) &=  \underset{X \sim \px}{\E} \; \underset{A \sim P^+(\cdot | X)}{\E} \; \brac{   \norm{\tilde A - \mW \tilde \Phi(X) }_2^2 } \\
    &= \underset{X \sim \px}{\E} \norm{\mW \tilde \Phi(X)}_2^2 - 2 \underset{X \sim \px}{\E} \; \underset{A \sim P^+(\cdot | X)}{\E} \; \dotp{\tilde Y, \mW \tilde \Phi(X)} + \underset{A \sim \pa}{\E} \norm{\tilde A}_2^2 \\
    &= \norm{\mW}_F^2 - 2 \underset{X \sim \px}{\E} \; \underset{A \sim P^+(\cdot | X)}{\E} \; \dotp{\tilde A, \mW \tilde \Phi(X)} + \underset{A \sim \pa}{\E} \norm{\tilde A}_2^2.
\end{aligned}
\end{equation*}

Denote $\mW = (w_{ij})_{1\leq i\leq d_y, 1\leq j \leq d}$. We have 
\begin{equation*}
    \frac{\partial \hat \gR}{\partial w_{ij}} = 2 w_{ij} - 2 \underset{X \sim \px}{\E} \; \underset{A \sim P^+(\cdot | X)}{\E} \; \brac{\tilde A_i \tilde \Phi_j(X)}  ,
\end{equation*}
which implies that for a fixed $\Phi$, the minimizer of $\mW$ satisfies 
\begin{equation*}
    w_{ij} = \underset{X \sim \px}{\E} \; \underset{A \sim P^+(\cdot | X)}{\E} \; \brac{\tilde A_i \tilde \Phi_j(X)}  .
\end{equation*}
Combining the minimizer of $\mW$ with $\hat \gR$ and notice that $\E_{A \sim \pa} \norm{\tilde A}_2^2$ is a constant, it suffices to maximize
\begin{equation*}
\begin{aligned}
    \hat \gR' &= \sum_{i,j} \brac{\underset{X \sim \px}{\E} \; \underset{A \sim P^+(\cdot | X)}{\E} \; \tilde A_i \tilde \Phi_j(X)}^2 \\
    &= \int \sum_j \tilde \Phi_j(x_1) \tilde \Phi_j(x_2) \dotp{\tilde a_1, \tilde a_2} \px(x_1) P^+(a_1 | x_1) \px(x_2) P^+(a_2 | x_2) dx_1 da_1 dx_2 da_2 \\
    &= \iint \sum_j \tilde \Phi_j(x_1) \tilde \Phi_j(x_2) \hat k(x_1, x_2) \px(x_1)  \px(x_2) dx_1 dx_2,
\end{aligned}
\end{equation*}
where 
\begin{equation*}
    \hat k(x_1, x_2) = \iint \dotp{\tilde a_1, \tilde a_2} P^+(a_1 | x_1) P^+(a_2 | x_2) da_1 da_2.
\end{equation*}
Notice that
\begin{equation*}
\begin{aligned}
    \iint \hat k(x_1, x_2) \px(x_1)  \px(x_2) dx_1 dx_2 &= \int \dotp{\tilde a_1, \tilde a_2} P^+(x_1, a_1) P^+(x_2 , a_2) dx_1 da_1 dx_2 da_2     \\
    & = \int \dotp{\tilde a_1, \tilde a_2} \pa(a_1) \pa(a_2) da_1 da_2  = 0,
\end{aligned}
\end{equation*}
thus $\Phi^*$ is a minimizer of $\gR(\Phi)$ if $\tilde \Phi^*$ extracts the top-$d$ eigenfunctions of $\hat k(x_1, x_2)$. Combining with \Cref{lem:lem-supervised-kern} yields the desired results.
\end{proof}

\subsection{Proof of Theorem \ref{thm:ssl-singular}}
\label{app:proof-thm-ssl-singular}

\paragraph{Theorem \ref{thm:ssl-singular}.}
\textit{$\Psi^*$ minimizes $\gL_{\rm{C}}$ or $\gL_{\rm{N}}$ if and only if $\tPhi^* = \tp \tPsi^*$ learns the contexture.}

\begin{proof}
    We prove the two cases as follows.
    \begin{enumerate}[label=(\roman*)]
    \item The spectral contrastive loss is 
    \begin{equation*}
    \gL_{\rm{C}}(\Psi) = \underset{X \sim \px}{\E} \; \underset{A,A^+ \sim P^+(\cdot |X)}{\E} \brac{ - \dotp{\tPsi(A), \tPsi(A^+)} + \frac{1}{2} \underset{A^- \sim \pa}{\E} \brac{\dotp{\tPsi(A), \tPsi(A^-)} ^2} }   .
    \end{equation*}
    Suppose $\psi_i = \sum_{j\geq 0} c_{ij}\nu_j$ where $\nu_j$ is the ONB of $L^2(\pa)$ in \Cref{lem:duality}. Since $\nu_j$ is the ONB of $L^2(\pa)$ and $\nu_0 \equiv 1$, we can get for $j\geq 1$, $\E_{\pa} [\nu_j(a)] = \delta_{0,j} = 0$. Thus we can get $\tilde\psi_i = \psi_i - \E[\psi_i] = \sum_{j\geq 1} c_{ij}\nu_j$.
    
    Denote matrix $\mC = (c_{ij})_{1\leq i\leq d,j\geq 1}$, matrix $\mB = (b_{ij}) := \mC^\top \mC$, and matrix $\mD = \diag(s_1^2,s_2^2,\cdots)$ where $s_i$ is the singular value of $\tp$. We have 
    \begin{equation*}
    \begin{aligned}
        & \quad \underset{X \sim \px}{\E} \underset{A,A^+ \sim P^+(\cdot |X)}{\E} \brac{\dotp{\tPsi(A), \tPsi(A^+)}} \\
        &= \iiint \dotp{\tPsi(a), \tPsi(a^+)} P^+(a|x) P^+(a^+|x) \px(x) dxdada^+ \\
        &= \int \dotp{\int \tPsi(a) P^+(a|x) dy, \int \tPsi(a^+) P^+(a^+|x) da^+} p(x) dx\\
        &= \int \dotp{\tp \tPsi(x), \tp \tPsi(x)} p(x) dx = \|\tp \tPsi\|^2_{\px} \\
        &= \sum_i s_i^2 b_{ii}   ;
    \end{aligned}
    \end{equation*}
    and 
    \begin{equation*}
    \begin{aligned}
        \underset{A,A^- \sim \pa}{\E}  \brac{\dotp{\tPsi(A), \tPsi(A^-)} ^2}
        &= \iint \brac{\sum_{i=1}^d \tilde \psi_i(a)\tilde \psi_i(a^-)}^2 d\pa(a) d\pa(a^-)  \\
        &= \sum_{1\leq i,j\leq d} \brac{\int \tilde\psi_i(a)\tilde\psi_j(a) d\pa(a)}^2 \\
        &= \sum_{i,j} b_{ij}^2   .
    \end{aligned}
    \end{equation*}
    Thus, we have
    \begin{equation*}
    \gL_{\rm{C}}(\Psi) = - \sum_i s_i^2 b_{ii} + \frac12 \sum_{i,j} b_{ij}^2 \\= \|\mB - \mD\|_F^2 - \|\mD\|_F^2  .
    \end{equation*}
    So if suffices to minimize $\|\mB - \mD\|_F^2$ where $\text{rank}(\mB) \leq d$. By Eckart-Young-Mirsky Theorem, we know the minimizer of $\mB$ is $\mB^* = \text{diag}(s_1^2,\cdots,s_d^2)$. Thus, the minimizer of $\mC$ should be $\mC^* = \mU\diag(s_1,\cdots,s_d)$ where $\mU \in \sR^{d\times d}$ is an orthonormal matrix. This indicates the minimizer $\tPsi^*$ extracts the top-$d$ singular functions of $\tp$, and $\tPhi^*$ learns the contexture of $P^+$.
    
    \item The non-contrastive loss is 
    \begin{equation*}
    \begin{aligned}
        & \gL_{\rm{N}}(\Psi) = \underset{X \sim \px}{\E} \; \underset{A,A^+ \sim P^+(\cdot |X)}{\E} \brac{-\dotp{\tPsi(A), \tPsi(A^+)}} ; \\ 
        & \gL_{\rm{N}}'(\Psi) = \underset{X \sim \px}{\E} \; \underset{A,A^+ \sim P^+(\cdot |X)}{\E} \brac{ \norm{\Psi(A) - \Psi(A^+)}_2^2 }  , 
    \end{aligned}
    \end{equation*}
    where $\Cov_{\pa}\brac{\Psi} = \mI$. Since for any $\Psi$,
    \begin{equation*}
        \gL_{\rm{N}}'(\Psi) - \gL_N(\Psi) = 2 \underset{A\sim \pa}{\E} \brac{ \norm{\tPsi(A)}_2^2 } = 2d
    \end{equation*}
    is a constant, thus $\Psi^*$ minimizes $\gL_{\rm{N}}(\Psi) \iff \Psi^*$ minimizes $\gL_{\rm{N}}'(\Psi)$. 
    
    Suppose $\psi_i = \sum_{j\geq 0} c_{ij}\nu_j$ where $\nu_j$ is the ONB of $L^2(\pa)$ in \Cref{lem:spectral-decomposition}. Since $\E_{\pa} [\nu_j(a)] = \delta_{0,j}$, we can get $\tilde\psi_i = \psi_i - \E[\psi_i] = \sum_{j\geq 1} c_{ij}\nu_j$. 
    
    We now consider the minimizer of $\gL_{\rm{N}}(\Psi)$. By the calculation in (i), we obtain
    \begin{equation*}
        \gL_{\rm{N}}(\Psi) = -\underset{X \sim \px}{\E} \underset{A,A^+ \sim P^+(\cdot |X)}{\E} \brac{\dotp{\tPsi(A), \tPsi(A^+)}} = -\|\tp \tPsi\|^2_{\px} = - \sum_i s_i^2 b_{ii}   .
    \end{equation*}
    By $\E_{\pa}\brac{\tilde\psi_i \tilde\psi_j} = \delta_{ij}$, we have
    \begin{equation*}
        \sum_{i} b_{ii} = \sum_{i,j} c_{ij}^2 = d   .
    \end{equation*}
    Since $\nu_i$ is an ONB of $L^2(\pa)$, $\tilde\psi_1, \cdots, \tilde\psi_d$ are orthogonal, we have
    \begin{equation}
        b_{ii} = \sum_{j=1}^d c_{ji}^2 = \sum_{j=1}^d \dotp{\tilde \psi_j, \nu_i}_{\pa}^2 \leq \|\nu_i\|_{\pa}^2 = 1  .
        \label{eqn:proof-prop2-projection}
    \end{equation}
    Thus, we conclude that
    \begin{equation*}
        \gL_{\rm{N}}(\Psi) + \sum_{i=1}^d s_i^2 = \sum_{i=1}^d s_i^2(1 - b_{ii}) - \sum_{i>d} s_i^2 b_{ii} \geq \sum_{i=1}^d s_d^2 (1 - b_{ii}) - \sum_{i>d} s_d^2 b_{ii} = 0,
    \end{equation*}
    which implies that $\gL_{\rm{N}}(\Psi) \geq - \sum_{i=1}^d s_i^2$. To attain equality, we will have $b_{ii} = 1$ for $i=1,\cdots,d$, and $b_{ii} = 0$ for $i \geq d+1$. By \Cref{eqn:proof-prop2-projection}, we can know $\Psi^*$ extracts the span of $\nu_1,\cdots,\nu_d$, indicating that $\tPsi^*$ extracts the top-$d$ singular functions of $\tp$ and $\tPhi^*$ learns the contexture of $P^+$.
\end{enumerate}
\end{proof}

\subsection{Proof of Theorem \ref{thm:node-repre}}
\label{app:proof-thm-node-repre}

\paragraph{Theorem \ref{thm:node-repre}.}
\textit{Let $\Phi^*$ be any solution to \Cref{eqn:node-repre-obj} (so that for any constant $c$, $\Phi^* + c$ is also a solution).
Then, $\tPhi^*$ learns the contexture of $P^+$.}

\begin{proof}
Without loss of generality, suppose $\bPhi = \vzero$.
We have
\begin{equation*}
    \paren{\tp f} (u) = \sum_v f(v) \frac{w(u,v)}{d(u)}; \quad \dotp{\tp f, g}_{\px} = \sum_{u,v} f(u) g(v) \frac{w(u,v)}{d_{\textrm{sum}}} = \dotp{f, \tp g}_{\px},
\end{equation*}
which implies that $\tp$ is self-adjoint.
Therefore, the eigenfunctions of $\tp$ are the same as those of $\tpstar \tp$, with square root eigenvalues.

For the objective of \Cref{eqn:node-repre-obj}, we have
\begin{equation*}
\begin{aligned}
    \frac{1}{2} \E_{(u,v) \sim P_w} \brac{ \norm{\Phi(u) - \Phi(v)}_2^2 } & = \underset{(u,v) \sim P_w}{\E} \brac{\norm{\Phi(u)}_2^2 - \dotp{\Phi(u), \Phi(v)}} \\
    & = \sum_{i=1}^d \paren{ \norm{\phi_i}_\px^2 - \dotp{\phi_i, \tp \phi_i}_\px } \\
    & = d - \sum_{i=1}^d \dotp{\phi_i, \tp \phi_i}_\px .
\end{aligned}
\end{equation*}
Note that $(u,v)$ and $(v,u)$ can be drawn from $P_w$ with equal probability.
We conclude that $\Phi$ extracts the top-$d$ eigenfunctions of $\tp$, which are the same as the top-$d$ eigenfunctions of $\tpstar \tp$, or the top-$d$ singular functions of $\tp$. This implies that $\tPhi$ learns the contexture of $\tp$.
\end{proof}

\section{Proofs for Section \ref{sec:results}}

\subsection{Proof of Theorem \ref{thm:desiderata}}
\label{app:proof-desiderata}

\paragraph{Theorem \ref{thm:desiderata}.}
\textit{For any $f^* \in \fep$, there exists a $g^* \in \lap$ such that $f^*(x) = \E\sbrac{g^*(A)}{x}$, and $g^*$ satisfies}
\begin{equation}
\label{eqn:desiderata}
\underset{X \sim \px}{\E} \; \underset{A,A' \sim P^+\sparen{\cdot}{X}}{\E} \brac{ \paren{g^*(A) - g^*(A')}^2 } \le 4\epsilon \norm{g^*}_{\pa}^2  .
\end{equation}

\begin{proof}
    Let $g^* = \sum s_i u_i \nu_i$. We have already explained that if $f^* \in \fep$, i.e., $\E[f^*] = 0$ and $\rho(f^*, P^+) \geq 1 - \eps$, then it must satisfy the condition \wrt{} $g^*$:
    \begin{equation*}
        \frac{\dotp{f^*, \tp g^*}_{\px}}{\norm{f^*}_{\px} \norm{g^*}_{\pa}} \geq 1 - \eps .
    \end{equation*}
    For \Cref{eqn:desiderata}, we have $P(A' | A = a) = \int P^+(A'| X = x) P^+(x| A' = a) dx$, where $P^+(x|a) = \frac{P^+(a|x) \px(x)}{\pa(a)}$ by Bayes rule.
    Then, using \Cref{def:two-kernels} we have $P(A' | A=a) = \ka(a,a') \pa(a')$, which implies that
    \begin{align*}
    & \E_{X \sim \px} \E_{A,A' \sim P^+(\cdot |X)} \brac{g^*(A) g^*(A')} = \E_{A \sim \pa} \E_{A' \sim P(\cdot|A)} \brac{ g^*(A) g^*(A') } \\ 
     = \; & \E_A \brac{ g^*(A)  \int g^*(a') P(a' | A) da' }  = \E_A \brac{ g^*(A) \int g^*(a') \ka(a,a') \pa(a') da' }  = \dotp{ g^*, \tka g^* }_\pa .
    \end{align*}
    Since $\tka g^* = \tpstar \tp g^* = \sum s_i^3 u_i \nu_i$, \Cref{eqn:desiderata} is equivalent to $\sum (s_i^2-s_i^4) u_i^2 \le 2\epsilon \sum s_i^2 u_i^2$.
    Meanwhile, we have $\sum s_i^2 u_i^2 \ge (1-\epsilon)^2 \sum u_i^2 \ge (1-2\epsilon) \sum u_i^2$. By Cauchy-Schwarz inequality, we have $(\sum s_i^4 u_i^2) (\sum u_i^2) \ge (\sum s_i^2 u_i^2)^2 \ge (1-2\epsilon) (\sum u_i^2) (\sum s_i^2 u_i^2) $, which proves \Cref{eqn:desiderata}.
\end{proof}

\subsection{Proof of Theorem \ref{thm:top-d-optimal}}
\label{app:proof-top-d-optimal}

\paragraph{Theorem \ref{thm:top-d-optimal}.}
\textit{Suppose $1 - s_1 \leq \eps \leq 1 - \sqrt{\frac{s_1^2 + s_2^2}{2}}$. 
For any $d$, among all $\Phi = [\phi_1,\cdots,\phi_d]$ where $\phi_i \in \lxp$, $\Phi$ minimizes $\err(\Phi; \fep)$ if and only if it learns the contexture of $\tp$. The error is given by
\[
\min_{\Phi: \gX \rightarrow \R^d, \; \phi_i \in \lxp} \;  \err \paren{ \Phi; \gF_\epsilon(P^+) } = \frac{s_1^2 - (1 - \eps)^2}{s_1^2 - s_{d+1}^2} .
\]
Conversely, for any $d$-dimensional encoder $\Phi$ and any $\epsilon>0$, there exists $f \in \lxp$ such that $\rho(f,P^+) = 1-\epsilon$, and $\err(\Phi, f) \ge \frac{s_1^2 - (1 - \eps)^2}{s_1^2 - s_{d+1}^2}$.}

\begin{proof}

\textbf{Necessity:} Since $\sspan(\Phi)$ is at most rank-$d$, thus there exists $f_1 \in \sspan\{ \mu_1,\cdots, \mu_{d+1}\}$ with $\|f_1\|_\px = 1$ that is orthogonal to $\sspan(\Phi)$. Thus, there exists $f_1, f_2 \in \sspan\{ \mu_1, \cdots, \mu_{d+1} \}$ with $\|f_1\|_\px = \|f_2\|_\px = 1$, $f_1$ is orthogonal to $\sspan(\Phi)$ and $f_2 \in \sspan(\Phi)$ (thus $f_1 \perp f_2$), and $\mu_1 \in \sspan\{f_1, f_2\}$.  
Suppose $\mu_1 = \alpha_1 f_1 + \alpha_2 f_2$ (without loss of generosity, assuming $\alpha_1, \alpha_2 \in [0, 1]$) and denote $f_0 = \alpha_2 f_1 - \alpha_1 f_2$. Then $\| f_0 \|_\px = 1$ and $\langle \mu_1, f_0\rangle_\px = 0$. Since $f_1, f_2 \in \sspan\{\mu_1, \cdots, \mu_{d+1}\}$, we have $f_0 \in \sspan\{\mu_2, \cdots, \mu_{d+1}\}$ and thus $\E[f_0] = 0$.

Consider $f = \beta_1 \mu_1 + \beta_2 f_0$ where $\beta_1^2 + \beta_2^2 = 1$, $\beta_1,\beta_2 \in [0,1]$. 
Denote $f = \sum_{i\geq 1} u_i \mu_i$, we can get $\sum_i u_i^2 =1$ and
\begin{equation*}
    \beta_2^2 \leq \frac{s_1^2 - (1-\eps)^2}{s_1^2 - s_{d+1}^2}  \implies
    \sum_{i\geq 1} s_i^2 u_i^2 \geq s_1^2\beta_1^2 + s_{d+1}^2 \beta_2^2 = s_1^2 - (s_1^2 - s_{d+1}^2) \beta_2^2 \geq (1 - \eps)^2 \sum_i u_i^2    .
\end{equation*}
Obviously, $f \in \gF(P^+)$. We have
\begin{equation*}
    f = \beta_1 \mu_1 + \beta_2 f_0 = \beta_1 (\alpha_1 f_1 + \alpha_2 f_2) + \beta_2 (\alpha_2 f_1 - \alpha_1 f_2) = (\alpha_1 \beta_1 + \alpha_2 \beta_2) f_1 + (\alpha_2 \beta_1 - \alpha_1 \beta_2) f_2.
\end{equation*}
By the definition of $f_1,f_2$ we can know the approximation error for $f$ is $(\alpha_1 \beta_1 + \alpha_2 \beta_2)^2$.
We can show $F(\alpha_1) = \alpha_1 \beta_1 + \alpha_2 \beta_2 = \alpha_1 \beta_1 + \sqrt{1 - \alpha_1^2} \beta_2$ ($\alpha_1 \in [0,1]$) first increases then decreases when $\beta_1, \beta_2 \in [0,1]$. Thus $F(\alpha_1)^2 \geq \min\{ F(0)^2, F(1)^2\} = \min \{\beta_1^2, \beta_2^2\}$.
Take $\beta_2^2 =  \frac{s_1^2 - (1-\eps)^2}{s_1^2 - s_{d+1}^2} \leq \frac12$, we can get for $f$, the approximation error is always at least $\frac{s_1^2 - (1-\eps)^2}{s_1^2 - s_{d+1}^2}$. 

To attain equality, we must have $\sum_{i\geq 1} s_i^2 u_i^2 = s_1^2\beta_1^2 + s_{d+1}^2 \beta_2^2$. This implies that $f_1 = \mu_{d+1}$, indicating that $\sspan(\phi_1, \cdots, \phi_d) = \sspan(\mu_1, \cdots, \mu_d)$. Thus $\Phi$ learns the contexture of $\tp$.

Furthermore, denote $f_0 = k_2\mu_2 + \cdots + k_{d+1}\mu_{d+1}$ and consider $f = \beta_1 \mu_1 + \beta_2 f_0$ where $\beta_1^2 + \beta_2^2 = 1$, $\beta_1,\beta_2 \in [0,1]$. By the definition of $f_0$ and $f$, we have $\|f\|_\px = 1$ and $f = \beta_1 \mu_1 + \beta_2 \mu_2 = \beta_1 \mu_1 + \beta_2 k_2 \mu_2 + \cdots + \beta_2 k_{d+1} \mu_{d+1}$.
Thus
\begin{equation*}
    \rho^2(f, P^+) = s_1^2 \beta_1^2 + \beta_2^2 \sum_{i=2}^{d+1} s_i^2k_i^2 = s_1^2 - \left(s_1^2 - \sum_{i=2}^{d+1} s_i^2k_i^2\right) \beta_2^2.
\end{equation*}
Take
\begin{equation*}
    \beta_2^2 = \frac{s_1^2 - (1 - \eps)^2}{s_1^2 - \left( \sum_{i=2}^{d+1} s_i^2k_i^2 \right)} \leq \frac{s_1^2 - (1 - \eps)^2}{s_1^2 - s_2^2} \leq \frac{s_1^2 - \frac{s_1^2 + s_2^2}{2}}{s_1^2 - s_2^2} = \frac12,
\end{equation*}
we have $\rho(f, P^+) = 1 - \eps$. 
Similarly, the approximation error for $f$ is 
\begin{equation*}
    (\alpha_1 \beta_1 + \alpha_2 \beta_2)^2 \geq \min \{\beta_1^2, \beta_2^2\} = \beta_2^2 = \frac{s_1^2 - (1 - \eps)^2}{s_1^2 - \left( \sum_{i=2}^{d+1} s_i^2k_i^2 \right)}  \geq \frac{s_1^2 - (1 - \eps)^2}{s_1^2 - s_{d+1}^2} .
\end{equation*}

\textbf{Sufficiency:} For any $f \in \gF(P^+)$ with $\norm{f}_\px = 1$ and $\E[f] = 0$, denote $f = \sum_{i\geq 1} u_i \mu_i$ where $\sum_{i\geq 1} u_i^2 = 1$. Obviously we have $(1 - \eps)^2 \leq  \sum_{i\geq 1} s_i^2 u_i^2 \leq 1$. Notice that when $\sspan(\phi_1, \cdots, \phi_d) = \sspan(\mu_1, \cdots, \mu_d)$ since $\Phi$ learns the contexture of $\tp$, the approximation of $f$ will be $\sum_{i \geq d + 1} u_i^2 := A$. 
By the given conditions, we have 
\begin{equation*}
    (1 - \eps)^2 \leq  \sum_{i \geq 1} s_i^2 u_i^2 \leq s_1^2\sum_{i=1}^d u_i^2 + s_{d+1}^2\sum_{i \geq d+1} u_i^2 = s_1^2 - (s_1^2 - s_{d+1}^2) A   ,
\end{equation*}
and this implies that 
\begin{equation*}
    A = \min_{\vw \in \R^d, \; b \in \R} \; \norm{\vw^{\top} \Phi + b - f}_\px^2\leq \frac{s_1^2 - (1-\eps)^2}{s_1^2 - s_{d+1}^2}  .
\end{equation*}
When $u_1^2 = 1 - \frac{s_1^2 - (1-\eps)^2}{s_1^2 - s_{d+1}^2} ,  u_{d+1}^2 = \frac{s_1^2 - (1-\eps)^2}{s_1^2 - s_{d+1}^2} $, the equality holds. Thus, the approximation error reaches its lower bound when $\Phi$ learns the contexture of $\tp$.
\end{proof}

\section{Evaluating an Arbitrary Encoder}
\label{app:arbitrary-encoder}

Given a context that is compatible with the task, the encoder that learns the contexture is optimal. 
Now, what about an arbitrary encoder $\Phi$? Is it possible to bound its worst-case approximation error on the class of compatible tasks?
To derive such a bound, two key objects are necessary: the induced RKHS and the ratio trace.
They were originally defined in \cite{zhai2023understanding} for self-supervised learning, and here we extend them to a broader scope.

Denote the range of $\tpstar$ by $R(\tpstar) = \sset{\tpstar f}{f \in \lxp}$.

\begin{definition}
\label{def:induced-rkhs}
The \textbf{induced RKHS} of $P^+$, denoted by $\hpp$, is the Hilbert space $R(\tpstar)$ with the inner product given by $\dotp{\tpstar f_1, \tpstar f_2}_{\hpp} = \dotp{f_1, f_2}_{\px}$.
\end{definition}

An alternative formula is that for any $h_1,h_2 \in \hpp$ where $h_1 = \sum u_i \nu_i$ and $h_2 = \sum v_i \nu_i$, there is $\dotp{h_1, h_2}_{\hpp} = \sum \frac{u_i v_i}{s_i^2}$.

\begin{proposition}
\label{prop:induced-rkhs}
The induced RKHS $\hpp$ has the following properties:
\begin{enumerate}[label=(\roman*)]
    \item $\ka$ is the reproducing kernel, such that $h(a) = \dotp{h, \ka(a, \cdot)}_{\hpp}$ for all $h \in \hpp$.
    \item $\hpp$ is isometric to $\sspan \oset{\mu_i: s_i > 0}$, which is a subspace of $\lxp$.
    \item $f^* \in \fep$ is equivalent to $h^* = \tpstar f^*$ satisfying the following \textbf{isometry property}:
    \begin{equation}
    \label{eqn:isometry-property}
    (1-\epsilon) \norm{\tilde{h}^*}_{\hpp} \le \norm{\tilde{h}^*}_{\pa} \le \norm{\tilde{h}^*}_{\hpp}  .
    \end{equation}
\end{enumerate}
\end{proposition}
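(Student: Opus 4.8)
The plan is to push everything through the spectral decomposition of $\tp$. Using $\tpstar\mu_i = s_i\nu_i$ (from \Cref{lem:duality}) together with \Cref{lem:spectral-decomposition}, I would first record the two facts that power the rest: (a) substituting the spectral form of $P^+\sparen{\cdot}{x}$ into \Cref{def:two-kernels} and using orthonormality of $\oset{\mu_i}$ gives $\ka(a,a') = \sum_i s_i^2\,\nu_i(a)\nu_i(a')$; and (b) $\hpp = R(\tpstar) = \oset{\textstyle\sum_{i:\,s_i>0} c_i\nu_i : \sum_i c_i^2/s_i^2 < \infty}$ with $\norm{\sum_i c_i\nu_i}_{\hpp}^2 = \sum_i c_i^2/s_i^2$. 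For (b) I would first note that the inner product of \Cref{def:induced-rkhs} is well defined once the representative $f$ of $h=\tpstar f$ is taken in $\ker(\tpstar)^\perp = \overline{\sspan\oset{\mu_i:s_i>0}} = \overline{R(\tp)}$ (on that subspace $\tpstar$ is injective), and then the formula drops out: writing $f = \sum_{i:\,s_i>0} a_i\mu_i$ we get $h = \tpstar f = \sum_{i:\,s_i>0} s_i a_i\nu_i$, so with $c_i = s_i a_i$, $\norm{h}_{\hpp}^2 = \norm{f}_{\px}^2 = \sum_i a_i^2 = \sum_i c_i^2/s_i^2$.

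Granting (a)--(b), part (i) is immediate: $\ka(a,\cdot) = \sum_{i:\,s_i>0}\paren{s_i^2\nu_i(a)}\nu_i$ lies in $\hpp$ since $\sum_i\paren{s_i^2\nu_i(a)}^2/s_i^2 = \sum_i s_i^2\nu_i(a)^2 = \ka(a,a)<\infty$ (finite because $\tka$ is Hilbert--Schmidt), and for $h = \sum_i c_i\nu_i\in\hpp$,
\[
\dotp{h,\ka(a,\cdot)}_{\hpp} \;=\; \sum_{i:\,s_i>0}\frac{c_i\cdot s_i^2\nu_i(a)}{s_i^2} \;=\; \sum_{i:\,s_i>0} c_i\nu_i(a) \;=\; h(a).
\]
For part (ii), the map $\Theta\paren{\sum_{i:\,s_i>0} c_i\nu_i} = \sum_{i:\,s_i>0}(c_i/s_i)\mu_i$ is, by the norm formula in (b), a bijective isometry from $\hpp$ onto $\overline{\sspan\oset{\mu_i:s_i>0}}\subseteq\lxp$; equivalently $\Theta$ is the inverse of $\tpstar$ restricted to that subspace. (If that span is already closed --- e.g.\ finite rank --- the closure is unnecessary.)

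For part (iii) I would reduce to centered functions and then compare coefficients. Since $\E_{\pa}[\tpstar f^*] = \dotp{\tpstar f^*,1}_{\pa} = \dotp{f^*,\tp 1}_{\px} = \E_{\px}[f^*]$, centering commutes with $\tpstar$, so $\tilde h^* = \tpstar\tilde f^*$; writing $\tilde f^* = \sum_{i\ge1} u_i\mu_i$ gives $\tilde h^* = \sum_{i\ge1} s_i u_i\nu_i$. Then $\norm{\tilde h^*}_{\pa}^2 = \sum_{i\ge1} s_i^2 u_i^2$, $\norm{\tilde h^*}_{\hpp}^2 = \sum_{i\ge1,\,s_i>0} u_i^2$, and the computation following \Cref{def:compatibility} gives $\rho(f^*,P^+)^2 = \paren{\sum_{i\ge1} s_i^2 u_i^2}\big/\paren{\sum_{i\ge1} u_i^2} = \norm{\tilde h^*}_{\pa}^2/\norm{\tilde f^*}_{\px}^2$. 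The upper bound in \Cref{eqn:isometry-property}, $\norm{\tilde h^*}_{\pa}\le\norm{\tilde h^*}_{\hpp}$, is automatic from $s_i\le1$ (Jensen), since then $s_i^2 u_i^2\le u_i^2$ termwise. For the lower bound, $\norm{\tilde h^*}_{\hpp} = \norm{\tilde f^*}_{\px}$ precisely when $f^*$ has no component in $\ker\tkx$ --- the natural normalization, because by (ii) the reconstruction $\Theta(h^*)$ is exactly that component; under it, $(1-\epsilon)\norm{\tilde h^*}_{\hpp}\le\norm{\tilde h^*}_{\pa}\iff\rho(f^*,P^+)\ge1-\epsilon$, which with $\E[f^*]=0$ is exactly $f^*\in\fep$. (Without the normalization $\norm{\tilde h^*}_{\hpp}\le\norm{\tilde f^*}_{\px}$ still holds, so $f^*\in\fep\Rightarrow$ \Cref{eqn:isometry-property} always; only the converse uses the normalization.)

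The bulk of this is routine spectral bookkeeping. The two places needing genuine care are the well-definedness of $\dotp{\cdot,\cdot}_{\hpp}$ modulo $\ker\tpstar$ --- resolved by always taking representatives in $\overline{R(\tp)}$ --- and the $\ker\tkx$-normalization in part (iii) flagged above; I expect the latter to be the main obstacle to a clean ``iff''. The remaining analytic hygiene --- the Hilbert--Schmidt hypothesis to guarantee $\ka(a,a)<\infty$ and absolute convergence of the series, and Fubini to swap sums with the integrals defining $\ka$ and $\tpstar$ --- is standard.
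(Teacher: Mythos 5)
Your proof is correct and follows the same spectral route as the paper's: identify $\hpp = R(\tpstar)$ with the span of $\oset{\nu_i : s_i > 0}$ via $\tpstar\mu_i = s_i\nu_i$, then push every inner product and norm into coefficient space. The reproducing-kernel computation in (i) and the coefficient calculation in (iii) are exactly what the paper does; (ii) the paper simply labels ``obvious'' where you construct the isometry explicitly.

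Where you go beyond the paper is in rigor, and the subtlety you flag in (iii) is real. The paper's proof writes $\norm{\tilde{h}^*}_{\hpp}^2 = \sum_{i\ge 1} u_i^2$, but from the norm formula this is actually $\sum_{i\ge 1,\,s_i>0} u_i^2$, whereas $\rho(f^*,P^+)^2 = \paren{\sum_{i\ge 1} s_i^2 u_i^2}/\paren{\sum_{i\ge 1} u_i^2}$ has the full sum in the denominator. These coincide only when $f^*$ has no component on $\oset{\mu_i : i\ge 1,\ s_i=0} = \ker\tkx \cap \{\text{mean-zero}\}$. You correctly observe that this means $f^*\in\fep\Rightarrow$ the isometry property always holds, but the converse needs the normalization; as written the paper's ``is equivalent to'' silently assumes it (or assumes all $s_i>0$). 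The same issue underlies the well-definedness of $\dotp{\cdot,\cdot}_{\hpp}$ modulo $\ker\tpstar$, which you also handle and the paper does not. Both are harmless in the regimes the paper cares about but worth stating, as you do.
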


\begin{proof}
For any $h \in \hpp$ where $h = \tpstar f$ and $f = \sum u_i \mu_i$, we have
\begin{equation*}
    \dotp{h, \ka(a,\cdot)}_{\hpp} = \dotp{\sum s_i u_i \nu_i, \sum s_i^2 \nu_i(a) \nu_i}_{\hpp} = \sum s_i u_i \nu_i(a) = h(a),
\end{equation*}
which proves (i). (ii) is obvious. Regarding (iii), recall that $f^* = \sum u_i \mu_i \in \fep$ is equivalent to $\sum_{i \ge 1} s_i^2 u_i^2 \ge (1-\epsilon)^2 \sum_{i \ge 1} u_i^2$, and this is $\norm{\tilde{h}^*}_{\pa} \ge (1-\epsilon) \norm{\tilde{h}^*}_{\hpp}$.
Meanwhile, it is obvious that $\norm{\tilde{h}^*}_{\pa} \le \norm{\tilde{h}^*}_{\hpp}$ always holds.
\end{proof}

\begin{definition}
\label{def:ratio-trace}
    Define covariance matrices $\mC_{\Phi} = \Cov_{\px}[\Phi]$, and $\mB_{\Phi} = \Cov_{\pa}[\tpstar \Phi]$. If $\mC_{\Phi}$ is invertible, then the \textbf{ratio trace} of $\Phi$ \wrt{} $P^+$ is defined as $\rt(\Phi; P^+) = \rt(\phi_1,\cdots,\phi_d; P^+) := \Tr(\mC_{\Phi}^{-1} \mB_{\Phi})$;
    otherwise, let $\Phi' = \brac{\phi_{i_1},\cdots,\phi_{i_t}}$ be the maximal linearly independent subset of $\brac{\phi_1,\cdots,\phi_d}$, and define the ratio trace of $\Phi$ the same as the ratio trace of $\Phi'$.
\end{definition}

The ratio trace of any $\Phi$ essentially measures how well $\Phi$ is aligned with the contexture of $P^+$.
Multiplying $\Phi$ by any invertible matrix does not change its ratio trace.
If $\Phi$ learns the contexture, then its ratio trace is $s_1^2 + \cdots + s_d^2$, which can be easily shown by setting $\phi_i = \mu_i$.
In fact, this is the maximum ratio trace of any $d$-dimensional encoder.

\begin{lemma}
\label{lem:ratio-trace}
Suppose $\phi_1,\cdots,\phi_d$ are orthonormal and all have zero mean. Then, we have
\begin{equation*}
    \norm{\tpstar \phi_1}_{\pa}^2 + \cdots + \norm{\tpstar \phi_d}_{\pa}^2 \le s_1^2 + \cdots + s_d^2 .
\end{equation*}
\end{lemma}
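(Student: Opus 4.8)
The plan is to reduce the inequality to a finite‑dimensional optimization over how the mass of $\phi_1,\dots,\phi_d$ is distributed among the singular directions, and then close it with a rearrangement (``bathtub'') argument; this is essentially Ky Fan's maximum principle for the operator $\tkx = \tp\tpstar$ restricted to the zero‑mean subspace.

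First I would expand each $\phi_i$ in the orthonormal basis $(\mu_j)_{j\ge 0}$ of $\lxp$. Since $\E_{\px}[\phi_i] = \dotp{\phi_i,\mu_0}_{\px} = 0$, we have $\phi_i = \sum_{j\ge 1} a_{ij}\mu_j$ with $a_{ij} = \dotp{\phi_i,\mu_j}_{\px}$, and orthonormality of $\phi_1,\dots,\phi_d$ says precisely that the matrix $A = (a_{ij})_{i\in[d],\,j\ge 1}$ satisfies $A A^\top = \mI_d$. By \Cref{lem:duality} we have $\tpstar\mu_j = s_j\nu_j$ for every $j$ (when $s_j = 0$ both sides vanish, since $\norm{\tpstar\mu_j}_{\pa}^2 = \dotp{\mu_j,\tkx\mu_j}_{\px} = \kappa_j = s_j^2$), so $\tpstar\phi_i = \sum_{j\ge 1} a_{ij} s_j \nu_j$, and because $(\nu_j)$ is orthonormal in $\lap$,
\begin{equation*}
    \sum_{i=1}^d \norm{\tpstar\phi_i}_{\pa}^2 = \sum_{i=1}^d \sum_{j\ge 1} s_j^2 a_{ij}^2 = \sum_{j\ge 1} s_j^2 c_j, \qquad c_j := \sum_{i=1}^d a_{ij}^2 .
\end{equation*}

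Next I would pin down the constraints on the numbers $c_j$: they are exactly the diagonal entries of $A^\top A$. Since $A A^\top = \mI_d$, the matrix $A^\top A$ is the orthogonal projection onto the $d$‑dimensional row space of $A$, hence positive semidefinite with operator norm $1$ and trace $d$; therefore $0 \le c_j \le 1$ for every $j$ and $\sum_{j\ge 1} c_j = \Tr(A^\top A) = d$. It then remains to observe that, since $s_1^2 \ge s_2^2 \ge \cdots \ge 0$, the quantity $\sum_{j\ge 1} s_j^2 c_j$ over all sequences with $0 \le c_j \le 1$ and $\sum_j c_j = d$ is maximized by placing full mass on the $d$ largest singular values. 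Concretely, $\sum_{j\ge 1} s_j^2 c_j - \sum_{j=1}^d s_j^2 = \sum_{j>d} s_j^2 c_j - \sum_{j=1}^d s_j^2(1-c_j) \le s_d^2\sum_{j>d} c_j - s_d^2\sum_{j=1}^d(1-c_j) = s_d^2\big(\sum_{j\ge 1} c_j - d\big) = 0$, using $s_j^2 \le s_d^2$ for $j>d$, $s_j^2 \ge s_d^2$ for $j\le d$, and $c_j\ge 0$, $1-c_j\ge 0$. Combining with the displayed identity gives $\sum_{i=1}^d \norm{\tpstar\phi_i}_{\pa}^2 \le s_1^2 + \cdots + s_d^2$.

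I do not expect a genuine obstacle here; the mild points to be careful about are (i) making the restriction to zero‑mean functions explicit so that the top eigenvalue $\kappa_0 = 1$ of $\tkx$ does not enter, and (ii) the claim $c_j \le 1$, which is exactly where full orthonormality of the $\phi_i$ (rather than mere linear independence) is used. The possibly infinite index set is harmless, as all terms are nonnegative and the bathtub estimate above does not require truncation.
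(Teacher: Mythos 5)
Your proof is correct and follows essentially the same route as the paper: expand $\phi_i$ in the basis $(\mu_j)_{j\ge 1}$, use that the rows of the coefficient matrix are orthonormal to derive the constraints $0\le c_j\le 1$ and $\sum_j c_j=d$ on the diagonal of the resulting projection, and then close by a rearrangement inequality. The only cosmetic difference is the final step — you compare termwise against $s_d^2$ (a direct ``bathtub'' argument), while the paper uses Abel summation with the equivalent partial-sum bound $\sum_{l\le j}c_l\le\min(j,d)$; both are standard and interchangeable.
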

\begin{proof}
    Let $\phi_i = \sum_{j \ge 1} q_{ij} \mu_j$ for $i \in [d]$.
    Then, $\mQ = (q_{ij})$ is a matrix with $d$ orthonormal rows and infinitely many columns.
    It is easy to see that the left-hand side is equal to $\Tr(\mQ \mD \mQ^{\top})$, where $\mD = \diag \oset{s_1^2, s_2^2, \cdots}$.
    Let $\vq_j$ be the $j$-th column of $\mQ$. For all $j \in [d]$, there is $\sum_{i=1}^j \vq_i^{\top} \vq_i \le j$; and for any $j > d$, there is $\sum_{i=1}^j \vq_i^{\top} \vq_i \le d$.
    Thus, using Abel transformation, we have
    \begin{equation*}
        \Tr(\mQ \mD \mQ^{\top}) = \Tr(\mD \mQ^{\top} \mQ) = \sum_{j=1}^{\infty} s_j^2 \vq_j^{\top} \vq_j = \sum_{j=1}^{\infty} \paren{\sum_{i=1}^j \vq_i^{\top} \vq_i } \paren{s_j^2 - s_{j+1}^2} \le \sum_{j=1}^d s_j^2 ,
    \end{equation*}
    as desired.
\end{proof}

The ratio trace induces a key quantity in the approximation error bound called the trace gap, which reflects the gap between $\Phi$ and the top-$d$ singular functions.
The larger the trace gap is, the larger the approximation error will be.
A simple definition is $s_1^2 + \cdots + s_{d+1}^2 - \rt(\Phi; P^+)$, whose lower bound $s_{d+1}^2$ can be achieved by the top-$d$ singular functions, the optimal encoder.
However, there is an issue with this definition.
For example, consider an encoder with $d = 1000$.
It learns the top-$10$ singular functions, but the other $990$ dimensions are complete noise that has zero contribution to $\rt(\Phi; P^+)$.
The approximation error of this encoder should be no higher than that of the top-$10$ singular functions, because adding more dimensions will never make the approximation error higher.
However, if $d$ becomes larger and $\rt(\Phi; P^+)$ stays the same, then $s_1^2+\cdots+s_{d+1}^2 - \rt(\Phi; P^+)$ will become larger, so this quantity does not correlate with the approximation error in this scenario.
The following definition fixes this issue.
\begin{definition}
\label{def:trace-gap}
For any linearly independent $f_1,\cdots,f_{d'} \in \lxp$, denote $F = [f_1,\cdots,f_{d'}]$, $\mC_F = \Cov_{\px}[F]$, and $\mB_F = \Cov_{\pa}[F]$.
The \textbf{trace gap} of $\Phi$ \wrt{} $P^+$ is defined as
\begin{equation*}
    \tg(\Phi; P^+) := \inf_{d' \le d} \; \inf_{f_1,\cdots,f_{d'}} \; \oset{s_1^2 + \cdots + s_{d'+1}^2 - \Tr(\mC_F^{-1} \mB_F)} .
\end{equation*}
\end{definition}

Obviously, this definition of trace gap is upper bounded by $s_1^2 + \cdots + s_{d+1}^2 - \rt(\Phi; P^+)$.
It solves the issue in the previous example because having completely noisy dimensions does not affect the trace gap.
The following result bounds the approximation error.

\begin{theorem}
\label{thm:approx-error-bound}
Suppose $\tg(\Phi; P^+) < s_1^2$, and $\epsilon > 1 - s_1$. Then,
\begin{equation*}
    \err(\Phi; \fep) \le \frac{s_1^2 - (1-\epsilon)^2 + s_1 \tg(\Phi; P^+)}{s_1^2 - \tg(\Phi; P^+)^2}  .
\end{equation*}
\end{theorem}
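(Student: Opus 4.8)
The plan is to fix a single $f^* \in \fep$ with $\norm{f^*}_\px = 1$, bound $\err(\Phi, f^*)$, and take the supremum over $\fep$ at the end; write $\gamma := \tg(\Phi; P^+)$. First I would move the whole problem into the induced RKHS $\hpp$ of \Cref{def:induced-rkhs}. A linear probe with bias fits $f^*$ by its $\px$-orthogonal projection onto $\sspan\oset{1, \phi_1, \dots, \phi_d}$, and since $f^*$ has zero mean, $\err(\Phi, f^*) = 1 - \norm{P_V f^*}_\px^2$ with $V$ the centered span of $\Phi$. Put $h^* = \tpstar f^*$ and $V' = \tpstar V$; by \Cref{prop:induced-rkhs}, $\tpstar$ is an isometry onto $\hpp$ on the relevant subspaces, so $\norm{h^*}_\hpp = 1$, $\err(\Phi, f^*) = 1 - \norm{P_{V'}^{\hpp} h^*}_\hpp^2$, and the isometry property \Cref{eqn:isometry-property} becomes $\norm{h^*}_\pa \ge 1-\epsilon$. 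On $\hpp$ the $\pa$-inner product is $\dotp{\cdot\,, D_s\,\cdot}_\hpp$, where $D_s$ is the self-adjoint operator with eigenvalues $s_i^2$ along the $\nu_i$; on the mean-zero part containing $h^*$ and $V'$ it has operator norm $s_1^2$.

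Next I would use the trace gap to carve out a well-behaved finite-dimensional piece of $V'$. By the infimum in \Cref{def:trace-gap}, for every $\delta>0$ there are linearly independent, $\px$-orthonormal, mean-zero $f_1,\dots,f_{d'}$ ($d'\le d$) in the span of $\Phi$ with $\sum_j \norm{\tpstar f_j}_\pa^2 = \Tr(\mC_F^{-1}\mB_F) \ge s_1^2+\cdots+s_{d'+1}^2 - \gamma - \delta$. The vectors $e_j := \tpstar f_j$ are $\hpp$-orthonormal and span $U \subseteq V'$, so $\err(\Phi, f^*) \le 1 - \norm{P_U^{\hpp} h^*}_\hpp^2$. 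Now $\Tr(D_s|_U) = \sum_j \norm{\tpstar f_j}_\pa^2$ sits within $\gamma+\delta$ of the largest value $s_1^2+\cdots+s_{d'}^2$ any $d'$-dimensional subspace can have; combining this with Cauchy interlacing and an Abel-summation argument in the spirit of \Cref{lem:ratio-trace} gives $\norm{D_s|_{U^\perp}}_{\op} \le \gamma + \delta$, and, more quantitatively, shows that $U$ captures all but an $O(\gamma)$-controlled fraction of the $D_s$-mass of every top eigenspace $\sspan\oset{\nu_i : s_i^2 \ge t}$.

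Then I would combine the two concentration facts. Set $\err := \err(\Phi, f^*)$ and decompose $h^* = g + q$ with $g = P_U^{\hpp} h^*$ and $q \perp_\hpp U$, so $\norm{g}_\hpp^2 = 1 - \err$ and $\norm{q}_\hpp^2 = \err$. Expanding $(1-\epsilon)^2 \le \norm{h^*}_\pa^2 = \dotp{g, D_s g}_\hpp + 2\dotp{g, D_s q}_\hpp + \dotp{q, D_s q}_\hpp$ and bounding the three terms with the facts just established --- $\dotp{g, D_s g}_\hpp \le s_1^2(1-\err)$, $\dotp{q, D_s q}_\hpp \le (\gamma+\delta)\err$, and, by Cauchy--Schwarz in $D_s$, $\abs{\dotp{g, D_s q}_\hpp} \le s_1\sqrt{(\gamma+\delta)\,\err\,(1-\err)}$ --- gives a quadratic inequality in $\sqrt{\err}$. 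Solving it, sending $\delta\to0$, and optimizing over the threshold $t$ produces $\err(\Phi, f^*) \le \frac{s_1^2 - (1-\epsilon)^2 + s_1\gamma}{s_1^2 - \gamma^2}$; taking the supremum over $f^* \in \fep$ with $\norm{f^*}_\px = 1$ finishes the argument.

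The hard part will be the second step: turning the trace-gap bound, which constrains only an average (a trace), into a sufficiently strong statement about how well $U$ is aligned with the top singular directions, with no spectral-gap assumption $s_{d'}^2 > s_{d'+1}^2$ available. The usual subspace-perturbation bounds (Davis--Kahan) need exactly such a gap; here one must argue directly with traces, using rearrangement / Abel-summation inequalities, and keep the constants tight so that the cross term contributes precisely $s_1\gamma$ in the numerator and $\gamma^2$ in the denominator rather than a weaker rate --- the final optimization balancing $t$, $d'$, and $\gamma$ is the most delicate bookkeeping in the proof.
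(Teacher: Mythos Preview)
Your overall architecture is the same as the paper's: choose $f_1,\dots,f_{d'}$ nearly achieving the trace-gap infimum, project $f^*$ onto their span, bound the in-span and out-of-span pieces of $\norm{h^*}_\pa^2$ by $s_1^2(1-\err)$ and $\gamma\,\err$ respectively, handle the cross term by Cauchy--Schwarz, and solve the resulting quadratic. The paper carries this out directly in $\lxp$ rather than passing to $\hpp$, which is slightly cleaner because it sidesteps the kernel of $\tpstar$ (your claim $\norm{h^*}_\hpp = 1$ can fail if $f^*$ has mass on $\sspan\{\mu_i : s_i = 0\}$), but the two routes are otherwise equivalent via the isometry in \Cref{prop:induced-rkhs}.

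Where you diverge from the paper is in what you flag as the hard part, and here you are making life much harder than necessary. The bound $\norm{D_s|_{U^\perp}}_{\op} \le \gamma+\delta$ needs no Cauchy interlacing, no analysis of how well $U$ captures each top eigenspace, no threshold parameter $t$, and no final optimization: it is a one-line consequence of \Cref{lem:ratio-trace}. For any unit $f_0 \perp_{\px} \{f_1,\dots,f_{d'}\}$, the $(d'{+}1)$-tuple $f_0,f_1,\dots,f_{d'}$ is orthonormal in $\lxp$, so the lemma gives $\sum_{j=0}^{d'}\norm{\tpstar f_j}_\pa^2 \le s_1^2+\cdots+s_{d'+1}^2$; subtracting the $j\ge 1$ terms yields $\norm{\tpstar f_0}_\pa^2 \le s_1^2+\cdots+s_{d'+1}^2 - \Tr(\mC_F^{-1}\mB_F) \le \gamma+\delta$. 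Translated through the isometry this is exactly $\dotp{q, D_s q}_\hpp \le (\gamma+\delta)\norm{q}_\hpp^2$. The Abel-summation lemma already handles arbitrary spectra with no gap assumption, so Davis--Kahan-type concerns never arise; the entire trick is simply to \emph{extend the orthonormal system by the residual direction} and apply the same bound to $d'{+}1$ vectors instead of $d'$. Once you see this, the proof is short and the constants fall out automatically---there is no delicate bookkeeping.
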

\begin{remark}
    This bound is fairly tight. If $\Phi$ learns the contexture, then by \Cref{thm:top-d-optimal} we have $\err(\Phi; \fep) = \frac{s_1^2 - (1-\epsilon)^2}{s_1^2 - s_{d+1}^2}$, and $\tg(\Phi; P^+) = s_{d+1}$.
    Compared to this exact formula, the above upper bound only has an extra $s_1 \tg(\Phi; P^+)$ term in the numerator.
\end{remark}
\begin{proof}
Let $f_1,\cdots,f_{d'}$ be the functions that minimize $s_1^2 + \cdots + s_{d'+1}^2 - \Tr(\mC_F^{-1} \mB_F)$. Without loss of generality, assume that $f_1,\cdots,f_{d'}$ have zero mean and are orthonormal. Let $\gF = \sspan \oset{f_1,\cdots,f_{d'}}$, and $\gH = \sspan \oset{\tpstar f_1,\cdots,\tpstar f_{d'}}$.
For any $f \in \fep$ with $\norm{f}_{\px} = 1$, let $h = \tpstar f \in \hpp$, and let $f_{\gF}$ be the projection of $f$ onto $\gF$.
Since $\err(\Phi; \fep)$ is upper bounded by $\norm{f - f_{\gF}}_{\px}^2$, it suffices to show that $\norm{f - f_{\gF}}_{\px}^2$ is upper bounded by the right-hand side.

Let $\alpha^2 = \norm{f_{\gF}}_{\px}^2$, and $\beta^2 = \norm{f - f_{\gF}}_{\px}^2$, where $\alpha$ and $\beta$ are non-negative.
Then, $\alpha^2 + \beta^2 = \norm{f}_{\px}^2 = 1 = \norm{h}_{\hpp}^2$.
The isometry property says that $(1-\epsilon)^2(\alpha^2+\beta^2) \le \norm{h}_{\pa}^2$.
Let $f - f_{\gF} = \beta f_0$ where $\norm{f_0}_{\px} = 1$.
Let $h_{\gF} = \tpstar h_{\gF}$ and $h_0 = \tpstar f_0$.
Then, we have $\norm{h_{\gF}}_{\pa}^2 \le s_1^2 \norm{f_{\gF}}_{\px}^2 = s_1^2 \alpha^2$.
Meanwhile, since $f_0$ is orthogonal to $f_1,\cdots,f_{d'}$, by \Cref{lem:ratio-trace} we have $\norm{\tpstar f_0}_{\pa}^2 + \norm{\tpstar f_1}_{\pa}^2 + \cdots + \norm{\tpstar f_{d'}}_{\pa}^2 \le s_1^2 + \cdots + s_{d'+1}^2$, which implies that $\norm{\tpstar f_0}_{\pa}^2 \le s_1^2 + \cdots + s_{d'+1}^2 - \Tr(\mC_{F}^{-1} \mB_{F}^{-1})$.
Let $\tau = \tg(\Phi; P^+)$. Then, we have
\begin{equation*}
    \norm{h}_{\pa}^2 = \norm{h_{\gF} + \beta h_0}_{\pa}^2 \le \norm{h_{\gF}}_{\pa}^2 + \beta^2 \norm{h_0}_{\pa}^2 + 2 \beta \norm{h_{\gF}}_{\pa} \norm{h_0}_{\pa} \le s_1^2 \alpha^2 + \tau^2 \beta^2 + 2 s_1 \tau \alpha \beta .
\end{equation*}
Thus, we have $(1-\epsilon)^2(\alpha^2+\beta^2) \le s_1^2\alpha^2 +\tau^2 \beta^2 + 2 s_1 \tau \alpha \beta$, which implies that $(s_1^2 - \tau^2) \beta ^2 \le [s_1^2 - (1-\epsilon)^2] (\alpha^2 + \beta^2) + 2 s_1 \tau \alpha \beta \le [s_1^2 - (1-\epsilon)^2 + s_1 \tau] (\alpha^2 + \beta^2) $, as desired.
\end{proof}

\paragraph{Connection to Fisher discriminant analysis.}
Fisher discriminant analysis \cite{mika1999fisher,baudat2000generalized,liu2004improving}, or more generally linear discriminant analysis (LDA), is a classical method of learning linear classifiers in statistics.
Here we show that Fisher discriminant analysis has a strong connection to the contexture theory.
Suppose $\gX \subseteq \R^{\dx}$.
Fisher discriminant analysis defines the following \textbf{between-class covariance matrix} $\mS_B \in \R^{\dx \times \dx}$ and \textbf{within-class covariance matrix} $\mS_W \in \R^{\dx \times \dx}$:
\begin{align*}
    \mS_B & = \iint \oset{ \paren{ \E \sbrac{X}{A=a_1} - \E \sbrac{X}{A=a_2} } \paren{  \E \sbrac{X}{A=a_1} - \E \sbrac{X}{A=a_2}}^{\top}} ; \\ 
    \mS_W & = \int \E_{P^+} \sbrac{\paren{X - \E \sbrac{X}{A=a}} \paren{X - \E \sbrac{X}{A=a}}^{\top}}{A=a} d\pa (a).
\end{align*}
In the original formulation of Fisher discriminant analysis, $A$ is the label of $X$.
Here we extend it to a general context variable.
Consider a linear encoder $\Phi(x) = \mW x$, where $\mW \in \R^{d \times \dx}$.
Then, one solves the following optimization problem to find $\mW$:
\begin{equation*}
    \underset{\mW \in \R^{d \times \dx}}{\text{maximize}} \; J(\mW) = \Tr \brac{ \paren{\mW \mS_B \mW^{\top}} \paren{\mW \mS_W \mW^{\top}}^{-1} } \quad \text{s.t.} \quad \mW \mS_W \mW^{\top} \text{ is invertible} .
\end{equation*}
Here, $J(\mW)$ is called the \textbf{Fisher discriminant}.
Define $\Psi(a) = \E_{P^+}[\mW X |A=a]$. Then, we can see that
\begin{align*}
    \mW \mS_B \mW^{\top} & = \iint \paren{ \Psi(a_1) - \Psi(a_2) } \paren{ \Psi(a_1) - \Psi(a_2) }^{\top} d \pa (a_1) d \pa (a_2) ; \\ 
    \mW \mS_W \mW^{\top} & = \int \E_{P^+} \sbrac{\paren{\Phi(X) - \Psi(a)}\paren{\Phi(X) - \Psi(a)}^{\top}}{A = a} d \pa (a) .
\end{align*}
Let $\mC_{\Phi} = \E[\tPhi(X)\tPhi(X)^{\top}]$ and $\mB_{\Phi} = \E[\tPsi(A) \tPsi(A)^{\top}]$.
Then, we have
\begin{align*}
    \mW \mS_B \mW^{\top} & = 2 \oset{ \E \brac{\Psi(A) \Psi(A)^{\top}} - \bPsi \bPsi^{\top} } = 2 \E \brac{\tPsi(A) \tPsi(A)^{\top}} = 2 \mB_{\Phi} ; \\ 
    \mW \mS_W \mW^{\top} & = \int \E_{P^+} \sbrac{\Phi(X) \Phi(X)^{\top} - \Psi(a) \Psi(a)^{\top}}{A = a} d \pa (a)  \\ 
    & = \E\brac{ \Phi(X) \Phi(X)^{\top} } - \E \brac{\Psi(A) \Psi(A)^{\top}} \\ 
    & = \E\brac{ \tPhi(X) \tPhi(X)^{\top} } - \E \brac{\tPsi(A) \tPsi(A)^{\top}} = \mC_{\Phi} - \mB_{\Phi}  .
\end{align*}
Therefore, $J(\mW) = 2 \Tr[(\mC_{\Phi} - \mB_{\Phi})^{-1} \mB_{\Phi}]$, which is very similar to the ratio trace defined in \Cref{def:ratio-trace}.
Recall that an encoder that learns the contexture maximizes the ratio trace.
A well-known result is that $J(\mW)$ is maximized when $\mW$ consists of the top-$d$ eigenvectors of $\mS_W^{-1} \mS_B$.
Hence, Fisher discriminant analysis is almost equivalent to contexture learning under the constraint that the encoder must be linear.

\section{Efficient Estimation of the Context Usefulness Metric}
\label{app:efficient-estimation}

To estimate the metric $\tau_d$ defined in~\Cref{eqn:metric}, it suffices to estimate the top-$d_0$ eigenvalues of the context. This can be efficiently done with the following procedure:
\begin{enumerate}[label=(\roman*)]
    \item Train an encoder $\Phi$ whose output dimension is at least $d_0$ to learn the contexture with a random subset of $m$ samples.
    \item Estimate the covariance matrix $\mC_\Phi \in \R^{d \times d} = \Cov_{\px}[\Phi]$ with Monte Carlo.
    \item Estimate $\mB_{\Phi} \in \R^{d \times d}$, where $\mB_{\Phi}[i,j] = \dotp{\tphi_i, \tkx \tphi_j}_{\px}$, with Monte Carlo.
    \item Solve the generalized eigenvalue problem $\mB_\Phi \vv = \lambda \mC_\Phi \vv$. The eigenvalues $\lambda_1 \ge \cdots \ge \lambda_{d_0}$ are estimates of $s_1^2,\cdots,s_{d_0}^2$.
    Moreover, let $\mQ = [\vv_1,\cdots,\vv_d]$ where $(\vv_i)$ are the orthonormal eigenvectors corresponding to $(\lambda_i)$. Let $\Phi^*$ be the normalized version of $\tPhi \mQ$ such that each dimension is scaled to unit variance. Then, $\phi_1^*,\cdots,\phi_{d_0}^*$ are estimates of $\mu_1,\cdots,\mu_{d_0}$.
\end{enumerate}

If we only need to estimate the eigenvalues,
then \cite{shawe2005eigenspectrum} showed that for any fixed $d$, the sum $s_1^2+ \cdots+s_d^2$ can be estimated with low error using $\Theta(d)$ \iid{} samples.
By union bound, all $s_1^2,\cdots,s_{d_0}^2$ can be estimated with low error using $m = \Theta (d_0 \log d_0)$ \iid{} samples.
However, if we want to estimate the eigenfunctions as well, then usually we need to use the entire training set.

Let us demonstrate this method on 3 real datasets from OpenML \cite{OpenML2013}: \texttt{abalone}, \texttt{fifa}, and \texttt{kings\_county}.
We use KNN with $K = 60$ as context, where $\gA = \gX$, and $P^+(x'|x) = K^{-1}$ if $x'$ is a $K$-nearest neighbor of $x$ and $0$ otherwise.
For this context, we can exactly compute $\kx$, and thus we can obtain the exact eigenvalues (ground truth) using kernel PCA.
Meanwhile, we pretrain $\Phi$ with one of the variational objectives using a random subset of $m$ samples, and estimate the eigenvalues using the post-hoc approach.
Then, we compare the estimation with the ground truth.

\begin{figure}[t]
    \centering
    \begin{tikzpicture}

\begin{axis}[
name=plot1,
 height = .26\linewidth,
    width = .28\linewidth,
    axis lines=left,  
    axis line style={-latex},  
    xlabel={$i$},
    ylabel={$s_i^2$},
    ylabel style={
        at={(0.13,0.8)}, 
        anchor=south,  
        rotate=270  
    },
    xlabel style = {
        at={(0.9,0)},
        anchor=south,
    },
title style={at={(0.5,-0.3)}, anchor=north,font=\small},
title={(a) abalone ($n=4177$)},
    xmin=0, xmax=256,
    ymin=0, ymax=1.05,
    xtick distance = 100,
    ytick distance = 0.5,
]
    \addplot[ultra thick,red,dotted] table [
        x=id,  
        y=exact,            
        col sep=comma       
    ] {data/spectrum_abalone.csv};
    \addplot[very thick,orange] table [
        x=id,  
        y=m300,            
        col sep=comma       
    ] {data/spectrum_abalone.csv};
    \addplot[very thick,purple] table [
        x=id,  
        y=m1000,            
        col sep=comma       
    ] {data/spectrum_abalone.csv};
    \addplot[very thick,teal] table [
        x=id,  
        y=m2000,            
        col sep=comma       
    ] {data/spectrum_abalone.csv};
    \addplot[very thick,blue] table [
        x=id,  
        y=mNone,            
        col sep=comma       
    ] {data/spectrum_abalone.csv};
\end{axis}

\begin{axis}[
name=plot2,
at={($(plot1.south east)+(40,0)$)},
 height = .26\linewidth,
    width = .28\linewidth,
    axis lines=left,  
    axis line style={-latex},  
    xlabel={$i$},
    ylabel={$s_i^2$},
    ylabel style={
        at={(0.13,0.8)}, 
        anchor=south,  
        rotate=270  
    },
    xlabel style = {
        at={(0.9,0)},
        anchor=south,
    },
title style={at={(0.5,-0.3)}, anchor=north,font=\small},
title={(b) fifa ($n=19178$)},
    xmin=0, xmax=256,
    ymin=0, ymax=1.05,
    xtick distance = 100,
    ytick distance = 0.5,
legend style = {nodes={scale=0.9, transform shape},at={(1.2,0.5)},anchor=west,inner sep=.5pt},
]
    \addplot[ultra thick,red,dotted] table [
        x=id,  
        y=exact,            
        col sep=comma       
    ] {data/spectrum_fifa.csv};
    \addplot[very thick,orange] table [
        x=id,  
        y=m300,            
        col sep=comma       
    ] {data/spectrum_fifa.csv};
    \addplot[very thick,purple] table [
        x=id,  
        y=m1000,            
        col sep=comma       
    ] {data/spectrum_fifa.csv};
    \addplot[very thick,teal] table [
        x=id,  
        y=m2000,            
        col sep=comma       
    ] {data/spectrum_fifa.csv};
    \addplot[very thick,blue] table [
        x=id,  
        y=mNone,            
        col sep=comma       
    ] {data/spectrum_fifa.csv};
\end{axis}

\begin{axis}[
name=plot3,
at={($(plot2.south east)+(40,0)$)},
 height = .26\linewidth,
    width = .28\linewidth,
    axis lines=left,  
    axis line style={-latex},  
    xlabel={$i$},
    ylabel={$s_i^2$},
    ylabel style={
        at={(0.13,0.8)}, 
        anchor=south,  
        rotate=270  
    },
    xlabel style = {
        at={(0.9,0)},
        anchor=south,
    },
title style={at={(0.5,-0.3)}, anchor=north,font=\small},
title={(c) kings\_county ($n=21613$)},
    xmin=0, xmax=256,
    ymin=0, ymax=1.05,
    xtick distance = 100,
    ytick distance = 0.5,
legend style = {nodes={scale=0.9, transform shape},at={(1.2,0.5)},anchor=west,inner sep=.5pt},
]
    \addplot[ultra thick,red,dotted] table [
        x=id,  
        y=exact,            
        col sep=comma       
    ] {data/spectrum_kings_county.csv};
    \addplot[very thick,orange] table [
        x=id,  
        y=m300,            
        col sep=comma       
    ] {data/spectrum_kings_county.csv};
    \addplot[very thick,purple] table [
        x=id,  
        y=m1000,            
        col sep=comma       
    ] {data/spectrum_kings_county.csv};
    \addplot[very thick,teal] table [
        x=id,  
        y=m2000,            
        col sep=comma       
    ] {data/spectrum_kings_county.csv};
    \addplot[very thick,blue] table [
        x=id,  
        y=mNone,            
        col sep=comma       
    ] {data/spectrum_kings_county.csv};
    \addlegendentry{Ground truth}
    \addlegendentry{$m=300$}
    \addlegendentry{$m=1000$}
    \addlegendentry{$m=2000$}
    \addlegendentry{$m=n$}
\end{axis}
\end{tikzpicture}
    \caption{Estimating the eigenvalues using a random subset of $m$ samples, from $n$ total training samples.}
    \label{fig:spectrum-estimation}
\end{figure}
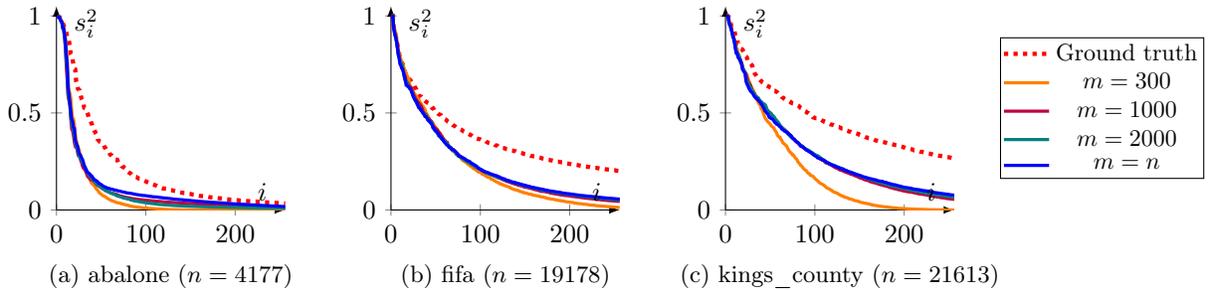

We use a 2-layer wide Tanh-activated neural network with embedding dimension $d=512$ and hidden dimension 20,000 as $\Phi$.
We train the model through non-contrastive learning with the orthonormality constraint implemented by VICReg \cite{bardes2021vicreg}, and AdamW \cite{DBLP:journals/corr/KingmaB14,loshchilov2017decoupled} as the optimizer.
We vary $m$ and compare the estimated top-$d_0$ eigenvalues with the ground truth, where $d_0 = 256$.
The estimated eigenvalues and the ground truth are plotted in \Cref{fig:spectrum-estimation}.
From the plots, we observe that the eigenvalues estimated by our estimation method decay faster than the ground truth, even if the full dataset is used.
We hypothesize that the main reason is that even though we use a very wide neural network, its function class is still a subset of $\lxp$.
Consequently, the inductive bias of the model architecture has an impact on the encoder, and therefore the learned contexture can be viewed as a mixture of the inductive bias and the original KNN context.
This mixture causes the eigenvalues to decay faster, which explains the observation in \Cref{fig:spectrum-estimation}.
Another reason is related to optimization. Since the model is non-convex, gradient methods cannot find the minima of the objective.

The average estimation error of the top-$256$ eigenvalues is reported in \Cref{tab:spectrum-est}.
The error is defined as $\frac{1}{d_0} \sum_{i=1}^{d_0} |\hat{\lambda}_i - s_i^2|$, where $\hat{\lambda}_i$ is the estimated eigenvalue.
The table shows that when $m \in [600,1000] \approx [0.5 d_0 \log d_0, 0.7 d_0 \log d_0]$, the performance is comparable to using the full dataset, which verifies the theoretical result of \cite{shawe2005eigenspectrum}.
The estimation error is not zero even if the full dataset is used due to the aforementioned reasons.
In summary, the post-hoc method can estimate the eigenvalues using a small subset of samples, but the estimated eigenvalues decay faster than the ground truth.

\begin{table}[t]
    \centering
    \begin{tabular}{l|llllll}
    \toprule
    \textbf{Dataset} & $m=100$ & $m=300$ & $m=600$ & $m=1000$ & $m=2000$ & Full dataset \\ 
    \midrule 
    \texttt{abalone} & 0.157 & 0.124 & 0.088 & 0.104 & 0.110 & 0.088 \\ 
    \texttt{fifa} & 0.218 & 0.151 & 0.137 & 0.134 & 0.133 & 0.131 \\ 
    \texttt{kings\_county} & 0.278 & 0.264 & 0.190 & 0.183 & 0.177 & 0.177 \\ 
    \bottomrule
    \end{tabular}
    \caption{Average estimation error of the top-$256$ eigenvalues.}
    \label{tab:spectrum-est}
\end{table}

\section{Scaling Law Experiment Details}
\label{app:scaling-law-experiment}

Here we provide a more detailed description of the experiment setting in \Cref{sec:scaling-law}.

\paragraph{Experiment overview.}
The purpose of this experiment is to examine whether a large neural network can learn the contexture well, and whether scaling up the model size makes the learned representation more aligned to the top-$d$ eigenfunctions.
We compare two encoders.
The first encoder is obtained via kernel PCA on the dual kernel, so it consists of the exact top-$d$ eigenfunctions.
The second encoder is obtained via training a large neural network to optimize an objective that can learn the contexture.
Then, we compute the representational alignment of these two encoders.
The most classical metric is the canonical-correlation analysis (CCA) metric $R_{\textrm{CCA}}^2$, which is invariant under invertible linear transformations to the encoders.
\cite{pmlr-v97-kornblith19a} proposed a variant called linear CKA, which is only invariant under orthogonal transformations.
In our setting, since we only care about the span of $\phi_1,\cdots,\phi_d$, we would like the metric to be invariant under all invertible transformations, which is why we use CCA.
In addition, we also use the mutual KNN metric with 10 neighbors proposed by \cite{pmlr-v235-huh24a}, which measures the intersection over union (IoU) of nearest neighbors between the two representations.
This metric is not invariant under invertible linear transformations,
so we whiten the two representations such that their covariance matrices are both identities.

\paragraph{Setup.}
We use the \texttt{abalone} dataset from OpenML, and split the dataset into a pretrain set, a downstream train set and a downstream test set by 70\%-15\%-15\%.
We use K-nearest neighbors (KNN) with $K = 30$ as the context.
The embedding dimension is set to $d = 128$.
For the second encoder, we train a fully-connected neural network with Tanh activation and skip connections for a sufficient number of steps with full-batch AdamW, and vary the depth and width of the network so that we can study their effect on the alignment.
Here, ``depth'' refers to the number of hidden layers---for example, a 2-layer neural network has depth 1.
For each width and depth, we run the experiments 15 times with different random initializations and report the average alignment.

In our experiments, we observe the \textbf{dimension collapse} problem \cite{jing2022understanding}---if we set the output dimension of the neural network to be $d$, then the rank of the learned representation will usually be less than $d$, meaning that it can only extract the top-$d'$ eigenspace for some $d' < d$.
\cite{jing2022understanding} proved that the training dynamics of self-supervised learning can cause this problem, that is, a large neural network trained with a gradient method cannot find the exact minima, but will find a low-rank solution instead.

To fix this issue, we set the output dimension of the neural network to be $d_1 = 512 > d$.
After we obtain the $d_1$-dimensional encoder, similar to \Cref{app:efficient-estimation} we estimate the matrices $\mC_\Phi$ and $\mB_\Phi$, and solve the generalized eigenvalue problem $\mB_\Phi \vv = \lambda \mC_\Phi \vv$.
Let $\mV = [\vv_1,\cdots,\vv_d] \in \R^{d_1 \times d}$ be the top-$d$ eigenvectors;
then, we use $\tPhi \mV$ as the $d$-dimensional representation.
In other words, we use the 128 principal components of the 512-dimensional embedding.

\section{More on Context Evaluation}
\label{app:context_evaluation}
In this section, we offer guidance for practitioners on identifying contexts with weak or strong associations with inputs. We then show that both excessively weak and overly strong associations degrade downstream performance and demonstrate that the proposed quantitative measurements accurately capture association strength in our controlled experiments. Moreover, we provide full experimental results that complements \Cref{tab:correlation}.
Finally, we provide proofs for the lemmas in Section~\ref{sec:association}.

\subsection{Quantitative measurements for level of association}
\label{app:measurement_for_association}
While mutual information captures mutual dependence between random variables, estimating it from samples remains a long-standing challenge as it requires the joint density function to be known~\cite{paninski2003estimation}. To address this, we propose alternative metrics that are computationally more tractable.

\begin{itemize}
    \item \textbf{Decay rate (all association):} As shown in the top row of \Cref{fig:taud}, the decay rate of singular values $(s_i)_{i\geq 0}$ reflects the strength of association. To estimate the decay rate $\lambda$, we assume the singular values decay exponentially and fit the regression model $s_i^2 = \exp{(-\lambda i)}$. When $\lambda$ is large, it indicates a fast decay rate and context has low association. Conversely, when $\lambda$ is low, it implies a slow decay and highly associated context. 
    
    \item \textbf{Expected kernel deviation (weak association):} Since the kernel values $\ka(a,a')$ can be close to $1$ for the contexts with low association, we propose using the expected absolute deviation from 1, i.e. $\mathbb{E}_{x,x' \sim \px} \left[ |\kx(x,x')-1| \right]$, as a measure to indicate the weak association setting. Given samples $x_1,\cdots, x_n \in \gX$, we use Monte Carlo sampling to approximate it, i.e. 
    \[
    \frac{1}{n^2}\sum_{i=1}^n \sum_{j=1}^n |\kx(x_i,x_j)-1|.
    \]
    
    \item \textbf{Lipschitz constant (strong association):} We empirically measure the Lipschitz constant of $\kx$ for detecting contexts with strong association. Specifically, given samples $x_1,\cdots, x_n \in \gX$, we use the following estimation:
    \[
    L_\kx = \sup_{x, y, z\in \gX} \frac{|\kx(z,x) - \kx(z, y)|}{ ||x-y||_2}
    \approx \max_{ 1 \leq i < j \leq n, 1 \leq k \leq n} \frac{|\kx(x_k,x_i) - \kx(x_k, x_j)|}{ ||x_i-x_j||_2}.
    \]
    
\end{itemize}

We note that the first metric requires estimating singular values $(s_i^2)_{i\geq 0}$ and the last two metrics rely on estimating kernel values $\kx(x,x')$. For estimating singular values, we employ the same technique as the task agnostic metric $\tau$ in \Cref{eqn:metric}. Estimating kernel values, on the other hand, necessitates training an encoder $\Phi$ and approximating $\kx(x,x')$ as $ \Phi(x)^\top \Phi(x')$, which may require a large training set. Thus, decay rate measurements are preferred due to their simpler estimation process.

Additionally, for the non-smooth kernel $\kx$, we have the following lemma showing that we may have the singular functions could be non-smooth and difficult to estimate.
\begin{lemma}[Proof in \Cref{app:proof-lemma-strong_association}]
\label{lemma:strong_association}
Let the Lipschitz constant for the positive pair kernel $\kx$ be $L_\kx = \sup_{x_1,x_2,x_3 \in \gX} \frac{|\kx(x_3,x_1) - \kx(x_3, x_2)|}{ ||x_1-x_2||_2}$ and the maximum Lipschitz constant of its eigenfunctions be $L_\mu = \max_{i \geq 1} \sup_{x_1, x_2 \in \gX} \frac{|\mu_i(x_1)-\mu_i(x_2)|}{||x_1-x_2||_2}$. Assume that all the eigenfunctions $\mu_i$ are bounded by $c$, i.e. $\mu_i(x) \leq c$ for all $i>0$ and $x \in \gX$. Then we have $L_\kx \leq c L_\mu \sum_{i=1} s_i^2$.
\end{lemma}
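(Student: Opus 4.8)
The plan is to start from the spectral decomposition of the dual kernel $\kx$ and differentiate term-by-term. By \Cref{lem:spectral-decomposition} (applied on the $\gX$ side), we have $\kx(x,x') = \sum_{i \ge 0} s_i^2 \mu_i(x) \mu_i(x')$, since $\kx = \tp \tpstar$ has eigenvalues $s_i^2 = \kappa_i$ with eigenfunctions $\mu_i$. First I would write, for any $x_1, x_2, x_3 \in \gX$,
\begin{equation*}
\kx(x_3, x_1) - \kx(x_3, x_2) = \sum_{i \ge 0} s_i^2 \mu_i(x_3) \paren{\mu_i(x_1) - \mu_i(x_2)} .
\end{equation*}
The $i = 0$ term vanishes because $\mu_0 \equiv 1$ is constant, so the sum effectively runs over $i \ge 1$.

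Next I would take absolute values and apply the triangle inequality to the series:
\begin{equation*}
\abs{\kx(x_3,x_1) - \kx(x_3,x_2)} \le \sum_{i \ge 1} s_i^2 \abs{\mu_i(x_3)} \abs{\mu_i(x_1) - \mu_i(x_2)} .
\end{equation*}
Then I would bound each factor: $\abs{\mu_i(x_3)} \le c$ by the boundedness assumption, and $\abs{\mu_i(x_1) - \mu_i(x_2)} \le L_\mu \norm{x_1 - x_2}_2$ by the definition of $L_\mu$ as the maximal Lipschitz constant over all eigenfunctions. Substituting gives
\begin{equation*}
\abs{\kx(x_3,x_1) - \kx(x_3,x_2)} \le c L_\mu \norm{x_1 - x_2}_2 \sum_{i \ge 1} s_i^2 .
\end{equation*}
Dividing by $\norm{x_1 - x_2}_2$ and taking the supremum over $x_1, x_2, x_3$ yields $L_\kx \le c L_\mu \sum_{i \ge 1} s_i^2$, as claimed.

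\textbf{Main obstacle.} The routine steps are the termwise differentiation and the triangle inequality; the one place requiring care is the interchange of the supremum/limit with the infinite sum, i.e. justifying that the series $\sum_i s_i^2 \mu_i(x) \mu_i(x')$ converges (and converges to $\kx$) pointwise and that termwise bounding is legitimate. Since $\tkx$ is assumed Hilbert-Schmidt, $\sum_i s_i^2 < \infty$, and if the eigenfunctions are uniformly bounded by $c$ then the series $\sum_i s_i^2 \abs{\mu_i(x_3)}\abs{\mu_i(x_1)-\mu_i(x_2)}$ is dominated by $c L_\mu \norm{x_1-x_2}_2 \sum_i s_i^2$, which is finite; this uniform domination licenses both the convergence of the Mercer-type expansion and the termwise estimate. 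I expect this to be the only subtlety worth a sentence in the write-up; everything else is immediate.
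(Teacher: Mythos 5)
Your proof is correct and follows the same route as the paper's: spectral-expand $\kx$, cancel the $i=0$ constant mode, apply the triangle inequality, and bound each factor via the uniform bound $c$ and the Lipschitz constant $L_\mu$. Your version is in fact slightly more careful than the paper's write-up (you correctly keep $\abs{\mu_i(x_3)}$ rather than $\mu_i(x_3)$ in the bound, and you flag the uniform domination needed to justify termwise manipulation of the Mercer series), but it is substantively the identical argument.
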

Assume that there exists a universal $c$ that bounds all eigenfunctions. For a highly non-smooth kernel $\kx$ with high Lipschiz constant $L_{\kx}$, the lemma implies that we have either (i) smooth singular functions with large $L_\mu$ and slow decay in singular values with small $\sum_{i=1} s_i^2$, or (ii) non-smooth singular functions with large $L_\mu$ and fast decay in singular values with small $\sum_{i=1} s_i^2$.
For (i), we need a larger $d$ to approximate the kernel well, which leads to a higher downstream sample complexity.
For (ii), the function approximation by neural networks becomes more difficult for non-smooth functions~\cite{yarotsky2018optimal}. 

\subsubsection{Empirical Verification}
\paragraph{Setup.} We provide empirical evidence showing (1) downstream performance is worse for contexts with weak and strong associations, (2) the proposed quantitative measurements are positively correlated with the association level. To control the level of association, we use RBF kernels and KNN. 

For the estimation of kernel value $\kx$, we use $\kx(x,x') = \int \frac{P^+(a|x) P^+(a|x')}{\pa(a)} da$ since $P^+(a|x)$ can be efficiently computed for these contexts. The decay rate is estimated using a non-linear least squares approach to fit the regression model. For computing the expected kernel deviation, we utilize the entire training set. To estimate the Lipschitz constant, we restrict the sample size to $n=1000$ for computational efficiency.
Other experimental setups are the same as in \Cref{sec:empirical_verification}. 

\paragraph{Results.} 
\Cref{fig:association_knn_err_decay_rate} and \Cref{fig:association_rbf_err_decay_rate} illustrate the relationship between association level and both the linear probe error $\err_{d^*}$ and the decay rate $\lambda$ for KNN and RBF contexts, respectively. The results show that $\err_{d^*}$ increases at both extremes, with most blue curves exhibiting a U-shape. This suggests that both weak and strong association levels lead to higher errors. Additionally, the red curve indicates that the decay rate $\lambda$ increases as the association level strengthens, highlighting a strong correlation between association level and spectral decay, which is effectively captured by the estimated decay rate.

We report the relationship between association level and both the expected kernel deviation and the Lipschitz constant $L_{\kx}$ in \Cref{fig:association_knn_lipschitz_kernel_deviation} for the KNN context and \Cref{fig:association_rbf_lipschitz_kernel_deviation} for the RBF context. The results show that contexts with low association exhibit small expected kernel deviations, while those with high association have large Lipschitz constants. These findings align with our theoretical developments in \Cref{sec:association}.

\subsection{Proof of Lemma \ref{lemma:weak_association}}
\label{app:proof-lemma-weak_association}

\begin{proof}
Define $\kx'(x,x') = \kx(x,x')-1 = \sum_{i>0} s_i^2 \mu_i(x)\mu_i(x')$ that is the positive pair kernel without the trivial mode $(s_0,\mu_0)$, where the equality follows the definition of $T_{\kx}$. We also denote the corresponding kernel integral operator as $(T_{\kx'}g)(x) = \int g(x')\kx'(x,x') d\px(x')$. Then we have 
\begin{align*}
\sum_{i>0} s_i^2 
= \Tr(T_{\kx'})  
= \int \kx'(x,x') d \px (x') 
< \int \epsilon d \px (x') 
= \epsilon.
\end{align*}
\end{proof}

\subsection{Proof of Lemma \ref{lemma:strong_association}}
\label{app:proof-lemma-strong_association}
\begin{proof}

\begin{align*}
L_\kx
& = \sup_{x_1,x_2,x_3 \in \gX} \frac{|\kx(x_3,x_1) - \kx(x_3, x_2)|}{ \norm{x_1-x_2}_2} \\
& = \sup_{x_1,x_2,x_3 \in \gX} \frac{|\sum_{i \geq 0} s_i^2 \mu_i(x_3)\mu_i(x_1) - \sum_{i \geq 0} s_i^2 \mu_i(x_3)\mu_i(x_2)|}{ \norm{x_1-x_2}_2} \\
& = \sup_{x_1,x_2,x_3 \in \gX} \frac{|\sum_{i \geq 0} s_i^2 \mu_i(x_3) (\mu_i(x_1) - \mu_i(x_2)|}{ \norm{x_1-x_2}_2} \\
& = \sup_{x_1,x_2,x_3 \in \gX} \frac{|\sum_{i > 0} s_i^2 \mu_i(x_3) (\mu_i(x_1) - \mu_i(x_2)|}{ \norm{x_1-x_2}_2} \ \ (\mu_0 \equiv 1) \\
& \leq \sup_{x_1,x_2,x_3 \in \gX} \frac{\sum_{i > 0} s_i^2 \mu_i(x_3) |\mu_i(x_1) - \mu_i(x_2)|}{ \norm{x_1-x_2}_2} \\
& \leq \sup_{x_3 \in \gX} \sum_{i > 0} s_i^2 \mu_i(x_3) \sup_{x_1,x_2 \in \gX} \frac{ |\mu_i(x_1) - \mu_i(x_2)|}{ \norm{x_1-x_2}_2}  \\
& \leq \sup_{x_3 \in \gX} \sum_{i > 0} s_i^2 \mu_i(x_3) L_\mu  \\
& \leq \sup_{x_3 \in \gX} c L_\mu \sum_{i > 0} s_i^2.
\end{align*}

\end{proof}

\begin{figure}[!t]
    \centering
    \includegraphics[width=\textwidth]{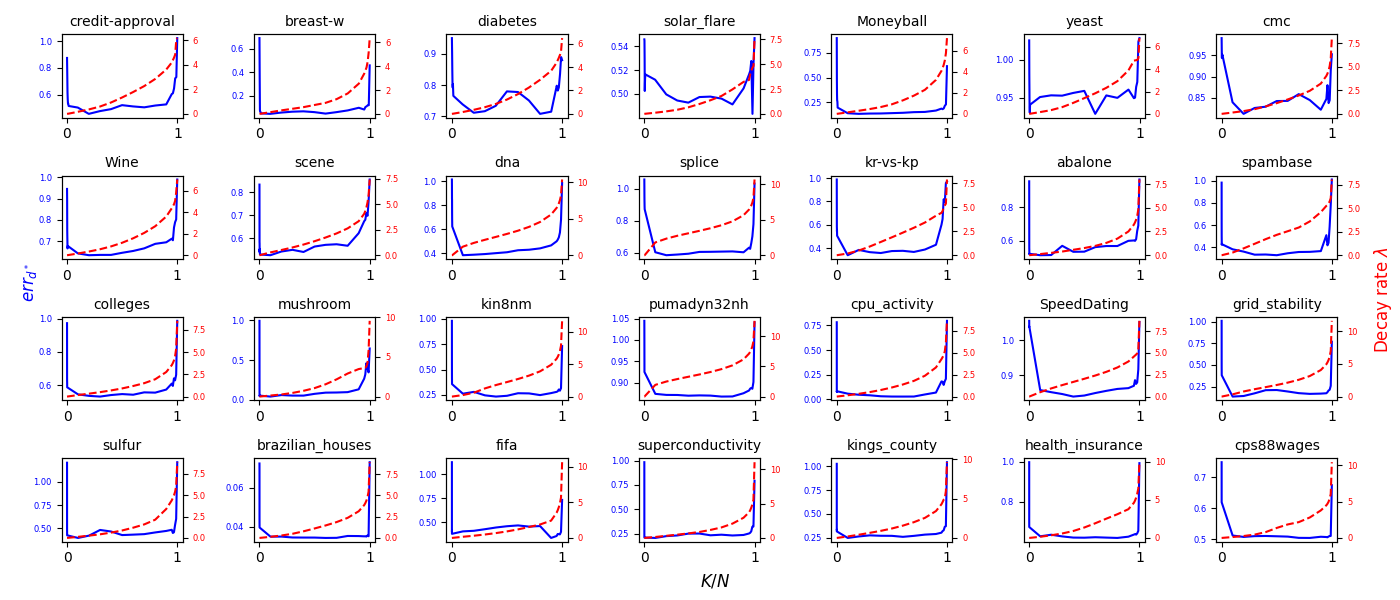}
    \caption{Association level vs $\err_{d^*}$ and the decay rate $\lambda$ for the KNN context on the 28 datasets. A larger $K/N$ indicates a lower association level, while a smaller $K/N$ corresponds to a higher association level. }
    \label{fig:association_knn_err_decay_rate}
\end{figure}

\begin{figure}[!t]
    \centering
    \includegraphics[width=\textwidth]{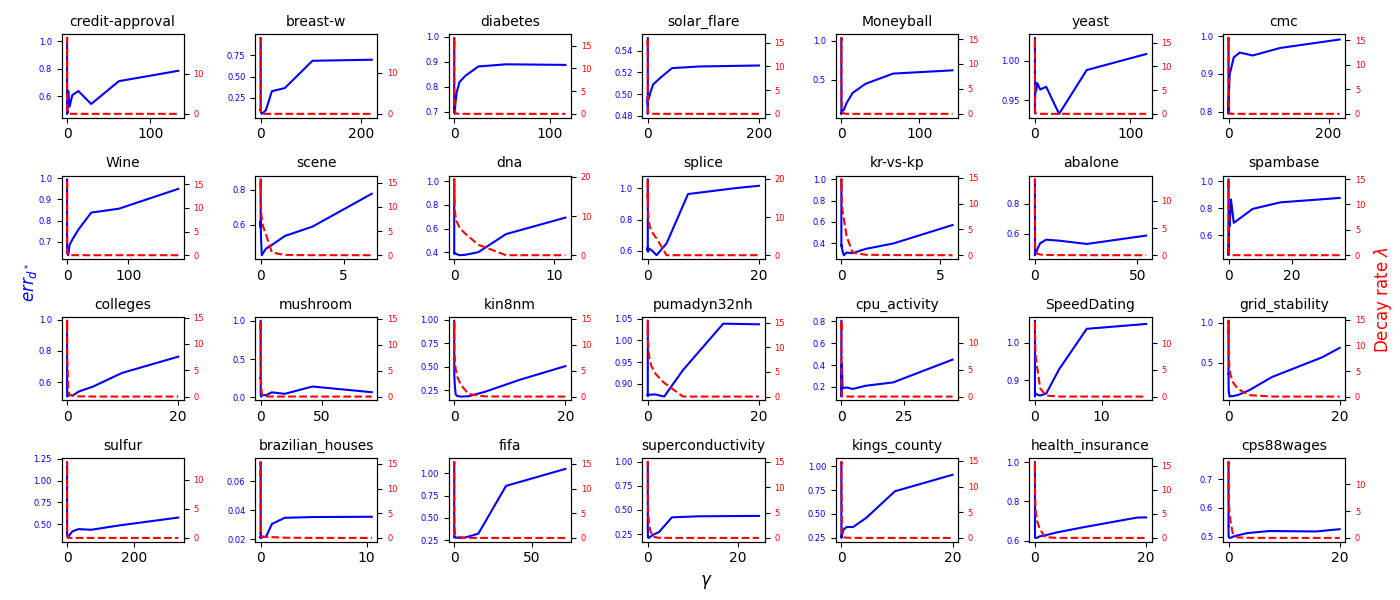}
    \caption{Association level vs $\err_{d^*}$ and the decay rate $\lambda$ for the RBF context on the 28 datasets. A larger $\gamma$ indicates a higher association level, while a smaller $\gamma$ corresponds to a higher association level.}
    \label{fig:association_rbf_err_decay_rate}
\end{figure}

\begin{figure}[!t]
    \centering
    \includegraphics[width=\textwidth]{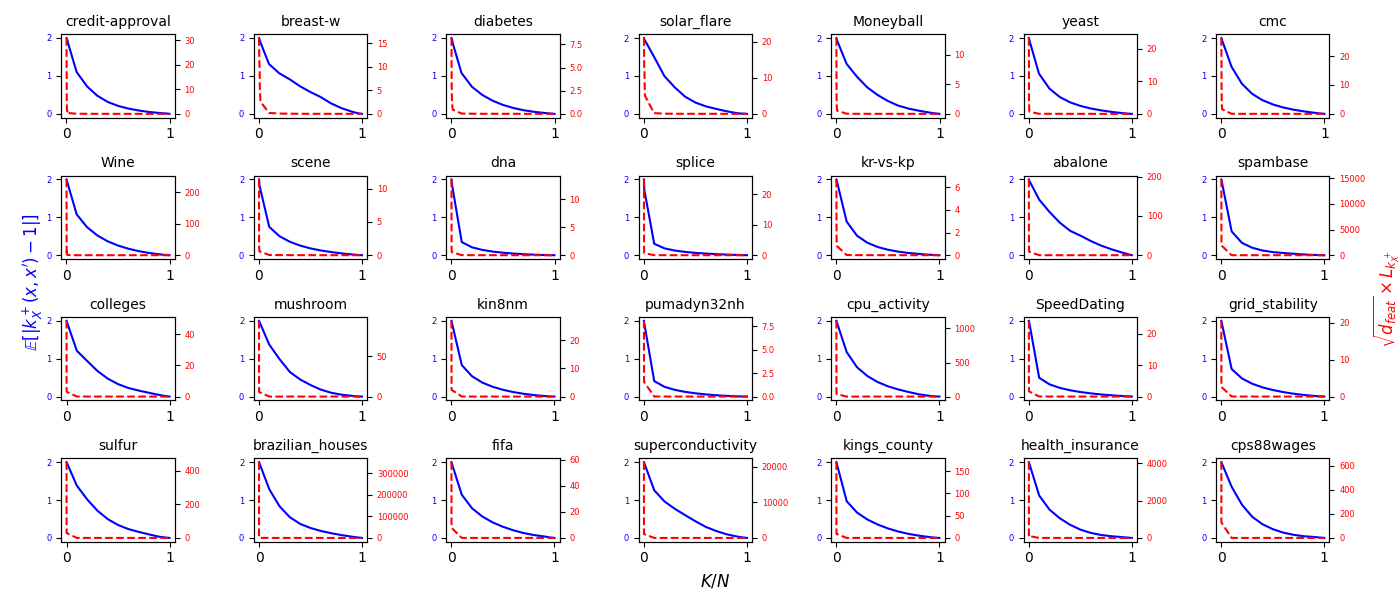}
    \caption{Association level vs the kernel deviation $\mathbb{E}_{x,x' \sim \px}|\kx (x,x')-1|$ and the Lipschitz constant $L_{\kx}$ for the KNN context on the 28 datasets. A larger $K/N$ indicates a lower association level, while a smaller $K/N$ corresponds to a higher association level. We multiply $L_{\kx}$ by the input feature dimension $d_{feat}$ to normalize the $L_2$ distance in the denominator.  }
    \label{fig:association_knn_lipschitz_kernel_deviation}
\end{figure}

\begin{figure}[!t]
    \centering
    \includegraphics[width=\textwidth]{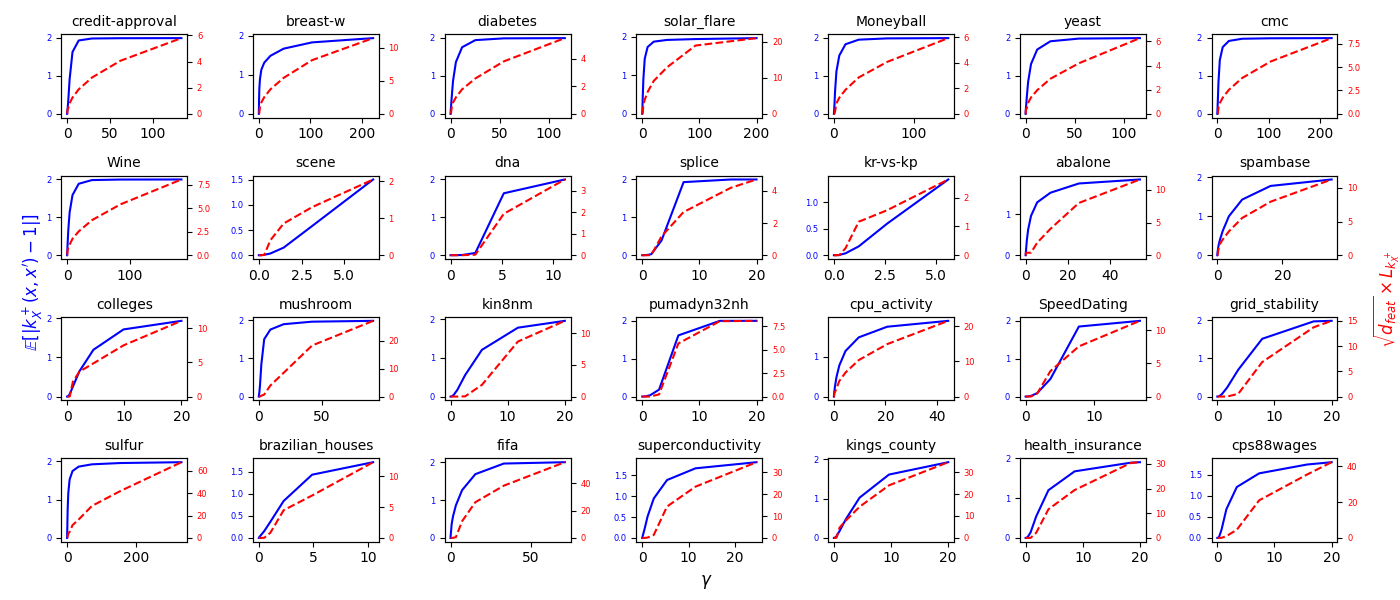}
    \caption{Association level vs the kernel deviation $\mathbb{E}_{x,x' \sim \px}|\kx (x,x')-1|$ and the Lipschitz constant $L_{\kx}$ for the RBF context on the 28 datasets.  A larger $\gamma$ indicates a higher association level, while a smaller $\gamma$ corresponds to a higher association level. We multiply $L_{\kx}$ by the input feature dimension $d_{feat}$ to normalize the $L_2$ distance in the denominator.}
    \label{fig:association_rbf_lipschitz_kernel_deviation}
\end{figure}

\end{document}